\newcommand{\bm}[1]{\boldsymbol{#1}}
\newcommand{\dd}{\mathrm{d}}
\newcommand{\E}{\mathbb{E}}
\newcommand{\p}{\mathbb{P}}
\newcommand{\q}{\mathbb{Q}}
\newcommand{\f}{\mathcal{F}}
\newtheorem{assumption}{Assumption}
\newtheorem{eg}{Example}
\newtheorem{proposition}{Proposition}
\newtheorem{theorem}{Theorem}
\newtheorem{lemma}{Lemma}
\newtheorem{definition}{Definition}
\begin{document}
\pagenumbering{arabic}
\providecommand{\keywords}[1]
{
\small	
\textbf{\textit{Keywords:}} #1
}
\providecommand{\jel}[1]
{
\small	
\textbf{\textit{JEL Classification:}} #1
}

\title{Continuous-time Risk-sensitive Reinforcement Learning via Quadratic Variation Penalty}

\author{Yanwei Jia\thanks{Department of Systems Engineering and Engineering Management, The Chinese University of Hong Kong, Shatin, New Territories, Hong Kong. Email: yanweijia@cuhk.edu.hk.} }

\maketitle
\begin{abstract}
\singlespacing
This paper studies continuous-time risk-sensitive reinforcement learning (RL) under the entropy-regularized, exploratory diffusion process formulation with the exponential-form objective. The risk-sensitive objective arises either as the agent's risk attitude or as a distributionally robust approach against the model uncertainty. Owing to the martingale perspective in Jia and Zhou (J Mach Learn Res 24(161):1–61, 2023), the risk-sensitive RL problem is shown to be equivalent to ensuring the martingale property of a process involving both the value function and the q-function, augmented by an additional penalty term: the quadratic variation of the value process, capturing the variability of the value-to-go along the trajectory. This characterization allows for the straightforward adaptation of existing RL algorithms developed for non-risk-sensitive scenarios to incorporate risk sensitivity by adding the realized variance of the value process. Additionally, I highlight that the conventional policy gradient representation is inadequate for risk-sensitive problems due to the nonlinear nature of quadratic variation; however, q-learning offers a solution and extends to infinite horizon settings. Finally, I prove the convergence of the proposed algorithm for Merton's investment problem and quantify the impact of temperature parameter on the behavior of the learning procedure. I also conduct simulation experiments to demonstrate how risk-sensitive RL improves the finite-sample performance in the linear-quadratic control problem. 
\end{abstract}

\keywords{risk-sensitive control, continuous-time reinforcement learning, exponential martingale, quadratic variation penalty, q-learning}

\section{Introduction}
\label{sec1}
Continuous-time reinforcement learning (RL) has drawn considerable attention owing to its practical significance in modeling systems that necessitate or benefit from high-frequency or real-time interaction with the environment, such as in financial trading, real-time response systems, and robotics. Furthermore, its theoretical development serves to bridge the gap between the conventional stochastic control and the discrete-time, Markov decision process (MDP)-based RL theory.

Existing literature on RL largely focuses on optimizing the expectation of a flow of time-additive rewards. Here, the instantaneous reward can be interpreted as a utility within the time-additive von Neumann–Morgenstern expected utility framework. Though utility functions can reflect the agent's attitude towards risk in a one-off setting, economic literature has long recognized that such an additive form inadequately captures intertemporal preferences on the uncertainty \citep{epstein1989substitution}. Moreover, empirically measuring and incorporating an agent's risk preference via a simple additive utility function is notoriously difficult.  

An alternative approach is the risk-sensitive control (which can be traced back to \citep{jacobson1973optimal}), and it later becomes popular, particularly in financial asset management, e.g., in \citet{bielecki1999risk}. In contrast to solely considering the expectation, risk-sensitive objective accounts for the whole distribution of the accumulated reward. 
Moreover, the risk-sensitive objective function in the exponential form is well-known to be closely related to the robustness within a family of distributions measured by the Kullback–Leibler (KL) divergence, which is also known as the robust control problems \citep{hansen2001robust}. Such uncertainty on the distribution of a random variable (not just its realization whose uncertainty can be statistically quantified) is often regarded as the Knightian uncertainty or ambiguity, which often occurs due to the lack of knowledge or historical data \citep{leroy1987knight}. From this perspective, an RL agent often encounters the similar situation, in which the agent lacks information about the environment, and hence, has difficulty formulating a probabilistic model to quantify the associated risk. Therefore, it is natural to consider the risk sensitivity in the RL setting. 

In this paper, I consider the risk-sensitive objective in the exponential form that is used in \citet{bielecki1999risk} and study this problem from RL perspective, i.e., in a data-driven and model-free (up to a controlled diffusion process) approach. Specifically, I adopt the entropy-regularized continuous-time RL framework proposed in \citet{wang2020reinforcement} and aim to find a stochastic policy that maximizes the entropy-regularized risk-sensitive objective. It is noticeable that, in this paper, the entropy regularization term is added inside the exponential form. This is motivated by regarding the regularizer as an extra source of reward for exploration, and hence, it should be treated similarly as the reward. Such choice is also made in \cite{enders2024risk}, who numerically document the improvement over the counterpart without entropy regularization. However, the benefits of having entropy regularization have not been theoretically investigated, even for simple cases. 

The primary contribution of this paper lies in the establishment of the q-learning theory for the continuous-time risk-sensitive RL problems. The conventional Q-function for the discrete-time, MDP-based risk-sensitive RL often relies on nonlinear recursive relations, such as ``exponential Bellman equation'' \citep{fei2021exponential} or the ``distributional robust Bellman equation'' \citep{wang2023finite}. However, in continuous-time scenarios, the complicated structure gets simplified and clarified. This simplification stems from two key observations: firstly, a risk-sensitive control problem can be equivalently transformed into its non-risk-sensitive counterpart augmented with a quadratic variation (QV) penalty \citep{skiadas2003robust}; secondly, the conventional Q-function has to be properly decomposed and rescaled into the value function and q-function in continuous time \citep{jia2022q}, and the latter does not have nonlinear effect. In particular, the definition and characterization of the risk-sensitive q-function is almost parallel to that for the non-risk-sensitive q-function established in \citet{jia2022q}, and they only differ by an extra term involving the QV of the value-to-go process (the value function applying on the state variables). Consequently, explicit computation of exponential forms becomes unnecessary, and the derived martingale condition is linear in the risk-sensitive q-function, albeit still nonlinear in the value function. This linearity facilitates the application of algorithms aimed at enforcing the martingale property of a process, as discussed in \citet{jia2022policy}, to risk-sensitive RL problems.

As a side product, I show that in the risk-sensitive problems, the relation between q-learning and policy gradient established in \cite{schulman2017equivalence} and \cite{jia2022q} for the non-risk-sensitive RL does not hold anymore. The reason why this relation fails is because the associated Bellman-type (Feynman-Kac-type) equation is no longer linear in the value function \citep{nagai1996bellman}. This observation adds to one of the advantages of q-learning over policy gradient methods. In addition, I show how the risk-sensitive q-learning theory for finite-horizon episodic tasks can be extended to infinite-horizon ergodic tasks, thereby encompassing the original problem formulation in \citet{bielecki1999risk}. 

The second contribution is to analyze the proposed RL algorithm in Merton's investment problem with power utility, which can be viewed as the risk-sensitive objective of the log-return of the portfolio. This problem has been solved only in theory, in which the model for stock price is given and known. By contrast, I study how an agent can learn the policy with no prior knowledge. Specifically, I investigate the role of the temperature parameter in the entropy-regularized RL.  While many have suggested intuitively that the temperature parameter governs the tradeoff between exploitation and exploration, its algorithmic impact has yet to be formally studied. To the author's best knowledge, there are no clear guidelines on how to choose the temperature parameter endogenously, except for some theoretical results on the convergence or asymptotic expansion of the exploratory problem to the classical problem in the non-risk-sensitive setting, e.g., \cite{wang2020reinforcement,tang2021exploratory,dai2023recursive}. These papers all consider the difference between the total reward under the optimal deterministic/stochastic policy by ignoring the estimation error of optimal policy incurred in a specific learning procedure. Therefore, from their analysis, it seems that randomization and entropy regularization always cause inefficiency but have no clear benefits.

I highlight that the virtue of entropy regularization lies in the algorithmic aspect via boosting the estimation accuracy of the optimal policy. Within the framework of stochastic approximation algorithms, this study reveals that the temperature parameter functions analogously to the learning rate: Higher temperatures correspond to faster learning because data collected spans larger space and contains more information (interpreted as exploration) but also entail higher noise in the data and lower expected reward (due to a lack of exploitation). The convergence and convergence rate are determined by the combined schedule of the step size and the temperature parameter in each iteration. I give two possible conditions to ensure the proposed learning algorithm converges at the optimal rate that matches the state-of-the-art in terms of the number of episodes.

Furthermore, I conduct another numerical study to examine the choice of the risk sensitivity coefficient when the sample size is finite in a off-policy learning task for the linear-quadratic problem (with both drift and volatility control). The results confirm the intuition about the inherent robustness of the risk-sensitive RL. It is demonstrated that employing risk-sensitive RL with finite datasets yields reduced estimation errors for the optimal policy compared to non-risk-sensitive approaches. Moreover, the optimal choice of the risk sensitivity coefficient diminishes as the sample size increases, suggesting a delicate relationship between risk sensitivity and data availability.
\subsection*{Related literature}

Previous literature on risk-sensitive control problems in continuous time studies their theoretical properties and economic implications, for example, the well-posedness \citep{fleming1995risk,nagai1996bellman,dupuis2000robust,fleming2002risk,menaldi2005remarks}, and how risk sensitivity explains the equity premium puzzle \citep{maenhout2004robust,glasserman2013robust}. However, in the conventional paradigm, agents have risk-sensitive objectives, but they know a benchmark distribution (or a \textit{model}) and also know the forms of the reward functions. RL perspective contrasts the conventional settings in not knowing a benchmark distribution (i.e., being \textit{model-free}), and even not the forms of reward functions. The important observation on the link between a risk-sensitive problem and the QV penalty is made by \citet{skiadas2003robust}. Thus, a risk-sensitive RL problem can also be transformed to an ordinary, non-risk-sensitive RL problem plus an extra QV penalty on the value function along the trajectory. Therefore, it becomes a recursive utility maximization problem \citep{duffie1992stochastic}, and hence, the martingale optimality principle still applies to characterize the optimal policy.

The continuous-time RL framework adopted in this paper is developed\footnote{It is worthwhile pointing out that prior literature on continuous-time RL has been largely restricted to deterministic system, such as \cite{doya2000reinforcement,lee2021policy,kim2021hamilton} and many others. To the author's knowledge, \cite{wang2020reinforcement} is the first paper to approach stochastic control problems from an RL perspective rather than a computational method.} in \citet{wang2020reinforcement}, who introduce stochastic policies and entropy regularization to the standard stochastic control problems to formulate RL in continuous-time, and to embed RL into relaxed control problems for theoretical analysis. Moving from the theoretical framework to general principles to devise data-driven RL algorithms, \citet{jia2022policy} propose the unifying martingale perspective in the policy evaluation stage and suggest the loss function and orthogonality conditions to approximate the martingale properties. \citet{jia2022policypg} further derive the representation for the policy gradient for stochastic policies. Moreover, \citet{jia2022q} clarify the notion of ``q-function'' in the continuous-time system, and reveal the relation between the q-function in RL and Hamiltonian in control literature. In particular, \citet{jia2022q} show that the q-function and the value function can be learned jointly via martingale conditions, thus, the principles proposed in \cite{jia2022policy} can be applied to devise q-learning algorithms. Based on this framework, there have been extensive studies and applications, such as \cite{GuoXZ2020,frikha2023actor,wei2023continuous} for mean-field interactions, \cite{dai2023learning} for time-inconsistency, \cite{wang2020continuous,szpruch2022optimal} for linear-quadratic controls, and \cite{wang2023reinforcement} for optimal execution. However, all the above has been done only for the ordinary, non-risk-sensitive RL. This paper focuses on the risk-sensitive objective, to see how the martingale perspective can be adopted and applied here and how to design algorithms based on it.


The discrete-time counterparts (by replacing integrals with sums, and replacing controlled diffusion processes by MDPs, but without entropy regularization) are also popular in discrete-time robust RL, such as \cite{borkar2002q,fei2020risk,fei2021exponential,wang2023finite,wenhaoregret}. On the other hand, the entropy-regularized RL, such as soft-Q-learning \citep{schulman2017equivalence} and soft-actor-critic \citep{haarnoja2018soft}, and also has been integrated to continuous-time RL by \cite{wang2020reinforcement,GuoXZ2020,jia2022policypg,jia2022q}, and many others. The combination of risk-sensitive RL and entropy regularization has been adopted in discrete-time setting in \cite{enders2024risk}, but is novel to continuous-time setting. In the discrete-time, the analysis has been largely restricted to tabular cases \citep{borkar2002q,fei2020risk,fei2021exponential,wang2023finite}, and their approaches are via ``exponential Bellman equation'' or ``distributional robust Bellman equation'', both of which involve nonlinear recursive relation. The best regret bound in terms of the number of episodes is at the square-root order in general, ignoring the logarithm factor. But these works do not include entropy regularizations. By considering the problem in continuous-time, the resulting martingale condition is still nonlinear in the value function while becomes linear in q-function. Numerically, it avoids computation of exponential forms and reduces the sampling noises. Intuitively speaking, this simplification is due to the validity of the normal approximation for the state transition. In the Merton's investment problem, the same convergence rate in terms of the number of episodes, matching the state-of-the-art, can also be proved when the underlying stock price follows the geometric Brownian motion.
	
The other example considered in this paper is the off-policy linear-quadratic control. The linear-quadratic control problems have been widely studied in the RL context, in both discrete-time (e.g., \cite{hambly2021policy,fazel2018global,wang2021global,abbasi2011regret}) and continuous-time (e.g., \cite{basei2020logarithmic,guo2023reinforcement,giegrich2024convergence,szpruch2022optimal}). All these literature studies non-risk-sensitive problems. The only exception is \cite{wenhaoregret}, who also study risk-sensitive problem but focus on online, on-policy learning by developing a model-based, least-square-type algorithm. The example considered in this paper has three distinctions compared to previous literature: first, the volatility of the noise in the system is controlled; second, it is the first paper considering the continuous-time risk-sensitive RL, and an entropy regularizer is included in the objective function; third, the learning is conducted off-policy. The formulation and the risk-sensitive q-learning theory introduced in this paper gives full flexibility and account for all these unconventional features compared to what has been studied in the literature, and provides the foundation for devising model-free learning algorithms.

\subsection*{Structure of the paper}
The rest of the paper is organized as follows. Section \ref{sec:setup} describes the problem setup and motivates this formulation. Section \ref{sec:risk sensitive q learning} is devoted to introducing the definition of q-function for risk-sensitive problems and showing the martingale characterization of the optimal q-function and value function. The difference between q-learning and policy gradient, and the extension to ergodic problems are presented in Section \ref{sec:extensions}. I demonstrate the performance of the risk-sensitive q-learning algorithm on two applications in Section \ref{sec:numerical}. In particular, the theoretical guarantee for Merton's investment problem is shown in Section \ref{subsec:meton convergence results}. Finally, Section \ref{sec:conclusion} concludes. All proofs are in the Appendix.  
\section{Problem Formulation}
\label{sec:setup}

Throughout this paper, by convention all vectors are {\it column} vectors unless otherwise specified, and $\mathbb{R}^k$ is the space of all $k$-dimensional vectors (hence $k\times 1$ matrices). 
Given two matrices $A$ and $B$ of the same size, denote by $A \circ B$ their inner product, by $|A|$ the Eculidean/Frobenius norm of $A$, 
by $A^\top$ the $A$'s transpose. 
I denote by $\mathcal{N}(\mu,\sigma^2)$ the probability density function of the multivariate normal distribution with mean vector $\mu$ and covariance matrix $\sigma^2$. Finally,
for any stochastic process $X=\{X_{s},$ $s\geq 0\}$, I denote by $\{\f^X_s\}_{s\geq 0}$ the natural filtration generated by $X$ and by $\langle X \rangle(t)$ its \textit{quadratic variation} (QV) between $[0,t]$. I use a bold-faced letter $\bm\pi$ to denote a \textit{policy}, and a plain letter $\pi \approx 3.14$ to denote the mathematical constant.

\subsection{Classical model-based formulation}
\label{sec:classical formulation}
For readers' convenience, I first review the classical, model-based risk-sensitive stochastic control formulation.

Let $d,n$ be given positive integers, $T>0$, 
and $b: [0,T]\times \mathbb{R}^d\times \mathcal{A} \mapsto \mathbb{R}^d$ and $\sigma:
[0,T]\times \mathbb{R}^d\times \mathcal{A}\mapsto \mathbb{R}^{d\times n}$ be given functions, where $\mathcal{A}\subset\mathbb{R}^m$ is the action/control set. The classical stochastic control problem is to control the {\it state} (or {\it feature}) dynamics governed by  a stochastic differential equation (SDE), defined on a filtered probability space $\left( \Omega ,\mathcal{F},\mathbb{P}^W; \{\mathcal{F}_s^W\}_{s\geq0}\right) $ along with a standard  
$n$-dimensional Brownian motion  $W=\{W_{s},$ $s\geq 0\}$:
\begin{equation}
	\label{eq:model classical}
	\dd X_s^{\bm{a}} = b(s,X_s^{\bm{a}},\bm a_s)\dd s + \sigma(s,X_s^{\bm{a}},\bm a_s) \dd W_s,\
	s\in [0,T],
\end{equation}
where $\bm a_s$ stands for the agent's action at time $s$. The agent's total reward is $\int_0^T r(s, X_s^{\bm a}, a_s)\dd s + h(X_T^{\bm a})$, where $r:[0,T]\times \mathbb{R}^d \times \mathcal{A} \to \mathbb{R}$ is the expected instantaneous rate of the reward, and $h: \mathbb{R}^d \to \mathbb{R}^n$ is the lump sum reward at the end of the planning period.

I make the standard assumptions in the ordinary, non-risk-sensitive stochastic control problems \citep{YZbook}. Note that these conditions are regularity conditions and growth conditions imposed on the state process and the reward function, and cannot be verified without knowing the model. Moreover, they alone do not entail the well-posedness of the problem, and have to be combined with suitably defined control (policy). 
\begin{assumption}
	\label{ass:dynamic}
	The following conditions for the state dynamics and reward functions hold true:
	
	\begin{enumerate}
		\item[(i)] $b,\sigma,r,h$ are all continuous functions in their respective arguments;
		\item[(ii)] $b,\sigma$ are uniformly Lipschitz continuous in $x$, i.e., for $\varphi \in\{ b,\ \sigma\}$, there exists a constant $C>0$ such that
		\[ |\varphi(t,x,a) - \varphi(t,x',a)| \leq C|x-x'|,\;\;\forall (t,a)\in [0,T] \times \mathcal{A},\;\forall x,x'\in \mathbb{R}^d; \]
		\item[(iii)] $b,\sigma$ have linear growth in $x$, i.e., for $\varphi \in\{ b,\ \sigma\}$, there exists a constant $C>0$ such that
		\[|\varphi(t,x,a)| \leq C(1+|x|) ,\;\;\forall (t,x,a)\in [0,T] \times \mathbb{R}^d\times \mathcal{A};\]
		\item[(iv)] $r$ and $h$ have polynomial growth  in $(x,a)$ and $x$ respectively,  i.e., 
		there exist constants $C>0$ and $\mu\geq 1$ such that
		\[
		|r(t,x,a)| \leq C(1+|x|^{\mu} + |a|^{\mu}) ,\;\;|h(x)| \leq C(1+|x|^{\mu}),\; \forall (t,x,a)\in [0,T] \times \mathbb{R}^d \times \mathcal{A}.\]
	\end{enumerate}
\end{assumption}

The risk-sensitive objective studied in literature\footnote{The long-run average of version of it will be discussed later in Section \ref{subsec:ergodic}.} (e.g., \cite{bielecki1999risk,davis2014risk}, and many others) refers to 
\begin{equation}
	\label{eq:objective}
	\frac{1}{\epsilon}\log\E^{\p^W}\left[ e^{\epsilon \left[\int_0^T r(s,X_s^{\bm{a}},\bm a_s)\dd s + h(X_T^{\bm a}) \right]} \right],
\end{equation}
for some \textit{risk sensitivity coefficient} $\epsilon\neq 0$. 

There are several motivations for this objective function and explanations for why \eqref{eq:objective} reflects the sensitivity of risk. First of all, the risk sensitivity is in contrast to simply considering the expectation of the total reward $\E\left[ \int_0^T r(s, X_s^{\bm a}, a_s)\dd s + h(X_T^{\bm a}) \right]$, and \eqref{eq:objective} is the cumulant-generating function divided by the risk sensitivity coefficient, which reflects the properties of the full distribution of the total reward, rather than simply its mean. In a special case, it is well-recognized that when $\epsilon\to 0$, \eqref{eq:objective} can be expanded as 
\begin{equation}
	\label{eq:risk sensitive expansion}
	\begin{aligned}
		& \frac{1}{\epsilon}\log\E^{\p^W}\left[ e^{\epsilon \left[\int_0^T r(s,X_s^{\bm{a}},\bm a_s)\dd s + h(X_T^{\bm a}) \right]} \right] \\
		\approx & \E^{\p^W}\left[ \int_0^T r(s,X_s^{\bm{a}},\bm a_s)\dd s + h(X_T^{\bm a})\right] + \frac{\epsilon}{2}\operatorname{Var}^{\p^W}\left[\int_0^T r(s,X_s^{\bm{a}},\bm a_s)\dd s + h(X_T^{\bm a}) \right]  .
	\end{aligned}
\end{equation}
Hence, when $\epsilon < (>) 0$, it penalizes (incentivizes) the variability of the total reward. 

Second, the form of the objective function \eqref{eq:objective} can be viewed as applying an exponential utility function on the total reward, where $-\epsilon$ measures the absolute risk aversion. Whereas, the risk sensitivity should not be confused with the commonly used notion of risk-aversion in the additive utility functions in economics literature, in which the risk-aversion may be represented via the concavity of the function $r$ and $h$.

Third, the risk sensitivity also stems from the ambiguity and robustness in the sense of \cite{hansen2011robustness}, or sometimes called the control under model misspecification \citep{hansen2001robust} or robust control \citep{dupuis2000robust}. From this perspective, the agent is still maximizing the expected total reward whereas the agent is uncertain about the true distribution, hence the agent forms ``multi-prior'' in the sense of \cite{gilboa1989maxmin} and considers the worse distribution:
\begin{equation}
	\label{eq:dro kl}
	\min_{\q \in \mathcal{Q}}\E^{\q}\left[ \int_0^T r(s, X_s^{\bm a}, a_s)\dd s + h(X_T^{\bm a}) \right] ,
\end{equation}
where $\mathcal Q$ is a collection of probability distributions specified by the agent. When it is specified as 
\[ \mathcal Q = \{ \q: D_{KL}(\q, \p^W) \leq \delta \}, \]
where $D_{KL}(\q,\p^W)$ is the Kullback–Leibler divergence from $\p^W$ to $\q$, defined as $\int \log\frac{\dd \q}{\dd \p^W} \dd \q$, and $\delta > 0$ is the radius of this set, then \eqref{eq:dro kl} can be reformulated as \eqref{eq:objective} due to the well-known Donsker and Varadhan's variational formula \citep{donsker1983asymptotic}, for suitable $\epsilon \leq 0$, which is the Lagrange multiplier to relax the constraint of $\q\in \mathcal Q$.

From the above justification, note that in \eqref{eq:objective}, $\epsilon < 0$ reflects the extra uncertainty aversion and may be plausible in most applications, however, in this paper, there is no such restriction. Admittedly, there are other forms of risk-aware objective, e.g., CVaR \citep{chow2015risk}, certainty-equivalency \citep{xu2023regret}, general utility functions \citep{wu2023risk}, and many others, I would like to focus on solving the problem \eqref{eq:objective}, given its rich interpretation from various aspects mentioned above, as well as its tractability that will be introduced as follows. Another important issue that I do not include in my discussion is the determination of the risk sensitivity coefficient $\epsilon$. On one hand, the risk sensitivity coefficient can be interpreted as part of the agent's preference and should be given exogenously. On the other hand, the formulation under the robust control framework like \eqref{eq:dro kl} regards $\epsilon$ as the Lagrange multiplier that needs to be determined endogenously (e.g., the approach in \cite{wang2023finite}), while such formulation requires an exogeneously specified uncertainty set characterized by an additional coefficient $\delta$. To the author's best knowledge, the data-driven way to determine $\epsilon$ and/or $\delta$ has only been studied in special, static setting, e.g., \citet{blanchet2022distributionally}, and remains an open question in general, dynamic settings.

The classical model-based problem is to find the optimal control (policy) to maximize \eqref{eq:objective} subject to the state dynamics \eqref{eq:model classical}, with the knowledge of the forms of functions $b,\sigma,r,h$, for a given risk sensitivity coefficient $\epsilon\neq 0$. To solve such problems, there are many standard approaches, and the predominant one is via dynamic programming to deduce Hamilton-Jacobi-Bellman (HJB)-type of equations.\footnote{They are slightly different from the standard HJB equations in the ordinary stochastic control problems because the objective function is not additive in the risk-sensitive control, but their forms are similar, see, e.g., \cite{nagai1996bellman}.} I will not repeat on these approaches but present an alternative martingale characterization of the optimal control and the optimal value function to problem \eqref{eq:objective} to conclude this subsection. 

\begin{lemma}
	\label{lemma:classical martingale}
	Suppose that there is a continuous function $V^*(t,x;\epsilon)$, satisfying $V^*(T,x;\epsilon) = h(x)$ and $\E\left[ e^{\frac{\epsilon^2}{2} \langle {V^*}^{\bm a}\rangle(T) }  \right] < \infty$ for any admissible $\bm a$. If there exists a control $\bm a^* $, such that for any initial condition $(t,x)$,
	\begin{enumerate}
		\item[(i)] $ \int_t^s \left\{ r(u, X_{u}^{\bm a^*}, a_{u}^*)\dd u + \frac{\epsilon}{2} \dd \langle  V^{*^{\bm a^*}} \rangle(u)\right\} + V^*(s, X_s^{\bm a^*};\epsilon) $ is an $( \{\f_s^W\}_{s\geq t},\p^W)$- (local) martingale; and
		\item[(ii)] for any $\bm a$, $ \int_t^s \left\{ r(u, X_{u}^{\bm a}, a_{u})\dd u + \frac{\epsilon}{2} \dd \langle  V^{*^{\bm a}} \rangle(u)\right\}  + V^*(s, X_s^{\bm a};\epsilon) $ is an $(\{\f_s^W\}_{s\geq t},\p^W)$- (local) supermartingale,
	\end{enumerate}
	where $\langle V^{*^{\bm a}} \rangle$ is the QV of the process $V^{*}(s, X_s^{\bm a};\epsilon)$. Then $\bm a^*$ is the optimal solution to \eqref{eq:objective} and $V^*(t,x;\epsilon)$ is the optimal value function.
\end{lemma}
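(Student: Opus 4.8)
The plan is to exploit the exact algebraic link between the quadratic-variation penalty and the Dol\'eans--Dade (stochastic) exponential. Fix an admissible control $\bm a$ and an initial datum $(t,x)$, and introduce the exponentiated value process
\[
Z_s := \exp\left(\epsilon\left[\int_t^s r(u,X_u^{\bm a},a_u)\dd u + V^*(s,X_s^{\bm a};\epsilon)\right]\right),\qquad s\in[t,T],
\]
together with the process from conditions (i)--(ii),
\[
N_s := \int_t^s \left\{ r(u,X_u^{\bm a},a_u)\dd u + \tfrac{\epsilon}{2}\dd\langle V^{*^{\bm a}}\rangle(u)\right\} + V^*(s,X_s^{\bm a};\epsilon).
\]
Writing the continuous semimartingale $Y_s := \int_t^s r\,\dd u + V^*(s,X_s^{\bm a};\epsilon)$ in its canonical decomposition $Y=Y_t+B+L$ with $L$ a local martingale, and using $\langle Y\rangle=\langle L\rangle=\langle V^{*^{\bm a}}\rangle$, It\^o's formula gives $\dd Z_s = \epsilon Z_s\bigl(\dd B_s + \tfrac{\epsilon}{2}\dd\langle L\rangle(s)\bigr) + \epsilon Z_s\,\dd L_s$. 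The term in parentheses is exactly the drift of $N$, so that the (local) martingale/supermartingale hypotheses on $N$ translate directly into sign information for the drift of $Z$. Solving the linear SDE yields the clean representation $Z_s = Z_t\,e^{\epsilon(N^{FV}_s-N^{FV}_t)}\,\mathcal E(\epsilon L)_s$, where $N^{FV}$ is the finite-variation part of $N$ and $\mathcal E(\epsilon L)$ is the stochastic exponential, whose quadratic exponent is $\tfrac{\epsilon^2}{2}\langle V^{*^{\bm a}}\rangle$.

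First I would treat condition (i). Under $\bm a^*$ the drift of $N$ vanishes, so $\dd Z_s=\epsilon Z_s\,\dd L_s$ is a local martingale; the hypothesis $\E[e^{\frac{\epsilon^2}{2}\langle V^{*^{\bm a}}\rangle(T)}]<\infty$ is precisely Novikov's condition for $\mathcal E(\epsilon L)$, which upgrades $Z$ to a true martingale. Evaluating at the endpoints, $Z_t=e^{\epsilon V^*(t,x;\epsilon)}$ (deterministic, since $X_t=x$) and $Z_T=e^{\epsilon[\int_t^T r\,\dd u + h(X_T^{\bm a^*})]}$, and taking $\tfrac1\epsilon\log$ of $\E[Z_T]=Z_t$ shows $V^*(t,x;\epsilon)=\tfrac1\epsilon\log\E^{\p^W}[\,\cdots]$, i.e.\ $V^*$ equals the objective value attained by $\bm a^*$. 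For condition (ii) and an arbitrary $\bm a$, $N^{FV}$ is non-increasing, so $N^{FV}_s-N^{FV}_t\le 0$, and I would read off the drift sign of $Z$ from the representation above.

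The sign of $\epsilon$ is what makes the two inequalities close up correctly, and handling it is the main obstacle. When $\epsilon>0$, $Z$ is a positive local supermartingale, hence a genuine supermartingale, giving $e^{\epsilon V^*(t,x;\epsilon)}\ge\E[Z_T]$, and the order-preserving map $\tfrac1\epsilon\log$ yields $V^*(t,x;\epsilon)\ge\tfrac1\epsilon\log\E^{\p^W}[\,\cdots]$. When $\epsilon<0$, $Z$ is only a positive local \emph{sub}martingale, which need not be a true submartingale, so positivity alone fails; instead I would use the domination $Z_s\ge Z_t\,\mathcal E(\epsilon L)_s$ (the exponent $\epsilon(N^{FV}_s-N^{FV}_t)\ge0$) and the true-martingale property of $\mathcal E(\epsilon L)$ from Novikov to get $\E[Z_T]\ge Z_t$, after which $\tfrac1\epsilon\log$ (now order-reversing) again gives $V^*(t,x;\epsilon)\ge\tfrac1\epsilon\log\E^{\p^W}[\,\cdots]$. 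In both cases condition (ii) shows $V^*$ dominates the objective of every admissible $\bm a$, while condition (i) gives equality at $\bm a^*$, so $\bm a^*$ is optimal and $V^*$ is the optimal value function. The delicate points are the local-to-true martingale upgrade and the $\epsilon<0$ domination step; a routine localization along stopping times $\tau_n\uparrow\infty$ together with the stated integrability condition justifies passing the expectations through for merely local (super)martingales.
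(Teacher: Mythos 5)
Your proposal is correct and follows essentially the same route as the paper: the paper isolates the key mechanism in an auxiliary lemma (Lemma \ref{lemma:exponential martingale}), where the Doob--Meyer decomposition $Z_s+\frac{\epsilon}{2}\langle Z\rangle(s)=M_s+A_s$ and the identity $e^{\epsilon Z_s-\epsilon A_s}=e^{\epsilon M_s-\frac{\epsilon^2}{2}\langle M\rangle(s)}$ play exactly the role of your representation $Z_s=Z_t\,e^{\epsilon(N^{FV}_s-N^{FV}_t)}\mathcal{E}(\epsilon L)_s$, with the same Novikov-type use of $\E[e^{\frac{\epsilon^2}{2}\langle V^{*\bm a}\rangle(T)}]<\infty$ and the same sign-of-$\epsilon$ domination argument. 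The only cosmetic difference is that for $\epsilon>0$ you invoke ``positive local supermartingale $\Rightarrow$ supermartingale'' where the paper runs the domination through the true martingale $e^{\epsilon M-\frac{\epsilon^2}{2}\langle M\rangle}$ in both sign cases; both are valid.
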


Typically, such martingale-based characterization of the optimality is slightly weaker than or sometimes equivalent to the HJB-type of characterization, because the martingale process is often involved in the standard verification argument. Moreover, the benefit of martingale optimality principle generalizes to many other contexts, and is the key concept that motivates the seminal works in reinforcement learning by \citet{jia2022policy,jia2022policypg,jia2022q}. A particular feature in Lemma \ref{lemma:classical martingale} is that the exponential form is removed but a new term, the QV of value function process, is introduced. This is due to the fundamental connection between an \textit{exponential martingale} and the QV process incurred in applying It\^o's lemma. Hence, QV can be reviewed as an extra penalty term on the variability of the value process to induce the policy to be less risk sensitive. The rigorous statement can be found in Appendix \ref{appendix:proof}.

Notably, two recently proposed approaches in the discrete-time risk-sensitive problems are via attacking either the ``exponential Bellman equation'' \citep{fei2021exponential} or the ``distributional robust Bellman equation'' \citep{wang2023finite}. It remains unclear what the continuous-time counterpart to them is because they both involve nonlinear recursive equations. In contrast, QV is also a nonlinear functional of the value function in Lemma \ref{lemma:classical martingale}, however, such dependency is in a much more explicit form that stands for the variability of the value function along the trajectory.


\subsection{Reinforcement learning formulation}
\label{sec:filtration discuss}
I now present the exploratory formulation in the sense of \citet{wang2020reinforcement} of the problem to be studied in this paper, which is regarded as the mathematical model for analyzing continuous-time RL. Mathematically, it means to generalize the classical problem to allow \textit{relaxed control}, where the actions are randomized generated from a probability distribution. Intuitively, what an agent can do is to try a (randomly generated) sequence of actions ${\bm a} = \{a_s,t\leq s \leq T\}$\footnote{Arguably, it is not possible to sample continuously in practice, I keep to this continuous-time sequence for the ease of the development of algorithms. In principle, one can use simple process to approximate it \citep{szpruch2022optimal}.}, observe the resulting state process $X^{\bm a} = \{X_s^{\bm a},t\leq s \leq T\}$, and the realized reward $\{r(s, X_s^{\bm a}, a_s),t\leq s \leq T\}$. Meanwhile, the agent gradually adjusts actions based on these observations.\footnote{This procedure applies to both the offline and online settings.  In the former, the agent can repeatedly try different sequences of actions over the same time period $[0,T]$ and record the corresponding state processes and payoffs. In the latter, the agent updates the actions as she goes, based on all the up-to-date historical observations.} An agent can only do trial-and-error in the RL context because the agent has no knowledge about the environment (i.e. the functions $b,\sigma,r,h$) and cannot even form an HJB equation or any mathematical tool in the classical stochastic control methodology.

To be consistent with the common practice of RL literature, I restrict to the feedback relaxed control, known as the \textit{stochastic policy}. In particular, let $\bm{\pi}:(t,x)\in [0,T] \times \mathbb{R}^d \mapsto \bm{\pi}(\cdot|t,x)\in \mathcal{P}(\mathcal{A})$ be a given (feedback) policy, where $\mathcal{P}(\mathcal{A})$ is a suitable collection of probability density functions (or probability mass function for finite space).
At each time $s$, an action $a_s$ is generated or sampled from the distribution $\bm{\pi}(\cdot|s,X_s)$. 

Fix a stochastic policy $\bm{\pi}$ and an initial time--state pair $(t,x)$.
Consider the following SDE
\begin{equation}
	\label{eq:model pi}
	\dd X_s^{\bm{\pi}} = b(s,X_s^{\bm{\pi}},a_s^{\bm{\pi}})\dd s + \sigma(s,X_s^{\bm{\pi}},a_s^{\bm{\pi}}) \dd W_s,\
	s\in [t,T]; \;\;X_{t}^{\bm{\pi}} = x  ,
\end{equation}
defined on $\left( \Omega ,\mathcal{F},\mathbb{P}; \{\mathcal{F}_s\}_{s\geq0}\right) $, where $a^{\bm{\pi}} = \{a_s^{\bm{\pi}},t\leq s \leq T\}$ is an $\{\mathcal{F}_s\}_{s\geq0}$-progressively measurable action process  generated from $\bm{\pi}$. The solution to (\ref{eq:model pi}), $X^{\bm{\pi}} = \{X_s^{\bm{\pi}},t\leq s \leq T\}$,  is the sample
state processes corresponding to $a^{\bm{\pi}}$. ($X^{\bm{\pi}}$ also depends on the {\it specific} copy $a^{\bm{\pi}}$ sampled from $\bm{\pi}$)

Note that \eqref{eq:model pi} should be interpreted as the sampled process in the learning procedure, where both $\bm a^{\bm\pi}$ and $X^{\bm \pi}$ processes can be generated and observed. In particular, marginally, $X^{\bm \pi}$ has the same distribution as the solution to a different SDE \citep{wang2020reinforcement}:
\begin{equation}
	\label{eq:model relaxed}
	\dd \tilde X_s = \tilde{b}\big( s,\tilde X_s,\bm{\pi}(\cdot|s, \tilde X_s) \big)\dd t + \tilde{\sigma}\big( s,\tilde X_s,\bm{\pi}(\cdot|s, \tilde X_s) \big) \dd W_s,\;s\in[t,T];\;\;\ \tilde X_{t} = x,
\end{equation}
where
\[ \tilde{b}\big(s,x,\pi(\cdot)\big) = \int_{\mathcal{A}} b(s,x,a) \pi(a)\dd a,\ \; \tilde{\sigma}\tilde{\sigma}^\top\big(s,x,\pi(\cdot)\big) = \int_{\mathcal{A}} \sigma\sigma^\top(s,x,a) \pi(a)\dd a.\] 
When the expectation is taken with respect to only $X^{\bm\pi}$ under $\p$, it can be equivalently written as the expectation respect to $\tilde X^{\bm\pi}$ under $\p^W$. When the expectation involves $a^{\bm\pi}$ under $\p$, then it can be integrated out by conditioning on $X^{\bm\pi}$ using the policy $\bm\pi$.\footnote{In \citet{wang2020reinforcement}, this result is derived using the law-of-large-number type of argument to integrate over the randomization of $a$. More discussions on the relation between \eqref{eq:model pi} and \eqref{eq:model relaxed} can be found in \citet{jia2022policypg,jia2022q}.}

Based on the same spirit as in \citet{wang2020reinforcement}, I add an entropy regularizer to the reward function to encourage  exploration (represented by the stochastic policy), hence the risk-sensitive objective \eqref{eq:objective} becomes\footnote{When $\lambda = 0$, \eqref{eq:objective relaxed action} reduces to the classical risk-sensitive control problem \eqref{eq:objective}. For the general non-risk-sensitive problems, the entropy-regularized problem approximates the classical problem by the order of $O(-\lambda\log\lambda)$, see \cite{tang2021exploratory}. However, whether the same order applies to the risk-sensitive problem still remains an open question. For the special case of Merton's investment problem studied in Section \ref{sec:numerical}, the connection between the entropy-regularized problem and the classical problem becomes clear because the closed-form solution to the entropy-regularized problem can be found in Appendix \ref{sec:merton true solution}. In this case, the gap is indeed $\frac{\lambda}{2}\log\frac{2\pi\lambda}{\gamma\sigma^2}$.} 
\begin{equation}
	\label{eq:objective relaxed action}
	\begin{aligned}
		J(t,x;\bm{\pi};\epsilon) = &	\frac{1}{\epsilon}\log\E^{\p}\left[e^{\epsilon \left[ \int_t^T \left[ r(s,X_s^{\bm{\pi}},a_s^{\bm \pi}) - \lambda \log\bm\pi(a_s^{\bm\pi}|s,X_s^{\bm\pi}) \right] \dd s + h(X_T^{\bm \pi}) \right]} \Big|X_t^{\bm \pi}= x\right],
	\end{aligned}
\end{equation}
where $\E^{\p}$ is the expectation with respect to both the Brownian motion and the action randomization.

Note that in \eqref{eq:objective relaxed action}, the entropy regularizer is added inside the exponential form. This is because the regularizer can be regarded as a fictitious reward for taking random actions, and hence, should be added to the actual reward $r$. Moreover, $\lambda \geq 0$ is a given weighting parameter on exploration, also known as the {\it temperature} parameter. It is assumed to be known or chosen by the agent. In this paper, I first leave it as exogenous, and revisit it when conducting analysis for a specific algorithm for a simple problem in Section \ref{subsec:merton} to understand better how it affects the learning performance. In short, the intuition is, as $\lambda$ rises, there exists a tradeoff: It incentivizes more exploration, and hence, accelerates the convergence, however it incurs larger noise in sampling, which reduces efficiency and exploitation.

The following gives the precise definition of admissible policies.

\begin{definition}
	\label{ass:admissible}
	A policy $\bm{\pi}=\bm{\pi}(\cdot|\cdot,\cdot)$ is called {\it admissible} if
	\begin{enumerate}
		\item[(i)] $\bm{\pi}(\cdot|t,x)\in \mathcal{P}(\mathcal{A})$, $\operatorname{supp}\bm\pi(\cdot|t,x) = \mathcal{A}$ for every $(t,x)\in [0,T]\times \mathbb{R}^d$, and $\bm{\pi}(a|t,x):(t,x,a)\in [0,T] \times \mathbb{R}^d\times \mathcal{A}\mapsto
		\mathbb{R}$ is measurable;
		\item[(ii)] the SDE \eqref{eq:model relaxed} admits a unique weak solution (in the sense of distribution) for any initial $(t,x)\in [0,T] \times \mathbb{R}^d$;
		\item[(iii)] 
		$\int_{\mathcal{A}} |r(t,x,a) - \gamma \log\bm{\pi}(a|t,x) | \bm{\pi}(a|t,x)\dd a \leq C(1+|x|^{\mu})$,  $\forall (t,x)$ where $C>0$ and $\mu\geq 1$ are constants;
		\item[(iv)] $\bm{\pi}(a|t,x)$ is continuous in $(t,x)$  and uniformly Lipschitz continuous in $x$ in the total variation distance, i.e., for each fixed $a$, $ \int_{\mathcal{A}} |\bm{\pi}(a|t,x) - \bm{\pi}(a|t',x')|\dd a \to 0$ as $(t',x')\to (t,x)$, and
		there is a constant $C>0$ independent of $(t,a)$ such that
		\[ \int_{\mathcal{A}} |\bm{\pi}(a|t,x) - \bm{\pi}(a|t,x')|\dd a \leq C|x-x'|,\;\;\forall x,x'\in \mathbb{R}^d. \]
	\end{enumerate}
	The collection of admissible policies is denoted by $\bm\Pi$.
\end{definition}

I conclude this subsection by stating the counterpart to Lemma \ref{lemma:classical martingale} for the exploratory problem. 

\begin{theorem}
	\label{thm:exploratory martingale}
	Suppose that there is a function $J^*(t,x;\epsilon) \in C^{1,2}\big([0,T)\times \mathbb{R}^d \big) \cap C\big([0,T]\times \mathbb{R}^d \big)$, satisfying $J^*(T,x;\epsilon) = h(x)$ and $\E\left[ e^{\frac{\epsilon^2}{2} \langle {J^*}^{\bm \pi}\rangle(T) }  \right] < \infty$ for any $\bm \pi\in \bm\Pi$. Consider a policy defined as
	\begin{equation}
		\label{eq:pi star in v star}
		\bm\pi^*(a|t,x)  = \frac{\exp\left\{ \frac{1}{\lambda}\left[ \mathcal{L}^a J^*(t,x) + r(t,x,a)  + \frac{\epsilon}{2}|\sigma(t,x,a)^\top \frac{\partial J^*}{\partial x}(t,x)|^2   \right] \right\} }{\int_{\mathcal A}  \exp\left\{ \frac{1}{\lambda}\left[ \mathcal{L}^a J^*(t,x) + r(t,x,a)  + \frac{\epsilon}{2}|\sigma(t,x,a)^\top \frac{\partial J^*}{\partial x}(t,x)|^2   \right] \right\} \dd a } ,
	\end{equation}
	where $\mathcal{L}^a$ is the {\it infinitesimal generator} associated with the diffusion process \eqref{eq:model classical}:
	\[ \mathcal{L}^a \varphi (t,x): = \frac{\partial \varphi}{\partial t}(t,x) + b\big( t,x, a\big) \circ \frac{\partial \varphi}{\partial x}(t,x) + \frac{1}{2}\sigma^2\big( t,x,a \big) \circ \frac{\partial^2 \varphi}{\partial x^2}(t,x),\;\;a\in \mathcal{A}. \]
	Here, $\frac{\partial \varphi}{\partial x} \in \mathbb{R}^d$ is the gradient, and $\frac{\partial^2 \varphi}{\partial x^2}\in \mathbb{R}^{d\times d}$ is the Hessian.
	
	If for any initial condition $(t,x)$,
	\[ \int_t^s \left\{ \left[ r(u, X_{u}^{\bm \pi^*}, a_{u}^{\bm \pi^*}) - \lambda\log\bm\pi^*(a_{u}^{\bm\pi^*}|u, X_{u}^{\bm\pi^*}) \right] \dd u + \frac{\epsilon}{2} \dd \langle  J^{*^{\bm \pi^*}} \rangle(u)\right\}+ J^*(s, X_s^{\bm \pi^*};\epsilon) \]
	is an $(\{\f_s^X \}_{s\geq t},\p)$- (local) martingale; where $\langle J^{*^{\bm \pi}} \rangle$ is the QV of the process $J^{*}(s, X_s^{\bm \pi};\epsilon)$. Then $\bm \pi^*$ is the optimal solution to \eqref{eq:objective relaxed action} and $J^*(t,x;\epsilon)$ is the optimal value function.
\end{theorem}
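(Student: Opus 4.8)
The plan is to prove a verification-type result by combining It\^o's formula for an \emph{exponential} functional of $J^*$ with the Gibbs (entropy) variational principle --- this is exactly the mechanism that converts the exponential objective into the quadratic-variation penalty. Fix an admissible $\bm\pi$ and an initial pair $(t,x)$, abbreviate $R_s^{\bm\pi} = r(s,X_s^{\bm\pi},a_s^{\bm\pi}) - \lambda\log\bm\pi(a_s^{\bm\pi}|s,X_s^{\bm\pi})$, and define
\[ M_s^{\bm\pi} = \exp\Big\{ \epsilon\Big[ \int_t^s R_u^{\bm\pi}\,\dd u + J^*(s,X_s^{\bm\pi};\epsilon)\Big]\Big\}. \]
Applying It\^o's formula to $M^{\bm\pi}$, the It\^o correction $\tfrac{\epsilon^2}{2}\dd\langle J^{*\bm\pi}\rangle$ combines with $\epsilon\,\dd J^*$ so that the $\dd s$-drift of $M^{\bm\pi}$ equals $\epsilon M_s^{\bm\pi}\big[\mathcal{H}(s,X_s^{\bm\pi},a_s^{\bm\pi}) - \lambda\log\bm\pi(a_s^{\bm\pi}|s,X_s^{\bm\pi})\big]$, where $\mathcal{H}(s,x,a) := \mathcal{L}^a J^*(s,x) + r(s,x,a) + \tfrac{\epsilon}{2}\big|\sigma(s,x,a)^\top \tfrac{\partial J^*}{\partial x}(s,x)\big|^2$ is precisely the exponent defining $\bm\pi^*$. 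This is the step that \emph{generates} the $\tfrac{\epsilon}{2}|\sigma^\top \partial_x J^*|^2$ penalty; the same computation shows that the drift of the non-exponential process in the statement equals $\mathcal{H}-\lambda\log\bm\pi$, i.e. the drift of $M^{\bm\pi}$ stripped of the factor $\epsilon M_s^{\bm\pi}$.

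Next I would pass from the sampled filtration to $\{\f_s^X\}$ by integrating out the action, using the relaxed-control equivalence between \eqref{eq:model pi} and \eqref{eq:model relaxed} established in \citet{wang2020reinforcement,jia2022q}; this replaces the realized drift by its $\bm\pi$-average $\int_{\mathcal A}[\mathcal{H}(s,X_s,a) - \lambda\log\bm\pi(a|s,X_s)]\bm\pi(a|s,X_s)\,\dd a$. The Gibbs variational principle then gives, for every admissible $\bm\pi$,
\[ \int_{\mathcal A}\big[\mathcal{H}(s,X_s,a) - \lambda\log\bm\pi(a|s,X_s)\big]\bm\pi(a|s,X_s)\,\dd a \;\le\; \lambda\log\int_{\mathcal A} e^{\mathcal{H}(s,X_s,a)/\lambda}\,\dd a =: G(s,X_s), \]
with equality if and only if $\bm\pi=\bm\pi^*$. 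Because the martingale hypothesis is assumed for \emph{every} initial condition $(t,x)$, and a continuous martingale with absolutely continuous drift must have vanishing drift, evaluating the averaged drift of the stated process under $\bm\pi^*$ (which by the Gibbs equality is exactly $G(s,X_s)$) at the initial instant forces $G(t,x)=0$ for all $(t,x)$; this is the entropy-regularized risk-sensitive HJB equation solved by $J^*$.

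The verification then splits into two claims. For $\bm\pi^*$, the averaged drift of $M^{\bm\pi^*}$ equals $\epsilon M_s^{\bm\pi^*}G(s,X_s)=0$, so $M^{\bm\pi^*}$ is a local martingale; using $J^*(T,\cdot)=h$ together with $M_t^{\bm\pi^*}=e^{\epsilon J^*(t,x)}$ and $M_T^{\bm\pi^*}=e^{\epsilon(\int_t^T R_u^{\bm\pi^*}\dd u + h(X_T^{\bm\pi^*}))}$, the true martingale property yields $e^{\epsilon J^*(t,x)}=\E^{\p}[M_T^{\bm\pi^*}\mid X_t=x]$, i.e. $J^*(t,x)=J(t,x;\bm\pi^*;\epsilon)$. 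For an arbitrary admissible $\bm\pi$, the Gibbs inequality together with $G\equiv 0$ makes the averaged drift of $M^{\bm\pi}$ equal to $\epsilon M_s^{\bm\pi}$ times a nonpositive quantity, so $M^{\bm\pi}$ is a supermartingale when $\epsilon>0$ and a submartingale when $\epsilon<0$; in either case the comparison between $\E^{\p}[M_T^{\bm\pi}\mid X_t=x]$ and $M_t^{\bm\pi}=e^{\epsilon J^*(t,x)}$, after applying $\tfrac1\epsilon\log(\cdot)$, yields $J(t,x;\bm\pi;\epsilon)\le J^*(t,x)$ --- the two sign flips, one from super/sub-martingale and one from the sign of $\epsilon$, cancel. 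Combining the two claims shows $\bm\pi^*$ is optimal and $J^*$ is the optimal value function.

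I expect the main obstacle to be the promotion from local to true (super/sub)martingale and the justification of taking $\E^{\p}[M_T^{\bm\pi}\mid X_t=x]$ in the first place. This is precisely the role of the hypothesis $\E[e^{\frac{\epsilon^2}{2}\langle J^{*\bm\pi}\rangle(T)}]<\infty$: it controls the exponential stochastic-integral part of $M^{\bm\pi}$ (via a Novikov-type / uniform-integrability argument along a localizing sequence of stopping times) so that the local martingale is genuinely a martingale up to $T$ and the optional-sampling step is licit. A secondary technical point is the careful handling of the two filtrations $\{\f_s\}$ and $\{\f_s^X\}$ when integrating out the action, together with the sign bookkeeping for $\epsilon>0$ versus $\epsilon<0$, which must be tracked consistently through the exponential transform.
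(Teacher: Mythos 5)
Your proposal is correct and follows essentially the same route as the paper: derive the exploratory HJB equation from the assumed martingale property of the QV-penalized process under $\bm\pi^*$, invoke the Gibbs/entropy-maximization inequality to show the averaged drift is nonpositive under any other admissible policy, and pass between the additive QV-penalized process and the exponential process with the sign of $\epsilon$ determining super- versus sub-martingale, using $\E[e^{\frac{\epsilon^2}{2}\langle J^{*\bm\pi}\rangle(T)}]<\infty$ for the local-to-true promotion. The only difference is packaging --- you apply It\^o directly to the exponential $M^{\bm\pi}$, whereas the paper works with the additive process and delegates the exponential conversion to a separately proved lemma via Doob--Meyer and the stochastic-exponential criterion --- which does not change the substance of the argument.
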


It is interesting to notice that the requirement $(ii)$ in Lemma \ref{lemma:classical martingale} now reduces to one martingale condition under the exploratory formulation, and the ``supermartingale'' requirement is gone. In fact, due to the entropy-regularization, such  ``supermartingale'' condition is replaced by the requirement of the entropy-maximizing policy, which takes a more explicit form of Gibbs measure \eqref{eq:pi star in v star}, also known as the Boltzmann exploration scheme. Recall that in the typical martingale optimality principle as in Lemma \ref{lemma:classical martingale}, the ``martingale'' condition $(i)$ is an equality condition that needs to be satisfied only for the optimal policy, however, the ``supermartingale'' condition $(ii)$ needs to be satisfied by all polices, which includes an infinitely many constraints. In contrast, the entropy-regularization resolves such difficulty by solving the optimal policy explicitly. 

\section{Risk-sensitive q-Learning}
\label{sec:risk sensitive q learning}
\subsection{Definition of q-function}
The notion of the Q-function in the discrete-time risk-sensitive RL problem is less standard and less tractable to work with, and hence, different variatants of Q-function have been proposed, e.g., in \citet{fei2021exponential}. I would like to introduce the notion of (little) ``q-function'' for the continuous-time risk-sensitive RL problem based on a different motivation -- by recalling the fundamental relation between the q-function and the ordinary, non-risk-sensitive stochastic control problems revealed in \citet{jia2022q}. In short, the q-function should be the properly scaled \textit{advantage function} by taking the limit of time-discretization to zero.

More precisely, for a given policy $\bm\pi$, and $(t,x,a)\in [0,T)\times \mathbb{R}^d\times \mathcal{A}$, consider a  ``perturbed" policy of $\bm{\pi}$ as follows: It takes the action $a\in \mathcal{A}$ on $[t,t+\Delta t)$ where $\Delta t>0$, and then follows $\bm{\pi}$ on $[t+\Delta t,T]$. Then the advantage function is  
\begin{equation}
	\label{eq:deriving q function}
	\begin{aligned}
		& \frac{1}{\epsilon}\log\E^{\p}\left[e^{\epsilon \left[\int_t^{t+\Delta t}r(s,X_s^a,a)\dd s + \int_{t+\Delta t}^T \left[ r(s,X_s^{\bm{\pi}},a_s^{\bm \pi}) - \lambda \log\pi(a_s^{\bm\pi}|s,X_s^{\bm\pi}) \right] \dd s + h(X_T^{\bm \pi}) \right]} \Big|X_t = x\right]\\
		& - J(t,x;\bm\pi;\epsilon) \\
		= & \frac{1}{\epsilon}\log\E^{\p}\left[e^{\epsilon \left[\int_t^{t+\Delta t}r(s,X_s^a,a)\dd s + J(t+\Delta t, X_{t+\Delta t}^{a};\bm\pi;\epsilon)  - J(t,x;\bm\pi;\epsilon) \right]} \Big|X_t^a = x\right] \\
		= & \frac{1}{\epsilon}\log\E\Bigg[1 + \epsilon \int_t^{t+\Delta t} e^{\epsilon \left[\int_t^{s}r(u,X_{u}^a,a)\dd u + J(s, X_{s}^{a};\bm\pi;\epsilon)  - J(t,x;\bm\pi;\epsilon) \right]} \\
		&\times  \left[ r(s,X_s^a,a) + \mathcal{L}^a J(s,X_s^a;\bm\pi;\epsilon) +  \frac{\epsilon}{2} |\sigma(s,X_s^a,a)^\top\frac{\partial J}{\partial x}(s,X_s^a;\bm\pi;\epsilon)|^2  \right]\dd s    \Big| X_t^a = x\Bigg] \\
		= & \left[ r(t,x,a) + \mathcal{L}^a J(t,x;\bm\pi;\epsilon) +  \frac{\epsilon}{2} |\sigma(t,x,a)^\top\frac{\partial J}{\partial x}(t,x;\bm\pi;\epsilon)|^2 \right] \Delta t+ o(\Delta t).
	\end{aligned}
\end{equation}
After properly being scaled by $\Delta t$ and taking limit, the expansion in \eqref{eq:deriving q function} motivates the following definition for the q-function.

\begin{definition}
	\label{def:def q function}
	The q-function associated with a given policy $\bm\pi\in \bm\Pi$ is defined as
	\begin{equation}
		\label{eq:q rate}
		\begin{aligned}
			q(t,x,a;\bm{\pi}) = & \mathcal{L}^a J(t,x;\bm\pi;\epsilon) + r(t,x,a) +  \frac{\epsilon}{2} |\sigma(t,x,a)^\top\frac{\partial J}{\partial x}(t,x;\bm\pi;\epsilon)|^2,\\
			& \;\;\;\;\;(t,x,a)\in [0,T]\times \mathbb{R}^d\times \mathcal{A}.
		\end{aligned}
	\end{equation}
\end{definition}
Note that the form of q-function coincides with the optimal policy in \eqref{eq:pi star in v star}, which leads to the optimal q-function:
\begin{equation}
	\label{eq:optimal q}
	q^*(t,x,a;\epsilon) = \mathcal{L}^a J^*(t,x;\epsilon) + r(t,x,a) +  \frac{\epsilon}{2} \left| \sigma(t,x,a)^\top\frac{\partial J^*}{\partial x}(t,x;\epsilon) \right|^2. 
\end{equation}

\subsection{Martingale characterization of the optimal q-function}
In most applications, only the optimal policy is of the interest. Hence, I will focus on the characterization of the optimal q-function in this subsection, and how such martingale characterization can facilitate q-learning algorithm in both \textit{on-policy} and \textit{off-policy} settings.\footnote{It is possible to establish parallel results as in \citet{jia2022q} for the q-function associated with any policy, but these results are omitted in this paper. On-policy and off-policy stand for two different learning settings. On-policy learning updates the policy currently in use to generate data, whereas off-policy learning aims to estimate a policy of interest (called \textit{target policy}), typically the optimal policy, using the data collected from a different, possibly sub-optimal policy (called \textit{behavior policy}). For example, learning playing Go from existing human players is off-policy, whereas learning via self-play is likely on-policy.}

First of all, I present a necessary condition for the optimal q-function and optimal policy. This is a parallel result to \citet[Proposition 8]{jia2022q}.
\begin{proposition}\label{prop:qstar1}
	It holds that
	\begin{equation}
		\label{eq:optimal q hjb}
		\int_{\mathcal A} \exp\{ \frac{1}{\lambda} q^*(t,x,a;\epsilon) \}\dd a = 1,
	\end{equation}
	for all $(t,x)$, and consequently
	the optimal policy $\bm\pi^*$ is  
	\begin{equation}
		\label{eq:optimal pi and q}
		\bm\pi^*(a|t,x) = \exp\{ \frac{1}{\lambda} q^*(t,x,a;\epsilon) \} .
	\end{equation}
\end{proposition}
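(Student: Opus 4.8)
The plan is to read the normalization \eqref{eq:optimal q hjb} off the martingale characterization of Theorem \ref{thm:exploratory martingale}: the quantity $\int_{\mathcal A}\exp\{\frac1\lambda q^*(t,x,a;\epsilon)\}\,\dd a$ will turn out to be exactly the exponentiated drift of the optimal martingale, which must vanish. To begin, I would set $Z(t,x):=\int_{\mathcal A}\exp\{\frac1\lambda q^*(t,x,a;\epsilon)\}\,\dd a$ and combine the defining Gibbs formula \eqref{eq:pi star in v star} of $\bm\pi^*$ with the definition \eqref{eq:optimal q} of $q^*$ to write $\bm\pi^*(a|t,x)=\exp\{\frac1\lambda q^*(t,x,a;\epsilon)\}/Z(t,x)$. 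Taking logarithms produces the key identity
\[ q^*(t,x,a;\epsilon)-\lambda\log\bm\pi^*(a|t,x)=\lambda\log Z(t,x), \]
whose right-hand side is independent of the action $a$. This $a$-independence is the structural fact that drives the whole argument.

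Next I would insert the It\^o expansion of $J^*(s,X_s^{\bm\pi^*};\epsilon)$ into the martingale process $M_s$ of Theorem \ref{thm:exploratory martingale}. It\^o's formula contributes the generator term $\mathcal L^{a}J^*$ to the drift while $\frac\epsilon2\,\dd\langle J^{*^{\bm\pi^*}}\rangle(u)=\frac\epsilon2|\sigma(u,X_u^{\bm\pi^*},a_u^{\bm\pi^*})^\top\frac{\partial J^*}{\partial x}(u,X_u^{\bm\pi^*};\epsilon)|^2\,\dd u$; recognizing $\mathcal L^{a}J^*+r+\frac\epsilon2|\sigma^\top\frac{\partial J^*}{\partial x}|^2=q^*$ collapses $M_s$ to
\[ M_s=J^*(t,x;\epsilon)+\int_t^s\big[q^*(u,X_u^{\bm\pi^*},a_u^{\bm\pi^*};\epsilon)-\lambda\log\bm\pi^*(a_u^{\bm\pi^*}|u,X_u^{\bm\pi^*})\big]\,\dd u+N_s, \]
where $N_s:=\int_t^s\frac{\partial J^*}{\partial x}(u,X_u^{\bm\pi^*};\epsilon)^\top\sigma(u,X_u^{\bm\pi^*},a_u^{\bm\pi^*})\,\dd W_u$ is a stochastic integral. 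By the identity of the previous step the integrand equals $\lambda\log Z(u,X_u^{\bm\pi^*})$, so the finite-variation part of $M$ is $\int_t^s\lambda\log Z(u,X_u^{\bm\pi^*})\,\dd u$. Since $M$ is a (local) martingale, uniqueness of the semimartingale decomposition forces this finite-variation part to be identically zero, hence $\lambda\log Z(u,X_u^{\bm\pi^*})\equiv 0$ along the trajectory. Evaluating at $u=t$ with $X_t^{\bm\pi^*}=x$ and using continuity of $Z$ gives $\lambda\log Z(t,x)=0$, i.e. $Z(t,x)=1$, which is \eqref{eq:optimal q hjb}; substituting $Z\equiv 1$ into the Gibbs form then yields \eqref{eq:optimal pi and q}.

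The step I expect to be the main obstacle is the filtration bookkeeping that makes ``$M$ is an $(\{\f_s^X\}_{s\geq t},\p)$-martingale'' usable. The integrand $r-\lambda\log\bm\pi^*$ and the term $\mathcal L^{a}J^*$ are each action-dependent, so $M_s$ is not manifestly $\{\f_s^X\}$-adapted; what saves the argument is precisely the $a$-independence established in the first step, because it means the $\bm\pi^*$-averaged drift and the realized drift coincide, both equal to $\lambda\log Z$. I would therefore make the decomposition rigorous by passing through the relaxed formulation \eqref{eq:model relaxed}, equivalently by taking $\{\f_s^X\}$-conditional expectations to integrate out the action as in \citet{wang2020reinforcement,jia2022q}, after which $\int_t^s\lambda\log Z(u,X_u)\,\dd u$ is genuinely the $\{\f_s^X\}$-finite-variation part of $M$. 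Two minor technical points remain: promoting the local martingale to a true one, for which the standing assumption $\E[e^{\frac{\epsilon^2}{2}\langle J^*\rangle(T)}]<\infty$ supplies the requisite integrability of $\langle J^*\rangle$, and extracting the pointwise value $Z(t,x)=1$ from the integrated identity, which uses continuity of $(t,x)\mapsto Z(t,x)$ together with the arbitrariness of the initial pair $(t,x)$ granted in Theorem \ref{thm:exploratory martingale}. I note finally that the resulting condition is nothing but the exploratory risk-sensitive HJB equation rewritten through $q^*$: the log-sum-exp value $\lambda\log Z$ of the Gibbs maximizer vanishes exactly at optimality.
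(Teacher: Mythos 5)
Your proof is correct and follows essentially the same route as the paper's: the paper likewise extracts the zero-drift (exploratory HJB) condition from the martingale in Theorem \ref{thm:exploratory martingale}, substitutes $q^*$ and the Gibbs form of $\bm\pi^*$, and reduces the drift to $\lambda\log\int_{\mathcal A}\exp\{\frac{1}{\lambda}q^*\}\,\dd a=0$. The only cosmetic difference is that you observe the integrand $q^*-\lambda\log\bm\pi^*=\lambda\log Z$ is already $a$-independent before averaging, whereas the paper averages against $\bm\pi^*$ first; both yield the same identity.
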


Given \eqref{eq:optimal q hjb} and the terminal condition for the value function as constraints, the optimal value function and the optimal q-function can be characterized via martingale conditions. 
\begin{theorem}
	\label{thm:q optimal}
	Let a function $\widehat{J^*}\in C^{1,2}\big([0,T)\times \mathbb{R}^d \big) \cap C\big([0,T]\times \mathbb{R}^d \big)$ with polynomial growth in its all derivatives and a continuous function $\widehat{q^*}:[0,T]\times \mathbb{R}^d\times \mathcal{A}\to \mathbb{R}$ be given, satisfying
	\begin{equation}
		\label{eq:q hjb2 optimal}
		\widehat{J^*}(T,x) = h(x),\;\;\; \int_{\mathcal{A}} \exp\{ \frac{1}{\lambda} \widehat{q^*}(t,x,a) \} \dd a =1,\;\;\forall (t,x)\in[0,T]\times\mathbb{R}^d.
	\end{equation}
	Then 
	\begin{enumerate}
		\item[(i)] If $\widehat{J^*}$ and $\widehat{q^*}$ are respectively the optimal value function and the optimal q-function, then for any $\bm\pi\in \bm\Pi$ and all $(t,x)\in[0,T]\times\mathbb{R}^d$, the following process
		\begin{equation}
			\label{eq:martingale with q function2 optimal}
			\widehat{J^*}(s,{X}_s^{\bm\pi}) + \int_t^s \left\{ \left[ r(u,{X}_{u}^{\bm\pi},a^{\bm\pi}_{u}) - \widehat{q^*}(u,{X}_{u}^{\bm\pi},a^{\bm\pi}_{u})  \right]\dd u + \frac{\epsilon}{2}\dd \langle \widehat{J^*}^{\bm\pi} \rangle(u)\right\}
		\end{equation}
		is an $(\{\f_s\}_{s\geq t},\p)$-martingale, where $\{{X}_s^{\bm\pi}, t\leq s\leq T\}$ is the solution to (\ref{eq:model pi}) under $\bm\pi$ with ${X}_t^{\bm\pi}=x$.
		\item[(ii)] Suppose stronger regularity condition $\E\left[ e^{\frac{\epsilon^2}{2} \langle \widehat{J^*}^{\bm \pi}\rangle(T) }  \right] < \infty$ for any $\bm \pi\in \bm\Pi$ holds. If there exists one $\bm\pi\in \bm\Pi$ such that for all $(t,x)\in[0,T]\times\mathbb{R}^d$,  \eqref{eq:martingale with q function2 optimal} is an $(\{\f_s\}_{s\geq t},\p)$-martingale, then $\widehat{J^*}$ and $\widehat{q^*}$ are respectively the optimal value function and the optimal q-function. Moreover, in this case, $\widehat{\bm\pi^*}(a|t,x) = \exp\{  \frac{1}{\lambda}\widehat{q^*}(t,x,a) \}$ is the optimal policy. 
	\end{enumerate}
\end{theorem}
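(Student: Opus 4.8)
The plan is to treat the two directions separately, in each case computing the drift of the candidate process \eqref{eq:martingale with q function2 optimal} via It\^o's formula and exploiting that the running terms are arranged precisely so as to cancel against the definition of the q-function. For part (i), I would apply It\^o's formula to $\widehat{J^*}(s,X_s^{\bm\pi})$ along the sampled dynamics \eqref{eq:model pi}, giving $\dd\widehat{J^*}(s,X_s^{\bm\pi}) = \mathcal{L}^{a_s^{\bm\pi}}\widehat{J^*}(s,X_s^{\bm\pi})\dd s + \frac{\partial \widehat{J^*}}{\partial x}(s,X_s^{\bm\pi})^\top\sigma(s,X_s^{\bm\pi},a_s^{\bm\pi})\dd W_s$, together with $\dd\langle\widehat{J^*}^{\bm\pi}\rangle(s) = |\sigma(s,X_s^{\bm\pi},a_s^{\bm\pi})^\top\frac{\partial\widehat{J^*}}{\partial x}(s,X_s^{\bm\pi})|^2\dd s$ for the quadratic variation. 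Collecting the running terms, the $\dd s$-coefficient of \eqref{eq:martingale with q function2 optimal} equals $\mathcal{L}^{a_s}\widehat{J^*} + r(s,X_s,a_s) - \widehat{q^*}(s,X_s,a_s) + \frac{\epsilon}{2}|\sigma^\top\frac{\partial\widehat{J^*}}{\partial x}|^2$. Since $\widehat{J^*}$ and $\widehat{q^*}$ are assumed optimal, the defining relation \eqref{eq:optimal q} of $q^*$ makes this drift vanish identically in the realized action, for every $\bm\pi$, so the process is a local martingale; I would then upgrade it to a true martingale under the stated condition $\E[e^{\frac{\epsilon^2}{2}\langle\widehat{J^*}^{\bm\pi}\rangle(T)}]<\infty$ together with the polynomial-growth hypotheses, which control the moments of the stochastic integral $\int_t^s\frac{\partial\widehat{J^*}}{\partial x}^\top\sigma\,\dd W_u$ via the moment estimates on $X^{\bm\pi}$ under Assumption \ref{ass:dynamic} and a standard localization.

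For part (ii), the first and crucial step is to recover the pointwise q-relation from the assumed martingale property of the single given policy $\bm\pi$. Since \eqref{eq:martingale with q function2 optimal} is an $(\{\f_s\}_{s\geq t},\p)$-martingale, the finite-variation part of its semimartingale decomposition must vanish, so the drift integrand computed above satisfies $\mathcal{L}^{a_u}\widehat{J^*}(u,X_u) + r(u,X_u,a_u) - \widehat{q^*}(u,X_u,a_u) + \frac{\epsilon}{2}|\sigma^\top\frac{\partial\widehat{J^*}}{\partial x}|^2 = 0$ for a.e.\ $u$, $\p$-a.s. Because $\bm\pi$ is admissible with $\operatorname{supp}\bm\pi(\cdot|u,x) = \mathcal{A}$, the realized action $a_u$ sweeps out all of $\mathcal{A}$; combining this with the continuity of $\widehat{q^*}$ and the $C^{1,2}$-regularity of $\widehat{J^*}$ (so the displayed expression is continuous in $a$), I would conclude that for every $(t,x)$, after letting $u\downarrow t$ and varying the initial datum, $\widehat{q^*}(t,x,a) = \mathcal{L}^a\widehat{J^*}(t,x) + r(t,x,a) + \frac{\epsilon}{2}|\sigma(t,x,a)^\top\frac{\partial\widehat{J^*}}{\partial x}(t,x)|^2$ for all $a\in\mathcal{A}$, i.e., $\widehat{q^*}$ is exactly the q-function generated by the value candidate $\widehat{J^*}$.

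With this relation in hand, the remaining step is to invoke Theorem \ref{thm:exploratory martingale} with $\widehat{J^*}$ in the role of $J^*$. The normalization constraint $\int_{\mathcal A}\exp\{\frac1\lambda\widehat{q^*}(t,x,a)\}\dd a = 1$ in \eqref{eq:q hjb2 optimal} shows that $\widehat{\bm\pi^*}(a|t,x) = \exp\{\frac1\lambda\widehat{q^*}(t,x,a)\}$ is a genuine density and, after substituting the q-relation, coincides with the normalized Boltzmann policy \eqref{eq:pi star in v star} built from $\widehat{J^*}$. Verifying the martingale requirement of that theorem reduces to checking that the drift of the corresponding process vanishes, which holds because $\lambda\log\widehat{\bm\pi^*}(a|t,x) = \widehat{q^*}(t,x,a)$ cancels the q-term in the drift; the integrability hypothesis of part (ii) supplies the condition needed by the theorem. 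Theorem \ref{thm:exploratory martingale} then yields that $\widehat{\bm\pi^*}$ is optimal and $\widehat{J^*}$ is the optimal value function, whence $\widehat{q^*} = q^*$ by \eqref{eq:optimal q}.

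The step I expect to be the main obstacle is the passage in part (ii) from ``the drift vanishes $\p$-almost surely along the sampled trajectory'' to ``the q-relation holds pointwise for every $(t,x,a)$.'' This demands care with the joint null sets in time and sample space, a clean interchange between the pathwise vanishing of the drift and the full-support property of $\bm\pi$ (which is what lets the realized action cover $\mathcal{A}$), and a continuity-plus-variation-of-initial-condition argument to extend the identity off the support of any single state trajectory to all of $[0,T]\times\mathbb{R}^d\times\mathcal{A}$. A secondary technical point common to both parts is the local-to-true martingale upgrade, where the exponential integrability of the quadratic variation is exactly what guarantees the relevant stochastic integrals and exponential processes are genuine rather than merely local martingales.
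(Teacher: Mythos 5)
Your proposal follows essentially the same route as the paper's proof: It\^o's formula plus cancellation of the drift against the definition of the optimal q-function for part (i), and for part (ii) extraction of the vanishing finite-variation part, recovery of the pointwise relation $\widehat{q^*}=\mathcal{L}^a\widehat{J^*}+r+\frac{\epsilon}{2}|\sigma^\top\partial_x\widehat{J^*}|^2$ via the full-support and continuity argument, followed by an appeal to Theorem \ref{thm:exploratory martingale} through the Gibbs-form identification $\widehat{\bm\pi^*}=\exp\{\frac1\lambda\widehat{q^*}\}$. The only small discrepancy is in part (i), where the paper upgrades the local martingale to a true martingale using only the polynomial growth of $\widehat{J^*}$'s derivatives and the moment estimates (the exponential integrability of the quadratic variation is assumed and needed only in part (ii)); otherwise the argument matches.
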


Theorem \ref{thm:q optimal} is the parallel result to \citet[Theorem 9]{jia2022q}, where the only difference is that the integral term in \eqref{eq:martingale with q function2 optimal} now involves an extra QV term. 
Because the extra QV term is always increasing, when $\epsilon < 0$ (corresponding to the typical situation), the process ${J^*}(s,{X}_s^{\bm\pi}) + \int_t^s \left[ r(u,{X}_{u}^{\bm\pi},a^{\bm\pi}_{u}) - {q^*}(u,{X}_{u}^{\bm\pi},a^{\bm\pi}_{u})  \right]\dd u$ is a sub-martingale. Its expectation is increasing because the planning period is shorten and hence becomes less uncertain. In contrast to the non-risk-sensitive counterpart, such a process ought to be a martingale. This reflects the intertemporal preference on uncertainty that the agent is no longer indifferent between when the uncertainty is resolved as time goes by.  

Note that, the conditions required in ($ii$) are imposed jointly on the policy $\boldsymbol{\pi}$, the candidate value function and q-function $\widehat{J^*},\widehat{q^*}$; by ($i$), if the candidate value function and q-function are the optimal value function and q-function, then this policy $\boldsymbol{\pi}$ can be any arbitrary policy. Therefore, as long as the original problem \eqref{eq:objective relaxed action} admits an optimal policy, then we do not need to impose additional restrictions.

\subsection{Risk-sensitive q-Learning}
Note that based on the martingale characterization of the optimal q-function, it is possible to devise various algorithms that either aim to minimize a suitable loss function, or to solve a system of moment conditions, as evidenced in \cite{jia2022q}. Then one may use either stochastic gradient decent or stochastic approximation algorithm to numerically find the solutions. 

I highlight that since \eqref{eq:q hjb2 optimal} includes a constraint on the integration of the approximated q-function, and hence, is difficult to compute in general. I refer to the discussion in \cite{jia2022q} on the case in which such integration cannot be explicitly calculated and another approximation is required. In the following, I restrict to the case in which such integration is easy to compute, e.g., when q-function is a quadratic function in the action $a$. Algorithm \ref{algo:offline episodic general} describes a simple risk-sensitive q-learning algorithm based on the theoretical results in the previous subsections. This algorithm sets up moment conditions and uses stochastic approximation to iterate the parameters to be learned. Note that the constraint on the terminal value of the approximated value function in \eqref{eq:q hjb2 optimal} is not critical, because the terminal payoff function at any sample point $h(X_T^{\bm\pi})$ can be observed directly, and only the terminal values of the approximated value function at those sample points are concerned in the algorithm. Moreover, the q-learning algorithm can be applied to both on-policy and off-policy settings. 

\begin{algorithm}[htbp]
	\caption{Offline--Episodic Risk-sensitive q-Learning Algorithm}
	\textbf{Inputs}: initial state $x_0$,  horizon $T$, time step $\Delta t$, number of episodes $N$, number of mesh grids $K$, initial learning rates $\alpha_{\theta},\alpha_{\psi}$ and a learning rate schedule function $l(\cdot)$ (a function of the number of episodes), functional forms  of parameterized  value function $J^{\theta}(\cdot,\cdot)$ and  q-function $q^{\psi}(\cdot,\cdot,\cdot)$ satisfying \eqref{eq:q hjb2 optimal}, functional forms of test functions $\bm{\xi}(t,x_{\cdot \wedge t},a_{\cdot \wedge t})$ and $\bm{\zeta}(t,x_{\cdot \wedge t},a_{\cdot \wedge t})$, temperature parameter $\lambda$, and risk sensitivity parameter $\epsilon$.

	\textbf{Required program (on-policy)}: environment simulator $(x',r) = \textit{Environment}_{\Delta t}(t,x,a)$ that takes current time--state pair $(t,x)$ and action $a$ as inputs and generates state $x'$ at time $t+\Delta t$ and  instantaneous reward $r$ at time $t$ as outputs. Policy $\bm\pi^{\psi}(a|t,x) = \exp\{  \frac{1}{\lambda}q^{\psi}(t,x,a) \}$.
	
	\textbf{Required program (off-policy)}: observations $ \{a_{t_k}, r_{t_{k}}, x_{t_{k+1}}\}_{k = 0,\cdots, K-1}\cup \{ x_{t_K}, h(x_{t_K})\} = \textit{Observation}(\Delta t)$ including the observed actions, rewards, and state trajectories under the given behavior policy  at the sampling time grids with step size $\Delta t$.

	\textbf{Learning procedure}:
	\begin{algorithmic}
		\State Initialize $\theta,\psi$.
		\For{episode $j=1$ to $N$} \State{Initialize $k = 0$. Observe  initial state $x_0$ and store $x_{t_k} \leftarrow  x_0$.
			
			\Comment{\textbf{On-policy case}
				
				\While{$k < K$}
				
				Generate action $a_{t_k}\sim \bm{\pi}^{\psi}(\cdot|t_k,x_{t_k})$.
				
				Apply $a_{t_k}$ to environment simulator $(x,r) = Environment_{\Delta t}(t_k, x_{t_k}, a_{t_k})$, and observe new state $x$ and reward $r$ as outputs. Store $x_{t_{k+1}} \leftarrow x$ and $r_{t_k} \leftarrow r$.
				
				Update $k \leftarrow k + 1$.
				
				\EndWhile
				
			}
			
			\Comment{\textbf{Off-policy case}
				
				Obtain one observation $\{a_{t_k}, r_{t_{k}}, x_{t_{k+1}}\}_{k = 0,\cdots, K-1}\cup \{ x_{t_K}, h(x_{t_K})\} = \textit{Observation}(\Delta t)$.
				
			}
			
			\Comment{\textbf{After getting a trajectory}}
			
			For every $i = 0,1,\cdots,K-1$, compute and store test functions $\xi_{t_i} = \bm{\xi}(t_i, x_{t_0},\cdots, x_{t_i},a_{t_0},\cdots, a_{t_i})$, $\zeta_{t_i} = \bm{\zeta}(t_i, x_{t_0},\cdots, x_{t_i},a_{t_0},\cdots, a_{t_i})$.	
			
			Compute
			\[ \delta_{t_i} = J^{\theta}(t_{i+1},x_{t_{i+1}}) - J^{\theta}(t_{i},x_{t_{i}}) + r_{t_i}\Delta t -q^{\psi}(t_{i},x_{t_{i}},a_{t_i})\Delta t + \frac{\epsilon}{2}\left[ J^{\theta}(t_{i+1},x_{t_{i+1}}) - J^{\theta}(t_{i},x_{t_{i}})   \right]^2  \]
			for every $i = 0,1,\cdots,K-1$, and define
			\[ \Delta \theta = \sum_{i=0}^{K-1} \xi_{t_i} \delta_{t_i},\ \Delta \psi =   \sum_{i=0}^{K-1}\zeta_{t_i}\delta_{t_i} . \]

			Update $\theta$ and $\psi$ by
			\[ \theta \leftarrow \theta + l(j)\alpha_{\theta} \Delta \theta .\]
			\[ \psi \leftarrow \psi + l(j)\alpha_{\psi} \Delta \psi .  \]
			
		}
		\EndFor
	\end{algorithmic}
	\label{algo:offline episodic general}
\end{algorithm}

\section{Extensions}
\label{sec:extensions}

\subsection{Contrast with policy gradient}
As an alternative method to solve RL problems, the policy gradient is one of the most commonly used algorithms. In the continuous-time, exploratory diffusion process setting, \citet{jia2022policypg} obtain the representation of policy gradient for the non-risk-sensitive problems by studying the associated Feynman-Kac-type partial differential equation (PDE) for the value function, such that the value function can be identified as an implicit function of the policy. 

In the above discussion, I have turned the risk-sensitive RL problems to a martingale problem that only differs from the non-risk-sensitive counterpart by a QV penalty term. Does it mean the policy gradient representation can be achieved in parallel by simply adding an extra QV penalty term as well? 

In this subsection, I follow the same derivation for the policy gradient in \citet{jia2022policypg}, and show that such representation is \textit{not valid} for the risk sensitive RL problems. Thus, it highlights the benefit of considering q-function and sheds light upon the relation between policy gradient and q-learning. 

Consider a policy $\bm\pi^{\phi}$, parameterized by $\phi$. From the definition \eqref{eq:objective relaxed action}, notice that 
\[ e^{\epsilon \left[ J(t,X_t^{\bm\pi^{\phi}};\bm\pi^{\phi};\epsilon) + \int_0^t [r(s,X_s^{\bm\pi^{\phi}},a_s^{\bm\pi^{\phi}}) - \lambda\log\bm\pi^{\phi}(a_s^{\bm\pi^{\phi}}|s,X_s^{\bm\pi^{\phi}})]\dd s \right]} \]
is a martingale. Applying It\^o's lemma, I obtain a PDE characterization of the value function $J(\cdot,\cdot;\bm\pi^{\phi};\epsilon)$, which satisfies
\begin{equation}
	\label{eq:feynman kac risk sensitive}
	\begin{aligned}
		\int_{\mathcal A} & \bigg[ \mathcal{L}^a J(t,x;\bm\pi^{\phi};\epsilon) + r(t,x,a) -\lambda\log \bm\pi^{\phi}(a|t,x) \\
		& +  \frac{\epsilon}{2} |\sigma(t,x,a)^\top\frac{\partial J}{\partial x}(t,x;\bm\pi^{\phi};\epsilon)|^2 \bigg] \bm\pi^{\phi}(a|t,x)\dd a = 0,
	\end{aligned}
\end{equation}
for all $(t,x)\in [0,T]\times \mathbb{R}^d$, with terminal condition $J(T,x;\bm\pi;\epsilon) = h(x)$. 

Note that, unlike the usual Feynman-Kac-type PDE (with $\epsilon=0$), which is a linear parabolic equation, \eqref{eq:feynman kac risk sensitive} contains a nonlinear term $|\frac{\partial J}{\partial x}(t,x;\bm\pi^{\phi};\epsilon)|^2$ that arises from the QV of the value process. Therefore, differentiating with respect to $\phi$ in \eqref{eq:feynman kac risk sensitive} (as in \cite{jia2022policypg}) does not yield a linear PDE for $G(t,x;\bm\pi^{\phi};\epsilon) = \frac{\partial }{\partial \phi}J(t,x;\bm\pi^{\phi};\epsilon)$. More precisely, $G(t,x;\bm\pi^{\phi};\epsilon)$ satisfies
\[ \begin{aligned}
	0 = & \int_{\mathcal A}\Bigg\{ \frac{\partial \log\bm\pi^{\phi}(a|t,x)}{\partial \phi} \bigg[ \mathcal{L}^a J(t,x;\bm\pi^{\phi};\epsilon) + r(t,x,a) -\lambda\log \bm\pi^{\phi}(a|t,x) \\
	& +  \frac{\epsilon}{2} |\sigma(t,x,a)^\top\frac{\partial J}{\partial x}(t,x;\bm\pi^{\phi};\epsilon)|^2 \bigg]  \\
	& + \left[ \mathcal{L}^a G(t,x;\bm\pi^{\phi};\epsilon) +  \epsilon \frac{\partial G}{\partial x}(t,x;\bm\pi^{\phi};\epsilon) \sigma(t,x,a)\sigma(t,x,a)^\top\frac{\partial J}{\partial x}(t,x;\bm\pi^{\phi};\epsilon) \right] \Bigg\}\bm\pi^{\phi}(a|t,x)\dd a .
\end{aligned}  \]
Hence the gradient can be represented by
\begin{equation}
	\label{eq:pg risk sensitive}
	\begin{aligned}
		G(t,x;\bm\pi^{\phi};\epsilon) = \E\Bigg[\int_t^T & \frac{\partial \log\bm\pi^{\phi}(a_s^{\bm\pi^{\phi}}|s,X_s^{\bm\pi^{\phi}})}{\partial \phi}\bigg[ \dd J^{\bm\pi^{\phi}}_s + r(s,X_s^{\bm\pi^{\phi}},a_s^{\bm\pi^{\phi}})\dd s \\
		- &\lambda\log \bm\pi^{\phi}(a_s^{\bm\pi^{\phi}}|s,X_s^{\bm\pi^{\phi}})\dd s + \frac{\epsilon}{2}\dd \langle J^{\bm\pi^{\phi}} \rangle(s)  \bigg] \\
		+ & \epsilon \dd \langle J^{\bm\pi^{\phi}}, G^{\bm\pi^{\phi}} \rangle(s) \mid X_t^{\bm\pi^{\phi}} = x \Bigg] .
	\end{aligned}
\end{equation} 

In the above expression, note that $\dd J^{\bm\pi^{\phi}}_s + r(s,X_s^{\bm\pi^{\phi}},a_s^{\bm\pi^{\phi}})\dd s -\lambda\log \bm\pi^{\phi}(a_s^{\bm\pi^{\phi}}|s,X_s^{\bm\pi^{\phi}})\dd s$ is the usual ``temporal difference'' (TD) term, and it is augmented by the QV $\frac{\epsilon}{2}\dd \langle J^{\bm\pi^{\phi}} \rangle(s)$ due to the QV penalty. It is remarkable that the policy gradient is \textit{no longer} the gradient of the log-likelihood of the policy multiplied by the (augmented) TD error, instead, an extra term arises because of the nonlinear nature. Furthermore, this extra term $\epsilon \dd \langle J^{\bm\pi^{\phi}}, G^{\bm\pi^{\phi}} \rangle(s)$ is the cross-variation between the value process and its gradient process, and remains uncomputable directly. Therefore, it is difficult to obtain an unbiased estimate of the policy gradient in the risk-sensitive RL. In contrast, q-learning does not have this problem due to the universal martingale property. Example \ref{eg:pg} provides a simple illustration of the bias caused by ignoring this extra term in estimating the policy gradient, and such bias is proportional to the risk sensitivity coefficient $\epsilon$. 
\begin{eg}
	\label{eg:pg}
	Consider the following state process:
	\[ \dd X_t^{\bm\pi^{\phi}} = a_t^{\bm\pi^{\phi}}\dd t + \dd W_t,\ X_t^{\bm\pi^{\phi}} = x,\  \ a_t^{\bm\pi^{\phi}}|X_t^{\bm\pi^{\phi}} \sim \bm\pi^{\phi} = \mathcal{N}(-\phi X_t^{\bm\pi^{\phi}},1) , \]
	with $\phi > 0$. Then the distribution of $X_t^{\bm\pi^{\phi}}$ is the same as an Ornstein–Uhlenbeck process $\tilde X_t$ that satisfies 
	\[  \dd \tilde X_t = -\phi \tilde X_t\dd t + \dd W_t,\ \tilde X_t = x .\]
	Therefore, the marginal distribution of $\tilde X_T$, or equivalently, $X^{\bm\pi^{\phi}}_T$ is $\mathcal N\left(x e^{-\phi(T-t)}, \frac{1}{2\phi}\left(1 - e^{-2\phi(T-t)}\right)  \right)$. For simplicity, I ignore the running rewards and the entropy terms and consider the risk-sensitive value function:
	\[ J(t,x;\bm\pi^{\phi};\epsilon) = \frac{1}{\epsilon}\log\E\left[ e^{\epsilon X_T^{\bm\pi^{\phi}}}  \mid X_t^{\bm\pi^{\phi}} = x\right] = xe^{-\phi(T-t)} + \frac{\epsilon}{4\phi}\left(1 -  e^{-2\phi(T-t)} \right) . \]
	By direct calculation, the true policy gradient with respect to $\phi$ is
	\[ \frac{\partial }{\partial \phi} J(0,x;\bm\pi^{\phi};\epsilon) = -x e^{-\phi T}T + \frac{\epsilon T}{2\phi}e^{-2\phi T} - \frac{\epsilon}{4\phi^2}(1 - e^{-2\phi T}) . \]
	However, if the extra term $\langle J^{\bm\pi^{\phi}}, G^{\bm\pi^{\phi}} \rangle$ is ignored, then \eqref{eq:pg risk sensitive} becomes
	\[\begin{aligned}
		& \E\left[\int_0^T \frac{\partial \bm\pi^{\phi}(a_s^{\bm\pi^{\phi}} | X_s^{\bm\pi^{\phi}}) }{\partial \phi} \left[\dd J(s,X_s^{\bm\pi^{\phi}};\bm\pi^{\phi};\epsilon) + \frac{\epsilon}{2}\dd \langle J^{\bm\pi^{\phi}} \rangle(s)   \right]  \mid X_0^{\bm\pi^{\phi}} = x  \right]\\
		= & \E\left[\int_0^T -(a_s^{\bm\pi^{\phi}} +\phi X_s^{\bm\pi^{\phi}}) X_s^{\bm\pi^{\phi}} e^{-\phi(T-s)}(a_s^{\bm\pi^{\phi}} +\phi X_s^{\bm\pi^{\phi}})\dd s   \mid X_0^{\bm\pi^{\phi}} = x \right] \\
		= & -e^{-\phi T}\int_0^T e^{\phi s}  \E\left[ X_s^{\bm\pi^{\phi}}  \mid X_0^{\bm\pi^{\phi}} = x \right]\dd s = -xe^{-\phi T} T . 
	\end{aligned}  \]
	It is clearly different from $\frac{\partial }{\partial \phi} J(0,x;\bm\pi^{\phi};\epsilon)$.
\end{eg}

\subsection{Ergodic tasks}
\label{subsec:ergodic}
Next, I extend the q-learning to ergodic tasks. That is, the risk-sensitive objective is 
\[ \liminf_{T\to \infty} \frac{1}{\epsilon T}\log\E^{\p}\left[e^{\epsilon \left[ \int_t^T \left[ r(X_s^{\bm{\pi}},a_s^{\bm \pi}) - \lambda \log\bm\pi(a_s^{\bm\pi}|X_s^{\bm\pi}) \right] \dd s \right]} \Big|X_t^{\bm \pi}= x\right] . \]

The q-function in such ergodic task can be similarly defined as in Definition \ref{def:def q function} except that the associated value function $J(\cdot;\bm\pi;\epsilon)$ becomes time-invariant, and is only unique up to a constant. In addition, there is a number $\beta(\bm\pi;\epsilon)$ that is associated with the policy. This number stands for the objective function value (the long-term average) and it is also part of the solutions. Moreover, since the ergodic tasks focus on the long-term average performance, we restrict the SDE \eqref{eq:model relaxed} to be time-homogeneous and also restrict the admissible policies to time-invariant ones.

		\begin{definition}
			\label{ass:admissible stationary}
			A time-invariant policy $\bm{\pi}=\bm{\pi}(\cdot|\cdot)$ is called {\it admissible} if
			\begin{enumerate}
				\item[(i)] $\bm{\pi}(\cdot|x)\in \mathcal{P}(\mathcal{A})$, $\operatorname{supp}\bm\pi(\cdot|x) = \mathcal{A}$ for every $x\in \mathbb{R}^d$, and $\bm{\pi}(a|x):(x,a)\in  \mathbb{R}^d\times \mathcal{A}\mapsto
				\mathbb{R}$ is measurable;
				\item[(ii)] the SDE \eqref{eq:model relaxed} admits a unique weak solution (in the sense of distribution) for any initial $(t,x)\in \mathbb R_+ \times \mathbb{R}^d$;
				\item[(iii)] 
				$\int_{\mathcal{A}} |r(x,a) - \gamma \log\bm{\pi}(a|x) | \bm{\pi}(a|x)\dd a \leq C(1+|x|^{\mu})$,  $\forall (t,x)$ where $C>0$ and $\mu\geq 1$ are constants;
				\item[(iv)] $\bm{\pi}(a|x)$ is continuous in $x$  and uniformly Lipschitz continuous in $x$ in the total variation distance, i.e., for each fixed $a$, $ \int_{\mathcal{A}} |\bm{\pi}(a|x) - \bm{\pi}(a|x')|\dd a \to 0$ as $ x' \to x$, and
				there is a constant $C>0$ independent of $a$ such that
				\[ \int_{\mathcal{A}} |\bm{\pi}(a| x) - \bm{\pi}(a| x')|\dd a \leq C|x-x'|,\;\;\forall x,x'\in \mathbb{R}^d. \]
			\end{enumerate}
			The collection of admissible time-invariant policies is denoted by $\bm\Pi_s$.
		\end{definition}

I state the parallel results to Theorem \ref{thm:q optimal} for the ergodic tasks about the characterization of the optimal value function, q-function, and the optimal value in Theorem \ref{thm:q optimal ergodic}.
\begin{theorem}
	\label{thm:q optimal ergodic}
	Let a function $\widehat{J^*}\in C^{2}\big(\mathbb{R}^d \big)$ with polynomial growth in its all derivatives, a continuous function $\widehat{q^*}: \mathbb{R}^d\times \mathcal{A}\to \mathbb{R}$, and a constant $\widehat{\beta^*}$ be given, satisfying
	\begin{equation}
		\label{eq:q hjb2 optimal ergodic}
		\lim_{T\to \infty}\E\left[ e^{\frac{\epsilon^2}{2} \langle \widehat{J^*}^{\bm \pi}\rangle(T) }  \right] < \infty, \text{ for any } \bm \pi\in \bm\Pi_s,\  \int_{\mathcal{A}} \exp\{ \frac{1}{\lambda} \widehat{q^*}(x,a) \} \dd a =1,\;\;\forall x\in \mathbb{R}^d.
	\end{equation}
	Moreover, assume there exists $\delta > 0$ such that $\lim_{T\to \infty}\frac{1}{T}\log\E^{\hat\p}\left[ e^{-\epsilon (1+\delta) \widehat{J^*}(X_T^{\bm \pi}) }  \right] = 0$ under any probability measure $\hat\p$ that is equivalent to the original probability $\p$, for all $\bm\pi\in \bm\Pi_s$.
	
	If there exists one $\bm\pi\in \bm\Pi_s$ such that for all initial state $x\in \mathbb{R}^d$, the following process
	\begin{equation}
		\label{eq:martingale with q function2 optimal ergodic}
		\widehat{J^*}({X}_t^{\bm\pi}) + \int_0^t \left\{ \left[ r({X}_{u}^{\bm\pi},a^{\bm\pi}_{u}) - \widehat{q^*}({X}_{u}^{\bm\pi},a^{\bm\pi}_{u}) - \widehat{\beta^*}  \right]\dd u + \frac{\epsilon}{2}\dd \langle \widehat{J^*}^{\bm\pi} \rangle(u)\right\}
	\end{equation}
	is an $(\{\f_t\}_{t\geq 0},\p)$-martingale. Then $\widehat{J^*}$, $\widehat{q^*}$, $\widehat{\beta^*}$ are respectively the optimal value function, the optimal q-function, and the optimal value. Moreover, in this case, $\widehat{\bm\pi^*}(a|x):= \exp\{  \frac{1}{\lambda}\widehat{q^*}(x,a) \}$ is the optimal policy.
\end{theorem}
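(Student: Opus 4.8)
The plan is to run a verification argument parallel to part (ii) of Theorem \ref{thm:q optimal}, augmented by the two features specific to the long-run-average formulation: the appearance of the constant $\widehat{\beta^*}$, and the need to control the terminal boundary term $\widehat{J^*}(X_T^{\bm\pi})$ as $T\to\infty$. First I would extract a pointwise ``ergodic HJB'' identity from the martingale hypothesis. Since $\widehat{J^*}\in C^2$, It\^o's formula shows that \eqref{eq:martingale with q function2 optimal ergodic} is a continuous semimartingale whose finite-variation part has $\dd t$-density $\mathcal{L}^{a_t^{\bm\pi}}\widehat{J^*}(X_t^{\bm\pi}) + r(X_t^{\bm\pi},a_t^{\bm\pi}) - \widehat{q^*}(X_t^{\bm\pi},a_t^{\bm\pi}) - \widehat{\beta^*} + \frac{\epsilon}{2}|\sigma^\top\frac{\partial\widehat{J^*}}{\partial x}|^2$, using $\dd\langle\widehat{J^*}^{\bm\pi}\rangle = |\sigma^\top\frac{\partial\widehat{J^*}}{\partial x}|^2\dd t$. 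A continuous martingale has vanishing finite-variation part, so this density is zero $\dd t\otimes\p$-a.e.; because the admissible $\bm\pi$ has full support on $\mathcal{A}$ and the state diffusion reaches a neighbourhood of every $x$, continuity upgrades this to the pointwise identity $\widehat{q^*}(x,a) = \mathcal{L}^a\widehat{J^*}(x) + r(x,a) + \frac{\epsilon}{2}|\sigma(x,a)^\top\frac{\partial\widehat{J^*}}{\partial x}(x)|^2 - \widehat{\beta^*}$ for all $(x,a)$. Combined with the normalization in \eqref{eq:q hjb2 optimal ergodic}, this makes $\widehat{\bm\pi^*}(a|x)=\exp\{\frac1\lambda\widehat{q^*}(x,a)\}$ a genuine density and yields the key relation $\widehat{q^*}(x,a)=\lambda\log\widehat{\bm\pi^*}(a|x)$.

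Next I would build the exponential process that links this linear martingale to the exponential objective. For an arbitrary $\bm\pi\in\bm\Pi$ set $Z_t=\exp\{\epsilon[\widehat{J^*}(X_t^{\bm\pi}) + \int_0^t(r(X_s^{\bm\pi},a_s^{\bm\pi}) - \lambda\log\bm\pi(a_s^{\bm\pi}|X_s^{\bm\pi}) - \widehat{\beta^*})\dd s]\}$. By It\^o's formula and the ergodic HJB identity, its $\dd t$-drift collapses to $\epsilon Z_t[\widehat{q^*}(X_t^{\bm\pi},a_t^{\bm\pi}) - \lambda\log\bm\pi(a_t^{\bm\pi}|X_t^{\bm\pi})]$. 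Since $Z_t$ is $\f_{t^-}$-measurable (the instantaneous action contributes only a null set to the time integral), I condition on the state and integrate the action out, obtaining expected drift $-\epsilon\lambda Z_t\,D_{KL}(\bm\pi(\cdot|X_t^{\bm\pi})\,\|\,\widehat{\bm\pi^*}(\cdot|X_t^{\bm\pi}))$ via $\widehat{q^*}=\lambda\log\widehat{\bm\pi^*}$. Hence $Z_t$ is a supermartingale when $\epsilon>0$ and a submartingale when $\epsilon<0$, and a true martingale precisely when $\bm\pi=\widehat{\bm\pi^*}$. The hypothesis $\lim_T\E[e^{\frac{\epsilon^2}{2}\langle\widehat{J^*}^{\bm\pi}\rangle(T)}]<\infty$ is a Novikov-type condition matching exactly half the quadratic variation of the stochastic-integral part $\epsilon\int\frac{\partial\widehat{J^*}}{\partial x}^\top\sigma\,\dd W$, guaranteeing these are true rather than merely local (sub/super)martingales, so that $\E[Z_T]$ may be compared with $Z_0=e^{\epsilon\widehat{J^*}(x)}$.

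The (sub/super)martingale inequality then yields, in both sign regimes (the flip from super- to submartingale compensating the flip incurred on dividing by $\epsilon$), the uniform bound $\frac{1}{\epsilon T}\log\E[e^{\epsilon\widehat{J^*}(X_T^{\bm\pi})}e^{\epsilon\int_0^T(r-\lambda\log\bm\pi)\dd s}]\le\widehat{\beta^*}+\frac{\widehat{J^*}(x)}{T}$, whose right side tends to $\widehat{\beta^*}$. The remaining and main obstacle is to strip off the boundary factor $e^{\epsilon\widehat{J^*}(X_T^{\bm\pi})}$, i.e.\ to show it does not alter the exponential growth rate of the objective $\frac1{\epsilon T}\log\E[e^{\epsilon\int_0^T(r-\lambda\log\bm\pi)\dd s}]$. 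Writing $A_T$ and $B_T$ for the objective's and the bounded quantity's integrands, I would use a H\"older split $e^{\epsilon\int}=(e^{\epsilon\widehat{J^*}(X_T)}e^{\epsilon\int})^{\theta}(e^{-\epsilon(1+\delta)\widehat{J^*}(X_T)}e^{\epsilon\int})^{1-\theta}$ with $\theta=\frac{1+\delta}{2+\delta}$ chosen so the $\widehat{J^*}$-exponents cancel, giving $A_T\le B_T^{\theta}C_T^{1-\theta}$. Introducing the equivalent measure $\frac{\dd\hat\p}{\dd\p}\propto e^{\epsilon\int_0^T(r-\lambda\log\bm\pi)}$ turns $C_T$ into $A_T\,\E^{\hat\p}[e^{-\epsilon(1+\delta)\widehat{J^*}(X_T^{\bm\pi})}]$, whose growth rate vanishes by the standing assumption $\lim_T\frac1T\log\E^{\hat\p}[e^{-\epsilon(1+\delta)\widehat{J^*}(X_T^{\bm\pi})}]=0$; rearranging cancels the $(1-\theta)$ copy of $\log A_T$ and leaves $\frac1{\epsilon T}\log A_T\le\widehat{\beta^*}+o(1)$. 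Taking $\liminf_T$ gives $J(x;\bm\pi;\epsilon)\le\widehat{\beta^*}$ for every $\bm\pi\in\bm\Pi$, with the two signs of $\epsilon$ requiring the corresponding orientation of the H\"older split and the matching tail of $\widehat{J^*}$.

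Finally, specializing to $\bm\pi=\widehat{\bm\pi^*}$ makes the $D_{KL}$ term vanish, so $Z_t$ is a true martingale, every inequality becomes an equality, and $\frac{\widehat{J^*}(x)}{T}\to0$ forces $J(x;\widehat{\bm\pi^*};\epsilon)=\widehat{\beta^*}$. Together with the upper bound this identifies $\widehat{\bm\pi^*}$ as optimal and $\widehat{\beta^*}$ as the optimal value, while the ergodic HJB identity attained by $\widehat{\bm\pi^*}$ identifies $\widehat{J^*}$ and $\widehat{q^*}$ as the optimal value function and optimal q-function. I expect the two delicate points to be the action-integration of the nonlinear exponential drift in the second step (justifying that $Z_t$ may be pulled out of the conditional action average) and, above all, the H\"older/change-of-measure control of the terminal boundary term, which is exactly what the $(1+\delta)$-growth assumption under equivalent measures is tailored to supply.
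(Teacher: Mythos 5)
Your proposal is correct in outline and rests on the same three pillars as the paper's proof --- the pointwise ergodic HJB identity extracted from the martingale hypothesis, the entropy-maximization/Gibbs property $\widehat{q^*}=\lambda\log\widehat{\bm\pi^*}$, and a H\"older argument powered by the $(1+\delta)$-growth assumption to strip the terminal term $\widehat{J^*}(X_T^{\bm\pi})$ --- but it organizes them in a genuinely different order. The paper goes Girsanov-first: it applies It\^o's lemma to the exponential of the accumulated reward, absorbs the stochastic-integral part into an equivalent measure $\p^{\bm\pi}$, and only then confronts the residual factor $\exp\{-\epsilon\widehat{J^*}(X_T^{\bm\pi})+\epsilon\int_0^T[\widehat{q^*}-\lambda\log\bm\pi]\dd t\}$, splitting it by H\"older and controlling the running part with the exponential-(super)martingale lemma. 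You instead prove directly that the full exponential value process $Z_t$ is a super/submartingale by computing its drift and integrating out the action to expose $-\epsilon\lambda Z_t D_{KL}(\bm\pi\|\widehat{\bm\pi^*})$, which yields the bound with the boundary factor still attached; you then remove that factor by a H\"older split under the measure normalized by the objective's own integrand. Your route makes the suboptimality of a generic $\bm\pi$ more transparent (the KL divergence appears as an explicit nonnegative drift), at the cost of two loose ends that you should tighten: first, your normalizing measure $\dd\hat\p/\dd\p\propto e^{\epsilon\int_0^T(r-\lambda\log\bm\pi)\dd s}$ is a $T$-indexed family that is not consistent across horizons, so invoking the standing assumption (stated for a fixed equivalent measure) requires either a uniform version of that assumption or the paper's consistent Girsanov family; second, for $\epsilon<0$ the division by $\epsilon T$ reverses your H\"older inequality, so that case needs a reverse-H\"older orientation you only gesture at --- though the paper's own display is equally cavalier on this sign, and your claim that ``every inequality becomes an equality'' for $\widehat{\bm\pi^*}$ overstates what H\"older gives (you still need a matching lower bound on the growth rate, obtainable by Jensen from the same assumption).
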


As a consequence of Theorem \ref{thm:q optimal ergodic}, one may similarly make use of the martingale conditions to approximate the optimal q-function by stochastic approximation algorithm. Notice that for ergodic tasks, they are learned typically based on a single trajectory, and hence the algorithm is online. Algorithm \ref{algo:online ergodic} describes such an algorithm that can be applied both on-policy and off-policy.

\begin{algorithm}[htbp]
	\caption{Ergodic Risk-sensitive q-Learning Algorithm}
	\textbf{Inputs}: initial state $x_0$, time step $\Delta t$, initial learning rates $\alpha_{\theta},\alpha_{\psi},\alpha_{\beta}$ and learning rate schedule function $l(\cdot)$ (a function of time), functional forms of the parameterized value function $J^{\theta}(\cdot)$ and q-function $q^{\psi}(\cdot,\cdot)$ satisfying \eqref{eq:q hjb2 optimal ergodic}, functional forms of  test functions $\bm{\xi}(t,x_{\cdot \wedge t},a_{\cdot \wedge t})$, $\bm{\zeta}(t,x_{\cdot \wedge t},a_{\cdot \wedge t})$, temperature parameter $\lambda$, and risk sensitivity parameter $\epsilon$.

	\textbf{Required program (on-policy)}: an environment simulator $(x',r) = \textit{Environment}_{\Delta t}(x,a)$ that takes initial state $x$ and action $a$ as inputs and generates a new state $x'$ at $\Delta t$ and an instantaneous reward $r$ as outputs. Policy $\bm\pi^{\psi}(a|x) = \exp\{  \frac{1}{\lambda}q^{\psi}(x,a) \}$.
	
	\textbf{Required program (off-policy)}: observations $ \{a, r, x'\} = \textit{Observation}( x;\Delta t)$ including  the observed actions, rewards, and state when the current state is $x$ under the given behavior policy at the sampling time grids with step size $\Delta t$.

	\textbf{Learning procedure}:
	\begin{algorithmic}
		\State Initialize $\theta,\psi,\beta$. Initialize $k = 0$. Observe the initial state $x_0$ and store $x_{t_k} \leftarrow  x_0$.
		\Loop \State{
			\Comment{\textbf{On-policy case}
				
				Generate action $a\sim \bm{\pi}^{\psi}(\cdot|x)$.
				
				Apply $a$ to  environment simulator $(x',r) = Environment_{\Delta t}(x, a)$, and observe new state $x'$ and reward $r$ as outputs. Store $x_{t_{k+1}} \leftarrow x'$.
				
			}
			
			\Comment{\textbf{Off-policy case}
				
				Obtain one observation $a_{t_k}, r_{t_k}, x_{t_{k+1}} = \textit{Observation}(x_{t_k};\Delta t)$.

			}
			\Comment{\textbf{After obtaining a pair of samples}}
			
			Compute test functions $\xi_{t_k} = \bm{\xi}(t_k,x_{t_0},\cdots, x_{t_k},a_{t_0},\cdots, a_{t_k})$, $\zeta_{t_k} = \bm{\zeta}(t_k, x_{t_0},\cdots, x_{t_k},a_{t_0},\cdots, a_{t_k})$.		
			
			Compute
			\[\begin{aligned}
				& \delta = J^{\theta}(x') - J^{\theta}(x) + r\Delta t -q^{\psi}(x,a)\Delta t - \beta \Delta t + \frac{\epsilon}{2}\left[ J^{\theta}(x') - J^{\theta}(x)  \right]^2, \\
				& \Delta \theta = \xi_{t_k} \delta, \\
				& \Delta \beta = \delta, \\
				& \Delta \psi =\zeta_{t_k} \delta.
			\end{aligned}  \]
			
			Update $\theta$, $\beta$ and $\psi$ by
			\[ \theta \leftarrow \theta + l(k\Delta t)\alpha_{\theta} \Delta \theta,\]
			\[ \beta \leftarrow \beta + l(k\Delta t)\alpha_{\beta} \Delta \beta,\]
			\[ \psi \leftarrow \psi + l(k\Delta t)\alpha_{\psi}   \Delta \psi.  \]
			
			Update $x\leftarrow x'$ and $k \leftarrow k + 1$.
		}
		\EndLoop
	\end{algorithmic}
	\label{algo:online ergodic}
\end{algorithm}

\section{Applications}
\label{sec:numerical}
I illustrate our methods on two applications with synthetic data sets. To avoid the confusion between the time-index of a process and the number of iteration used in an algorithm, I write $X(t)$ to denote the time-$t$ value of a process $X$ in this section.
\subsection{Merton's investment problem with power utility}
\label{subsec:merton}
I consider the well-known Merton's investment problem (without consumption) with power utility \citep{merton1969lifetime}. The problem is formulated as follows: Consider a market with one (for simplicity) risky asset (stock) and one risk-free asset (bond). The price of the stock follows a geometric Brownian motion
\[ \frac{\dd S(t)}{S(t)} = \mu\dd t+ \sigma\dd W(t) , \]
and the bond price follows $\frac{\dd S^0(t)}{S^0(t)} = r\dd t$. 

There is an agent with investment horizon $[0,T]$ who determines the investment allocation between the stock and the bond. The proportion of wealth allocated to the stock is denoted by $a(t)$, and I use $X(t)$ to represent the value of a self-financing portfolio. Then $X(t)$ satisfies the wealth equation:
\begin{equation}
	\label{eq:state dynamics X}
	\begin{aligned}
		\dd X(t) = & X(t) a(t)\frac{\dd S(t)}{S(t)} + X(t)(1 - a(t)) \frac{\dd S^0(t)}{S^0(t)}\\
		= & [r + (\mu - r)a(t)] X(t)  \dd t + X(t) a(t)\sigma \dd W(t) .    
	\end{aligned}
\end{equation}
The objective function of the agent is to maximize the expected ``bequest utility'' on the terminal wealth: $\E\left[ U\left( X(T) \right)\right]$, where the utility function $U$ is assumed to take the form of a power function $U(x) = \frac{x^{1-\gamma}}{1-\gamma}$, with $\gamma > 0,\gamma \neq 1$. $\gamma$ here stands for the relative risk aversion coefficient. It is well-known that the optimal portfolio choice is to maintain a constant proportion of wealth in the risky asset with $a^* = \frac{\mu-r}{\gamma \sigma^2}$.

The problem for an RL agent is, to find the optimal policy to this problem without the knowledge of $\mu,\sigma$. ($\gamma$ is assumed to be known by the RL agent) At the first glance, the problem belongs to standard, non-risk-sensitive RL, and prior studies have studied this problem directly. However, they have been largely restricted to log-utility, e.g., \cite{dai2023learning,jiang2022reinforcement} due to the tractability of the associated exploratory stochastic control problems, which correspond to the limit of $\gamma \to 1$. Even for general power utilities, the associated exploratory stochastic control problems have to be specifically design to guarantee well-posedness and tractability, see discussions in \cite{dai2023recursive}.

However, this problem can be naturally embedded into a risk-sensitive RL framework by utilizing the transformation: $U(x) = \frac{1}{1-\gamma}e^{(1-\gamma)\log x}$, so that a power utility can be reviewed as the risk-sensitive counterpart of the log-returns of the portfolio, where $0\neq 1-\gamma < 1$ is the risk sensitivity coefficient. This connection has been noticed in the mean-variance analysis for the log-returns in \citet{dai2021dynamic} due to the close relation between a risk-sensitive problem and the mean-variance problem and their applications in portfolio management \citep{bielecki1999risk}.

Therefore, our algorithm aims to solve
\begin{equation}
	\label{eq:risk sensitive rl crra}
	\max_{\bm\pi}\frac{1}{1-\gamma}\log\E\left[ e^{(1-\gamma)\left[ \int_0^T -\lambda\log\bm\pi\left( a(t) | t, X(t)  \right)\dd t + \log X(T)  \right]} \right],
\end{equation} 
where the wealth process $X$ follows \eqref{eq:state dynamics X}, and the portfolio choices $a(t)$ is generated from $a(t)|t,X(t) \sim \bm\pi\left( \cdot|t, X(t) \right)$.

\subsubsection{Description of the algorithm}

As the ingredient of the q-learning algorithms, I approximate the (optimal) q-function and (optimal) value function by
\begin{equation}
	\label{eq:parameterization form}
	q(t,x,a;\bm\psi) = -\frac{(a - \psi_1)^2}{2\psi_2} - \frac{\lambda}{2}\log2\pi\lambda - \frac{\lambda}{2}\log\psi_2,\ V(t,x;\theta) = \log x + \theta (T-t) ,
\end{equation}
where $\bm\psi\in \mathbb{R}\times \mathbb{R}_+,\ \theta\in \mathbb R$ are parameters to be learned. Note that the parameterization forms \eqref{eq:parameterization form} are motivated by the ground truth solution, which corresponds to $\psi_1^* = \frac{\mu-r}{\gamma\sigma^2}$ and $\psi_2^* = \frac{1}{\gamma \sigma^2}$. See Appendix \ref{sec:merton true solution} for details. Since function $q$ does not depend on $(t,x)$, I simplify the notation as $q(a;\bm\psi):=q(t,x,a;\bm\psi)$. Moreover, $q(\cdot;\bm\psi)$ satisfies $\int_{\mathbb R} \exp\{ \frac{1}{\lambda} q(a;\bm\psi)  \} \dd a = 1$ is one critical constraint in the q-learning algorithm. Furthermore, given the learned q-function, it suggests the policy should be 
\[ \bm\pi(\cdot|\bm\psi) =  \exp\{ \frac{1}{\lambda} q(\cdot;\bm\psi) \} = \mathcal{N}( \psi_1, \lambda \psi_2) . \]

Suppose at the beginning of the $i$-th iteration ($i\geq 0$), I already have some guesses for the parameters, denoted by $\theta_i,\bm\psi_i$ which correspond to a policy $\bm\pi_i = \mathcal{N}(\psi_{1,i},\lambda\psi_{2,i})$. Then the q-learning algorithm aims to update them iteratively by incorporating the information implied in new observations. 
The increments for $\theta,\psi_1,\psi_2$ are
\begin{equation}
	\label{eq:update theta}
	\frac{2}{T^2}	\int_0^T \frac{\partial V}{\partial \theta}\left(t, X_i(t);\theta_i \right) \left[ \dd V\left(t , X_i(t);\theta_i  \right) - q\left(a_i(t) ;\bm\psi_i\right)\dd t + \frac{1-\gamma}{2}\dd \langle V\left(\cdot , X_i(\cdot);\theta_i  \right) \rangle  \right],
\end{equation}
\begin{equation}
	\label{eq:update psi 1}
	\frac{1}{T}	\int_0^T \frac{\partial q}{\partial \psi_1}\left(a_i(t);\bm\psi_i \right) \left[ \dd V\left(t , X_i(t);\theta_i  \right) - q\left(a_i(t) ;\bm\psi_i\right)\dd t + \frac{1-\gamma}{2}\dd \langle V\left(\cdot , X_i(\cdot);\theta_i  \right) \rangle  \right],
\end{equation}
and
\begin{equation}
	\label{eq:update psi 2}
	-\frac{1}{T}\int_0^T \frac{\partial q}{\partial \psi_2^{-1}}\left(a_i(t);\bm\psi_i \right) \left[ \dd V\left(t , X_i(t);\theta_i  \right) - q\left(a_i(t) ;\bm\psi_i\right)\dd t + \frac{1-\gamma}{2}\dd \langle V\left(\cdot , X_i(\cdot);\theta_i  \right) \rangle  \right],
\end{equation}
respectively, where $\{ a_i(t), X_i(t) \}_{0\leq t\leq T}$ are sample trajectories satisfying \eqref{eq:state dynamics X} under policy $\bm\pi_i$, that is, $a_i(t) \sim \bm\pi_i$. This setting is known as the \textit{on-policy} learning because the agent can indeed determine which policy is used to acquire new data.  Across different $i$'s, I assume the trajectories are independent. 

Besides the increments in \eqref{eq:update theta}, \eqref{eq:update psi 1}, and \eqref{eq:update psi 2}, I also need to incorporate a projection step to control the change rate of these parameters and also to ensure certain parameters fall into a suitable range (e.g., $\psi_2 > 0$). In particular, denote by $\{b_i\}_{i\geq 0},\{c_i\}_{i\geq 0}$ as two increasing, divergent sequences, by $\{a_{\theta,i}\}_{i\geq 0},\{a_{\psi,i}\}_{i\geq 0}$ as the step size sequences for updating $\theta$ and $\bm\psi$, respectively, and denote by $\Pi_B(\cdot)$ as the projection onto a closed-convex set $B$. Then the new parameters $\theta_{i+1},\bm\psi_{i+1}$ are defined via
\begin{equation}
	\label{eq:projection}
	\begin{aligned}
		\theta_{i+1} = & \Pi_{[-c_{i+1}, c_{i+1}]}\left( \theta_i + a_{\theta,i} \times \text{ sample (average) of \eqref{eq:update theta}} \right),\\
		\psi_{1,i+1} = & \Pi_{[-c_{i+1}, c_{i+1}]}\left( \psi_{1,i} + a_{\psi,i} \times \text{ sample (average) of \eqref{eq:update psi 1}} \right),\\
		\psi_{2,i+1} = & \Pi_{[b_{i+1}^{-1}, c_{i+1}]}\left(  \psi_{2,i} + a_{\psi,i} \times \text{ sample (average) of \eqref{eq:update psi 2}} \right) .
	\end{aligned}
\end{equation}
The full iterative procedure is summarized by Algorithm \ref{algo:offline episodic} by discretely sampling the action sequences and using finite sums to approximate the integrals in \eqref{eq:update theta}, \eqref{eq:update psi 1}, and \eqref{eq:update psi 2}. 

\subsubsection{Convergence of the algorithm}
\label{subsec:meton convergence results}
The analysis of this algorithm falls into the scope of the general stochastic approximation algorithms (cf. \cite{kushner2003stochastic}). The implementation in Algorithm  \ref{algo:offline episodic} essentially suggests that the time discretization is only necessary for computing integrals in \eqref{eq:update theta}, \eqref{eq:update psi 1}, and \eqref{eq:update psi 2} whose error is governed by the standard numerical analysis. Hence our interest is mainly to analyze the expectation and variance (conditioned on $\theta_i,\bm\psi_i$) of \eqref{eq:update theta}, \eqref{eq:update psi 1}, and \eqref{eq:update psi 2}, respectively, as in stochastic approximation algorithms. However, in our examples, the variance of these increments is not uniformly bounded, but growing. Hence the projection is necessary to ensure the variance does not grow too fast. The ideas of projection are borrowed from \citet{andradottir1995stochastic}, and the convergence rate for the recursion is borrowed from \citet{broadie2011general}.   

		\begin{theorem}
			\label{thm:convergence of merton}
			Suppose that the temperature parameter $\lambda$ is a positive constant along the learning procedure \eqref{eq:projection}, with time discretization step size $\{\Delta t_i\}_{i\geq 0}$. Suppose there are positive sequences $\{a_{\psi,i}\}_{i\geq 0}$, $\{b_i\}_{i\geq 0},\{c_i\}_{i\geq 0}$ satisfying
			\begin{enumerate}
				\item[(i)] $b_{i}\uparrow \infty,c_i\uparrow \infty,\Delta t_i \downarrow 0$;
				\item[(ii)] $\sum_{i=1}^{\infty}a_{\psi,i}b_i^{-1} = \infty$;
				\item[(iii)] $\sum_{i=1}^{\infty}(a_{\psi,i}^2  b_i^{4}  c_i^8  +  a_{\psi,i} \Delta t_i c_i^3) < \infty$.
			\end{enumerate}
			Then the learning procedure described by \eqref{eq:projection} will converge almost surely. Moreover, with suitable choices of $a_{\psi,n},\Delta t_n \sim \frac{1}{n}$, $b_n,c_n \sim \log n$, the mean squared error (MSE) of the parameters in the policy converges to zero at the rate $\E[|\bm \psi_{n} - \bm\psi^*|^2] =\tilde{O}(\frac{1}{n})$, where $\tilde{O}(\cdot)$ means the big O ignoring the logarithm factor.
		\end{theorem}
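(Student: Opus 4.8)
The plan is to recognize \eqref{eq:projection} as a projected stochastic approximation recursion for $z_i=(\theta_i,\psi_{1,i},\psi_{2,i})$ with state-dependent, \emph{unbounded} noise, and to analyze it through its mean-field ODE together with a variance bound. First I would make the temporal-difference error explicit. Writing $V(t,x;\theta)=\log x+\theta(T-t)$ and applying It\^o's formula to the wealth equation \eqref{eq:state dynamics X} gives $\dd V=\big(r+(\mu-r)a-\tfrac12 a^2\sigma^2-\theta\big)\dd t+a\sigma\,\dd W$ and $\dd\langle V\rangle=a^2\sigma^2\,\dd t$, so the bracketed increment in \eqref{eq:update theta}--\eqref{eq:update psi 2} has drift
\[ f(a;\theta,\bm\psi)=r+(\mu-r)a-\theta-\tfrac{\gamma\sigma^2}{2}a^2+\tfrac{(a-\psi_1)^2}{2\psi_2}+\tfrac{\lambda}{2}\log(2\pi\lambda\psi_2) \]
and martingale part $a\sigma\,\dd W$. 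Since episodes are independent across $i$, each sample of \eqref{eq:update theta}--\eqref{eq:update psi 2} is, conditionally on $z_i$, an unbiased draw of a mean field plus a mean-zero noise forming a martingale-difference sequence.

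Second, I would compute the mean field by integrating $f$ against the test-function weights $(T-t)$, $\tfrac{a-\psi_1}{\psi_2}$, and $\tfrac{(a-\psi_1)^2-\lambda\psi_2}{2}$ under the Gaussian law $a\sim\mathcal N(\psi_1,\lambda\psi_2)$, using $\E[(a-\psi_1)^2]=\lambda\psi_2$, $\E[(a-\psi_1)^4]=3(\lambda\psi_2)^2$, and the vanishing of odd central moments. The key (and fortunate) outcome is that the two policy parameters decouple: $\theta$ enters $f$ only additively and is annihilated by the mean-zero weights, leaving
\[ g_{\psi_1}(\bm\psi)=-\lambda\gamma\sigma^2\,(\psi_1-\psi_1^*),\qquad g_{\psi_2}(\bm\psi)=-\tfrac{\lambda^2\gamma\sigma^2}{2}\,\psi_2\,(\psi_2-\psi_2^*), \]
with $\psi_1^*=\tfrac{\mu-r}{\gamma\sigma^2}$, $\psi_2^*=\tfrac{1}{\gamma\sigma^2}$; these vanish at $\bm\psi^*$, consistent with the martingale characterization of Theorem \ref{thm:q optimal}. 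Thus the mean-field ODE splits into two autonomous scalar equations, each with a globally asymptotically stable equilibrium on $\mathbb R\times\mathbb R_+$. Crucially, the restoring force for $\psi_2$ is proportional to $\psi_2$, hence only of order $b_i^{-1}$ near the lower boundary $\psi_2=b_i^{-1}$ of the box; this is exactly what dictates condition $(ii)$, $\sum_i a_{\psi,i}b_i^{-1}=\infty$, so that the accumulated drift still diverges and pulls $\psi_2$ off the boundary.

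Third, I would bound the conditional second moments of \eqref{eq:update theta}--\eqref{eq:update psi 2}. Because the weights and $f$ contain $\psi_2^{-1}$, the increments involve up to eighth-order moments of $a$ together with negative powers of $\psi_2$ and positive powers of $\psi_1,\theta$; on the constraint set $[-c_i,c_i]^2\times[b_i^{-1},c_i]$ every such term is dominated by $C\,b_i^2 c_i^4$ for $C=C(\lambda,\gamma,\sigma,\mu,r,T)$, which is precisely the growth controlled by condition $(iii)$. I would also record that, once $b_i^{-1}<\psi_2^*$ and $c_i>\max(|\psi_1^*|,\psi_2^*)$ (guaranteed eventually by $(i)$), the true parameter lies in the interior of the box, so the Euclidean projection is non-expansive toward $\bm\psi^*$ and never increases $|\bm\psi_i-\bm\psi^*|$.

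Finally, I would assemble these into a Lyapunov recursion for $v_i=\E|\bm\psi_i-\bm\psi^*|^2$ of the form $v_{i+1}\le(1-2a_{\psi,i}\kappa_i)v_i+C a_{\psi,i}^2 b_i^2 c_i^4$, where (by the two displayed mean fields) the contraction rate satisfies $\kappa_i\gtrsim b_i^{-1}$ globally and $\kappa_i\to\kappa^*=\min\{\lambda\gamma\sigma^2,\tfrac{\lambda^2}{2}\}>0$ as $\psi_2$ approaches the interior value $\psi_2^*$. Almost-sure convergence then follows from a projected Robbins--Monro/Robbins--Siegmund argument in the spirit of \citet{andradottir1995stochastic}: condition $(ii)$ makes $\sum_i a_{\psi,i}\kappa_i=\infty$ and condition $(iii)$ makes the weighted noise square-summable. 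For the rate I would invoke the recursion estimate of \citet{broadie2011general}: with $a_{\psi,n}\sim 1/n$ and $b_n,c_n\sim\log n$ one checks $(i)$--$(iii)$ directly ($\sum\tfrac{1}{n\log n}=\infty$, $\sum\tfrac{(\log n)^6}{n^2}<\infty$), the noise grows only poly-logarithmically, and $\kappa_i\to\kappa^*$, so Chung-type analysis yields $v_n=\tilde O(1/n)$ once the step-size constant exceeds $1/(2\kappa^*)$. The main obstacle I anticipate is twofold: establishing the uniform $O(b_i^2c_i^4)$ variance bound in the presence of the $\psi_2^{-1}$ singularity, and rigorously controlling the transient regime in which the $\psi_2$-drift is only of order $b_i^{-1}$ near the moving boundary, so that one may pass from the global weak-drift analysis to the local linear-contraction analysis that delivers the sharp rate.
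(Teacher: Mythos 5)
Your proposal follows essentially the same route as the paper: compute the conditional means of the increments \eqref{eq:update theta}--\eqref{eq:update psi 2} under $a\sim\mathcal N(\psi_1,\lambda\psi_2)$ to get the linear contraction in $\psi_1$ and the degenerate (proportional-to-$\psi_2$) contraction in $\psi_2$, bound the conditional variances by $O(b_i^2c_i^4)$ on the projection box, use non-expansiveness of the projection once $\bm\psi^*$ is interior, and then run a Robbins--Siegmund argument for almost-sure convergence and a recursive MSE bound for the rate; your identification of condition (ii) with the $b_i^{-1}$-weak drift near the lower boundary and of condition (iii) with the variance growth is exactly the paper's logic. The one place you genuinely diverge is the rate step: you propose passing to the local linearized contraction $\kappa^*$ and applying a Chung-type lemma (requiring the step-size constant to exceed $1/(2\kappa^*)$ and a control of the transient, which you correctly flag as the open obstacle), whereas the paper never localizes --- it keeps the \emph{global} weak contraction factor $1-\tfrac12 a_{\psi,i}b_i^{-1}\lambda^2\gamma\sigma^2$ valid on the whole box $[b_i^{-1},c_i]$ and feeds it into its Lemma~\ref{lemma:recursive mse}, which converts $e_{i+1}\le(1-a_i)e_i+a_i^2\eta_i$ directly into $e_{i+1}\le C a_i\eta_i$ without any condition on the step-size constant relative to $\kappa^*$ and without separating a transient phase. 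That lemma is what lets the paper conclude $\tilde O(1/n)$ in one pass; if you pursue your local-linearization version you would additionally need to show that $\psi_{2,i}$ enters and stays in a fixed neighborhood of $\psi_2^*$ after a polylogarithmic number of iterations, which is precisely the gap you anticipate and which the paper's global argument sidesteps.
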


The error analyzed in Theorem \ref{thm:convergence of merton} is related to the statistical efficiency of the learned parameters, and the order $\tilde{O}(\frac{1}{n})$ almost matches the typical optimal convergence rate in any data-driven methods except for some logarithm factor. The extra logarithm factor is necessary to overcome the unbounded variance in the sampled process and the possible degeneracy of the q-function when the associated variance tends to zero. From its proof, one can see that if a suitable range for the parameters is known ex ante, then such logarithm factor can be avoided.

Next, I consider the error of the learned policy under a different metric: the suboptimal gap in terms of its performance. I measure the performance gap using the notion of \textit{equivalent relative wealth loss (ERWL)} defined in the following.

Under the policy $\bm\pi(\cdot|\bm\psi)= \mathcal{N}( \psi_1, \lambda \psi_2)$, I denote the risk-sensitive objective function \eqref{eq:risk sensitive rl crra} less the entropy regularization by
\begin{equation}
	\label{eq:risk sensitive objective merton explicit}
	\begin{aligned}
		J(x,\psi_1,\lambda \psi_2) = & \frac{1}{1-\gamma}\log\E\left[ e^{(1-\gamma) \log X(T) }\Big| X(0) = x \right] \\
		= & \log x + \left[ r + (\mu-r)\psi_1 - \frac{\gamma \sigma^2}{2}(\psi_1^2 + \lambda \psi_2) \right]T ,
	\end{aligned} 
\end{equation}
and denote the same quantity under the optimal (control) policy by $J^*(x)$.

Recall that in Merton's problem, objective is to maximize the bequest utility on the terminal wealth. Under the policy $\bm\pi(\cdot|\bm\psi)= \mathcal{N}( \psi_1, \lambda \psi_2)$, the associated value function at time 0 can be written as  
\[ \E\left[ U(X(T)) \Big| X(0) = x \right] = \frac{1}{1-\gamma} \log\left((1-\gamma) J(x,\psi_1,\lambda \psi_2) \right) . \]

I denote the ERWL associated with the policy $\bm\pi(\cdot|\bm\psi)= \mathcal{N}( \psi_1, \lambda \psi_2)$ by $\Delta(\psi_1,\lambda \psi_2)$, which is defined as the solution to 
\[ \frac{1}{1-\gamma} \log\left((1-\gamma) J(x,\psi_1,\lambda\psi_2) \right) = \frac{1}{1-\gamma} \log\left((1-\gamma) J^*(x(1-\Delta)) \right) . \]

The next theorem characterizes how fast the accumulated ERWL up to the first $N$ episodes grows in $N$, depending on whether a deterministic policy can be executed. If ERWL is diminishing, then the accumulated ERWL will grow in $N$ at a sublinear rate. The slower it grows, the more efficient the algorithm is. In particular, I demonstrate the regret that concerns the learned deterministic policy $\mathcal{N}(\psi_{1,i},0)$ as well as the learned stochastic policy $\mathcal{N}(\psi_{i,1},\lambda_i \psi_{i,2})$. Note that compared to the deterministic policy, a stochastic policy incurs more losses because it lacks exploitation, which results in larger regret. Here, two metrics concern the same learning algorithm, which has to rely on the stochastic policy in the learning process to ensure the convergence. Moreover, Theorem \ref{thm:regret merton} also illustrates the different requirements on tuning the learning rate and temperature sequences to achieve the desired regret orders. 

		\begin{theorem}
			\label{thm:regret merton}
			The ERWL is upper bounded by the suboptimal gap in the risk-sensitive objective, i.e., $\Delta(\psi_1,\lambda \psi_2) \leq J^*(x) -  J(x,\psi_1,\lambda \psi_2)$, which is independent of $x$. Moreover, suppose in the $i$-th episode, $\lambda_i$ is used in \eqref{eq:parameterization form}. 
			\begin{enumerate}
				\item[(a)] If the deterministic policy $\mathcal{N}(\psi_{1,i},0)$ is executed along the learning procedure \eqref{eq:projection}, then with suitable choices of sequences $a_{\psi,n},\Delta t_n \sim \frac{1}{n}$, $b_n,c_n \sim \log n$, $\lambda_n = \lambda$, the expected accumulated ERWL satisfies
				\[ \E\left[ \sum_{i=1}^N  \Delta(\psi_{1,i},0) \right] = O\left( (\log N)^4 \right) .\]
				\item[(b)] If the stochastic policy $\mathcal{N}(\psi_{i,1},\lambda_i \psi_{i,2})$ is executed along the learning procedure \eqref{eq:projection}, then with suitable choices of sequences $a_{\psi,n},\Delta t_n \sim \frac{1}{\sqrt{n}}$, $b_n = O(1)$, $c_n \sim \log n$, $\lambda_n \sim \frac{1}{\sqrt{n}}$, the expected accumulated ERWL satisfies
				\[ \E\left[ \sum_{i=1}^N  \Delta(\psi_{1,i},\lambda_i \psi_{2,i}) \right] = \tilde{O}(\sqrt{N}) .\]
			\end{enumerate}
		\end{theorem}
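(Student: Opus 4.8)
The plan is to reduce the accumulated ERWL to a weighted sum of mean-squared parameter errors and then run a stochastic-approximation rate analysis under each of the two prescribed schedules. \textbf{Step 1 (the ERWL bound and its $x$-independence).} Because both value functions are affine in $\log x$, I would write $J(x,\psi_1,\lambda\psi_2) = \log x + cT$ and $J^*(x) = \log x + c^*T$ with
\[ c = r + (\mu-r)\psi_1 - \tfrac{\gamma\sigma^2}{2}(\psi_1^2 + \lambda\psi_2), \qquad c^* = r + (\mu-r)\psi_1^* - \tfrac{\gamma\sigma^2}{2}(\psi_1^*)^2 . \]
Cancelling the monotone map $\tfrac{1}{1-\gamma}\log((1-\gamma)\cdot)$ in the defining equation of $\Delta$ turns it into $\log x + cT = \log(x(1-\Delta)) + c^*T$, i.e. $\log(1-\Delta) = -(c^*-c)T = -(J^*(x)-J(x,\psi_1,\lambda\psi_2))$, which is manifestly independent of $x$ since the $\log x$ terms cancel. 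The elementary inequality $\log(1-\Delta)\le-\Delta$ then gives $\Delta\le J^*(x)-J(x,\psi_1,\lambda\psi_2)$, and completing the square yields the clean expression
\[ J^*(x) - J(x,\psi_1,\lambda\psi_2) = \tfrac{\gamma\sigma^2 T}{2}\big[ (\psi_1 - \psi_1^*)^2 + \lambda\psi_2 \big] , \]
so each episode's ERWL is controlled by the squared error of $\psi_1$ plus the exploration cost $\lambda\psi_2$.

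\textbf{Step 2 (drift and noise of the $\psi_1$-recursion).} Writing $e_i=\psi_{1,i}-\psi_1^*$ and $\delta=a-\psi_{1,i}\sim\mathcal N(0,\lambda_i\psi_{2,i})$, I would compute the conditional mean and variance of the sample in \eqref{eq:update psi 1}. Using $\tfrac{\partial q}{\partial\psi_1}=\delta/\psi_2$ and $dV - q\,dt + \tfrac{1-\gamma}{2}d\langle V\rangle = D(a)\,dt + \sigma a\,dW$ with $D$ quadratic in $\delta$, the Gaussian moments $\E\delta^2=\lambda\psi_2$, $\E\delta^3=0$ kill every term except the one linear in $\delta$; since that coefficient is exactly $(\mu-r)-\gamma\sigma^2\psi_{1,i}=-\gamma\sigma^2 e_i$, the conditional mean of the increment equals $-\lambda_i\gamma\sigma^2 e_i$. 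Thus the temperature enters purely as a multiplier on the learning rate, and remarkably the $\psi_1$-drift decouples from $\theta_i,\psi_{2,i}$. A parallel moment count shows the conditional variance of the increment is $O(\lambda_i)$, up to the factor $1/\psi_{2,i}$ and a $(\log i)^2$-type factor from the constant part $D_0$ ranging over the projection box $[-c_i,c_i]$; this is exactly why $b_n=O(1)$ suffices in case (b), keeping $1/\psi_2$ and hence the noise bounded.

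\textbf{Step 3 (case (a): deterministic execution).} Here the executed policy has zero variance, so the exploration term drops and $\Delta(\psi_{1,i},0)\le\tfrac{\gamma\sigma^2 T}{2}e_i^2$. With $a_{\psi,i}\sim1/i$ and $\lambda_i\equiv\lambda$ the MSE recursion is $\E[e_{i+1}^2]\le(1-\tfrac{2\rho'}{i}+O(i^{-2}))\E[e_i^2]+O(i^{-2}(\log i)^2)$ with $\rho'=c_a\lambda\gamma\sigma^2$; choosing constants so that $2\rho'>1$ reproduces the $\tilde O(1/i)$ rate of Theorem \ref{thm:convergence of merton}. Carrying its logarithmic factors through, $\E[e_i^2]=O((\log i)^3/i)$, whence $\sum_{i=1}^N(\log i)^3/i=O((\log N)^4)$, which is the claim.

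\textbf{Step 4 (case (b): stochastic execution --- the main obstacle).} Now both terms contribute. The exploration cost is immediate: $\psi_{2,i}$ is projected into a set of radius $c_i\sim\log i$ and converges, so $\sum_{i=1}^N\lambda_i\E[\psi_{2,i}]=O(\sum_i i^{-1/2})=\tilde O(\sqrt N)$. The estimation term is harder. With $a_{\psi,i}\sim i^{-1/2}$ and $\lambda_i\sim i^{-1/2}$ the effective drift is $a_{\psi,i}\lambda_i\gamma\sigma^2=\rho/i$ while the noise enters as $a_{\psi,i}^2\lambda_i=O(i^{-3/2})$, giving
\[ \E[e_{i+1}^2] \le \Big( 1 - \tfrac{2\rho}{i} + O(i^{-2}) \Big)\E[e_i^2] + O\big( i^{-3/2}(\log i)^2 \big), \qquad \rho = \lim_i i\,a_{\psi,i}\lambda_i\gamma\sigma^2 . \]
A particular solution of order $i^{-1/2}$ matches this provided $2\rho>1/2$, which is precisely a lower bound on the product of the leading constants of $a_{\psi,i}$ and $\lambda_i$ --- the meaning of ``suitable choices''; the homogeneous part then decays like $i^{-2\rho}$, faster than $i^{-1/2}$. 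Hence $\E[e_i^2]=\tilde O(i^{-1/2})$ and $\sum_{i=1}^N\E[e_i^2]=\tilde O(\sqrt N)$, so both pieces combine to $\tilde O(\sqrt N)$. The genuinely hard part is making this rigorous: the increment variance is not uniformly bounded (requiring the projection device of \citet{andradottir1995stochastic}), the noise of $e_i$ is coupled to the jointly-evolving $\theta_i,\psi_{2,i}$ so one must control $D_0$ uniformly over the growing box $[-c_i,c_i]$, and the time-varying effective step size $a_{\psi,i}\lambda_i$ must be fed into the recursion rate lemma of \citet{broadie2011general} while tracking the logarithmic factors generated by $c_i\sim\log i$.
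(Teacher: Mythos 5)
Your proposal is correct and follows essentially the same route as the paper: bound $\Delta$ by the value gap via $1-e^{-y}\le y$, reduce the gap to $\tfrac{\gamma\sigma^2T}{2}[(\psi_1-\psi_1^*)^2+\lambda\psi_2]$, and then sum the MSE rates from the projected stochastic-approximation recursion under the two schedules (the paper formalizes your ``particular solution of order $a_i\eta_i$'' step as its Lemma \ref{lemma:recursive mse}, and bounds the exploration cost by $\sum_i\lambda_i c_i$ exactly as you do). The only discrepancy is an immaterial factor of $2$ in the drift constant ($-\lambda\gamma\sigma^2 e_i$ versus the paper's $-\tfrac{\lambda\gamma\sigma^2}{2}e_i$), which does not affect any of the rates.
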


The first part in Theorem \ref{thm:regret merton} says that the ERWL is upper bounded by the suboptimal gap in terms of the risk-sensitive objective, which is often known as the \textit{regret} in the RL literature. Note that a stochastic policy introduces more uncertainty into the state process, and hence, is detrimental to the performance metric. If one has to execute a stochastic policy, to have diminishing ERWL, one has to tune $\lambda$ appropriately and enforce that the variance of the policy tends to zero. Based on the expression \eqref{eq:risk sensitive objective merton explicit}, one can see that the ERWL depends on the squared error of $(\psi_1 - \psi_1^*)^2$, but only linearly in $\lambda$. This difference implies the different order in the accumulated ERWL. The square-root order for the stochastic policy case matches the optimal regret bound for tabular, episodic Q-learning for MDP in \citet{jin2018q} and risk-sensitive Q-learning for MDP in \citet{fei2020risk}.

\subsubsection{Numerical results}
In the numerical experiments, the model configurations are $\sigma=0.3$, $\mu=0.1$, $r=0.02$, $\gamma=2$, $T=1$, $x_0=1$, the time-discretization size is $\Delta t=0.01$. The number of episodes in each simulation run is $10^5$. 

I use two sets of tuning parameters that correspond to two situations in Theorem \ref{thm:regret merton} to illustrate. In the first setting, the temperature parameter is a fixed constant, and we take it as $\lambda=3,1,0.3$, and the step size $a_{\psi,n}$ decays as $n^{-1}$ and $b_n,c_n$ grows at the rate $\log n$. In the second setting, the temperature parameter decays as $\lambda n^{-1/2}$, and the step size $a_{\psi,n}$ decays as $n^{-1/2}$, $b_n$ is a fixed small constant, and $c_n$ grows at the rate $\log n$. I repeat the simulation runs for 1000 times to estimate the mean squared error or the mean equivalent relative wealth of the learning algorithm.

Figure \ref{fig:merton 1} illustrates the performance of the algorithm under the first tuning parameters setup. This setup satisfies the conditions in Theorem \ref{thm:convergence of merton} and Theorem \ref{thm:regret merton}~(a). The left two panels show the convergence rate of MSE almost match the theoretical results. The accumulated ERWL on the right panel (in the log-log scale) does not grow linearly. In particular, when $\lambda$ is too small ($\lambda=0.3$), the algorithm incurs larger error in the long run; whereas, the rate of convergence of the mean squared error of the learned $\psi_{1,n}$ is the same. The reason that the mean squared error of the learned $\psi_{2,n}$ does not exhibit the same convergence rate when $\lambda=0.3$ is either because the initial learning rate is not large enough or the number of episodes is still large enough to visualize the order.

Figure \ref{fig:merton 2} illustrates the performance of the algorithm under the first tuning parameters setup. This setup satisfies the conditions in Theorem \ref{thm:regret merton}~(b). It is interesting to notice that here that the MSE of $\psi_{1,n} - \psi_1^*$ also decays almost at the rate of $n^{-1}$, however, in the proof, this rate is only $\tilde{O}(n^{-1/2})$.\footnote{Proving a faster convergence rate of MSE is possible by a more careful analysis on the variance of the increment term, which would also decays in $\lambda_n^2$.} A faster convergence rate in the mean of the learned policy cannot further improve the order of the accumulated ERWL because the variance in the learned policy cannot be further reduced. When the temperature parameter also decays in the number of episode, the value of initial temperature also has similar impact as in Figure \ref{fig:merton 1}. Moreover, in this case, the accumulated ERWL when initial temperature is lower also tends to be smaller. This is because in this case, ERWL is calculated for the stochastic policy; and lower temperature implies lower variance in the policy, and hence improves its performance. 

\begin{figure}[htbp]
	\centering
	\begin{subfigure}{0.9\textwidth}
		\centering
		\includegraphics[width=0.55\textwidth]{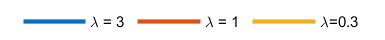}
	\end{subfigure}
	\begin{subfigure}{0.32\textwidth}
		\includegraphics[width=1\textwidth]{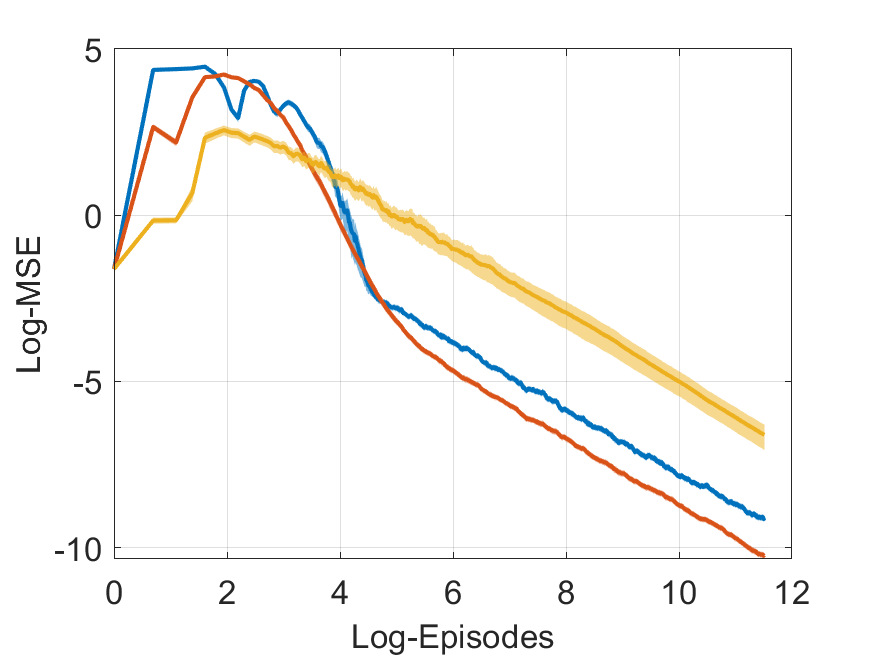}
		\caption{The mean squared error of the learned $\psi_1$.}
	\end{subfigure}
	\begin{subfigure}{0.32\textwidth}
		\includegraphics[width=1\textwidth]{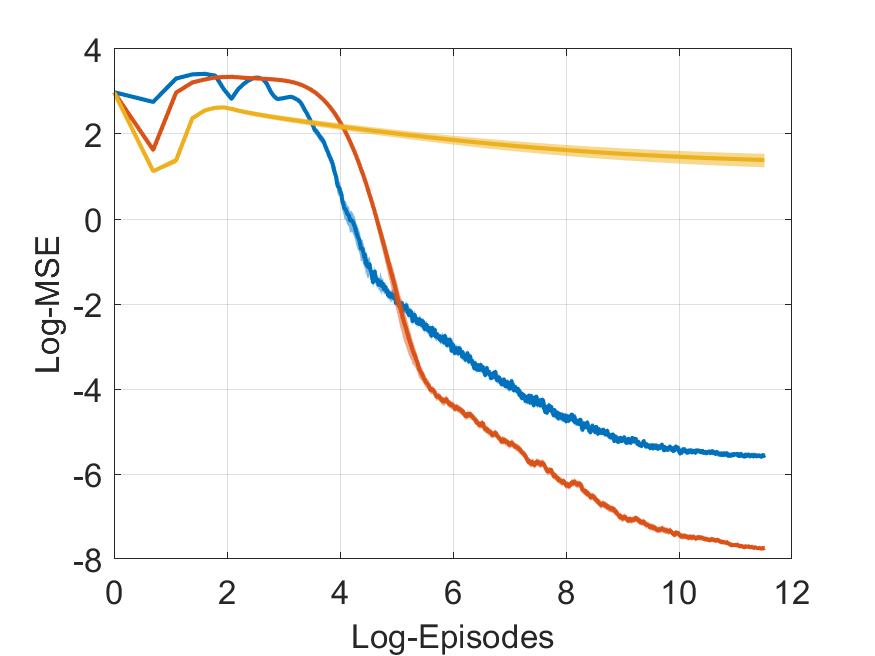}
		\caption{The mean squared error of the learned $\psi_2$.}
	\end{subfigure}
	\begin{subfigure}{0.32\textwidth}
		\includegraphics[width=1\textwidth]{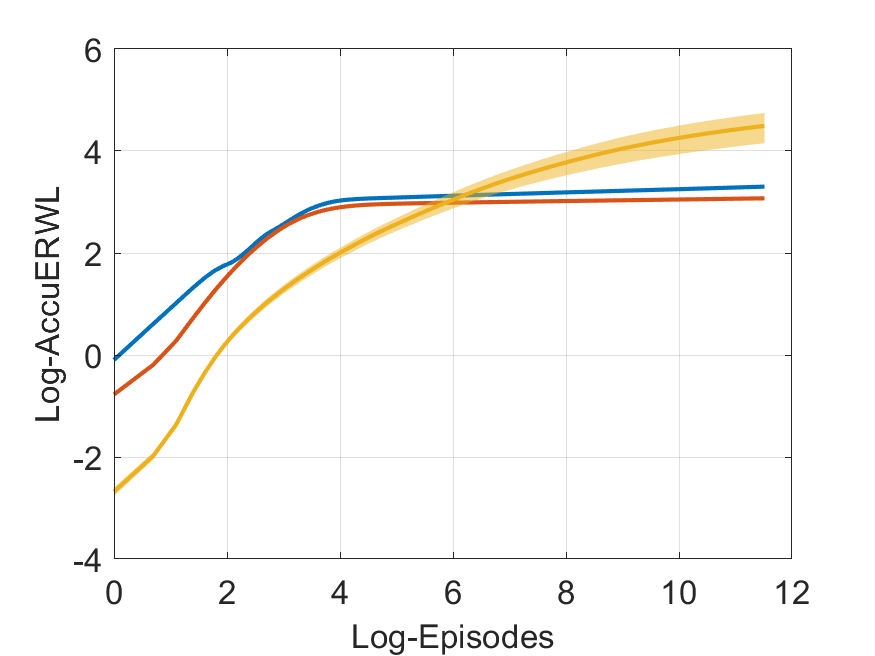}
		\caption{The mean accumulated ERWL of $\mathcal N(\psi_{1,n},0)$.}
	\end{subfigure}
	\caption{The illustration of the learned policy. The left two panels show the convergence of the mean squared error of the learned parameters in the policy, and the right panel shows the mean equivalent relative wealth loss of the learned deterministic policy. Both panels are in the log-scales. The results are based on simulated data with 1000 runs. The shaded area indicates twice the standard deviation of the estimated expectation. The temperature parameters are taken as $\lambda=3,1,0.3$, respectively, and the learning rate $a_{\psi, n} = 10/(1+n)$. The number of episodes within each simulation run is $10^5$.}
	\label{fig:merton 1}
\end{figure}

\begin{figure}[htbp]
	\centering
	\begin{subfigure}{0.9\textwidth}
		\centering
		\includegraphics[width=0.55\textwidth]{Legend_1.png}
	\end{subfigure}
	\begin{subfigure}{0.32\textwidth}
		\includegraphics[width=1\textwidth]{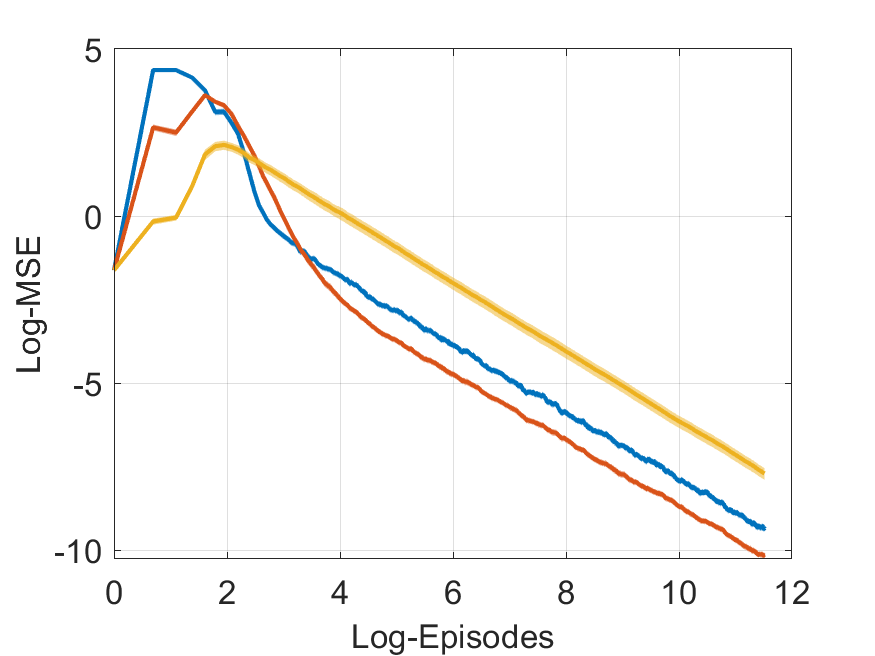}
		\caption{The mean squared error of the learned $\psi_1$.}
	\end{subfigure}
	\begin{subfigure}{0.32\textwidth}
		\includegraphics[width=1\textwidth]{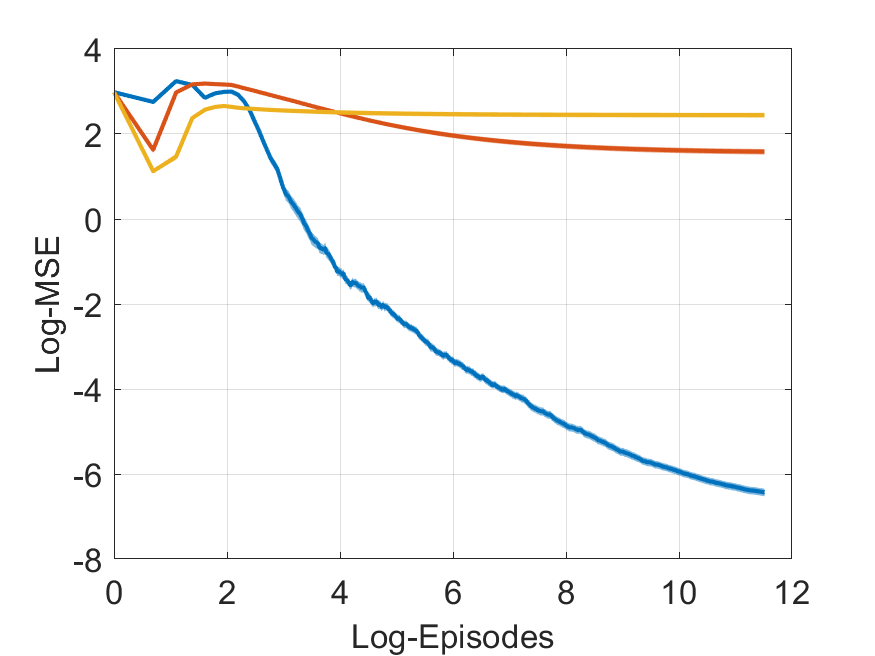}
		\caption{The mean squared error of the learned $\psi_2$.}
	\end{subfigure}
	\begin{subfigure}{0.32\textwidth}
		\includegraphics[width=1\textwidth]{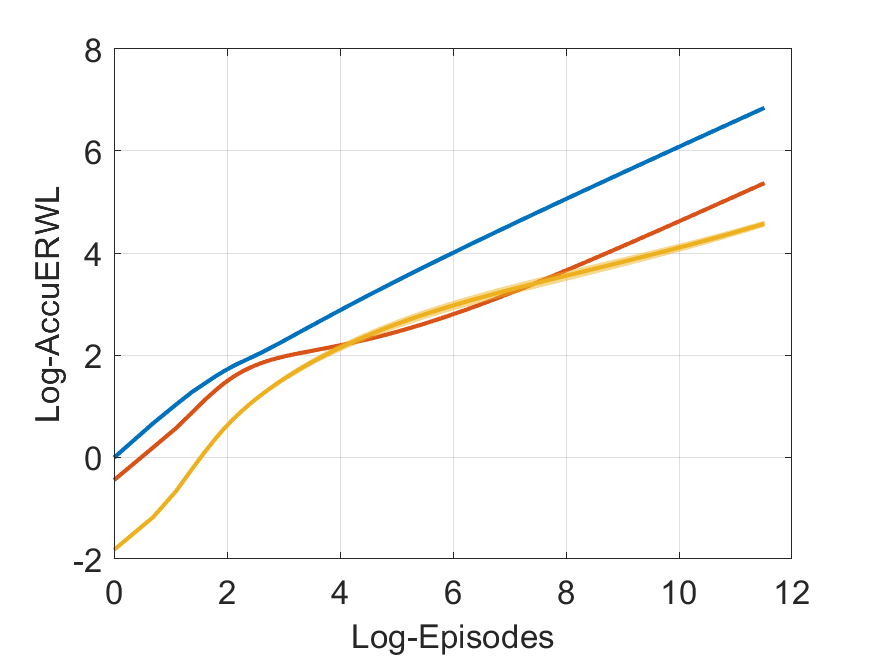}
		\caption{The mean accumulated ERWL of $\mathcal N(\psi_{1,n},\lambda_n\psi_{2,n})$.}
	\end{subfigure}
	\caption{The illustration of the learned policy. The left two panels show the convergence of the mean squared error of the learned parameters in the policy, and the right panel shows the mean equivalent relative wealth loss of the learned deterministic policy. Both panels are in the log-scales. The results are based on simulated data with 1000 runs. The shaded area indicates twice the standard deviation of the estimated expectation. The temperature parameter is taken as $\lambda_n=3/\sqrt{n+1},1/\sqrt{n+1},0.3/\sqrt{n+1}$, respectively, and the learning rate $a_{\psi, n} = 10/\sqrt{n+1}$. The number of episodes within each simulation run is $10^5$.}
	\label{fig:merton 2}
\end{figure}


\begin{algorithm}[htbp]
	\caption{Offline--Episodic On-policy q-Learning Algorithm for Merton's Investment Problem}
	\textbf{Inputs}: investment horizon $T$, time step size $\Delta t$, number of episodes $N$, number of time grids $K$, risk aversion coefficient $\gamma$, temperature parameter $\lambda$, learning rates schedules $a_{\theta,i},a_{\psi,i} \downarrow 0$, projection region schedules $b_i,c_i \uparrow \infty$, $j$ stands for the number of episodes.

	\textbf{Required program}: observations under the behavioral policy $ \{a(t_k), R(t_{k})\}_{k = 0,\cdots, K-1}\cup \{ R(t_K)\} = \textit{Obervation}(\Delta t)$ that returns the observed actions and log-returns trajectories under the behavioral policy at the sampling time grids with step size $\Delta t$.

	\textbf{Learning procedure}:
	\begin{algorithmic}
		\State Initialize $\theta_1,\bm\psi_1$.
		\For{episode $i=1$ to $N$} 
		
		\State{
			Use policy $\bm\pi(\cdot|\bm\psi_i)$ to obtain one observation $\{a_i(t_k), R_i(t_{k})\}_{k = 0,\cdots, K-1}\cup \{ R_i(t_K)\} = \textit{Obervation}(\Delta t)$.
			
			Compute
			\[ \delta_{t_k} = V\left( t_{k+1},R_i(t_{k+1}) ;\theta_i \right) - V\left(t_k, R_i(t_k);\theta_i \right) - q\left( t_{k},R_i(t_{k}),a_i(t_k);\bm\psi_i \right) \Delta t ,  \]
			for $k = 0,\cdots,K-1$.
			\[ \Delta \theta = \frac{2}{T^2}\sum_{k=0}^{K-1} \frac{\partial V}{\partial \theta}\left(t_k, R_i(t_k);\theta_i \right) \delta_{t_k}, \]
			\[
			\Delta \psi_1 = \frac{1}{T} \sum_{k=0}^{K-1} \frac{\partial q}{\partial \psi_1}\left(t_k, R_i(t_k),a_i(t_k);\bm\psi_i \right) \delta_{t_k}. 
			\]
			\[ \Delta \psi_2 =  -\frac{1}{T}\sum_{k=0}^{K-1} \frac{\partial q}{\partial \psi_2^{-1}}\left(t_k, R_i(t_k),a_i(t_k);\bm\psi_i \right) \delta_{t_k}. 
			\]
			
			Update $\theta$ and $\psi$ by
			\[ \theta_{i+1} = \Pi_{[-c_{i+1}, c_{i+1}]}\left( \theta_i + a_{\theta,i} \Delta \theta \right) .\]
			\[ \psi_{1,i+1} = \Pi_{[-c_{i+1}, c_{i+1}]}\left( \psi_{1,i} + a_{\psi,i} \Delta \psi_1 \right).  \]
			\[ \psi_{2,i+1} = \Pi_{[b_{i+1}^{-1}, c_{i+1}]}\left(  \psi_{2,i} + a_{\psi,i} \Delta \psi_2 \right).  \]
			
		}
		\EndFor
	\end{algorithmic}
	\label{algo:offline episodic}
\end{algorithm}

\subsection{Off-policy linear-quadratic control}
I consider the commonly adopted linear-quadratic (LQ) control problem:
\begin{equation}
	\label{eq:lq dynamics}
	\dd X(t) = (AX(t) + Ba(t))\dd t + (CX(t) + D a(t))\dd W(t),
\end{equation}
where $X(t)$ stands for the state variable, and $a(t)$ stands for the control taken at time $t$. The ultimate goal is to maximize the long term average quadratic payoff
\begin{equation}
	\label{eq:lq non risk sensitive}
	\liminf_{T\to \infty}\frac{1}{T}\E\left[\int_0^T r(X(t),a(t))\dd t | X_0 = x_0 \right],
\end{equation}
with $r(x,a) = -(\frac{M}{2}x^2 + Rxa + \frac{N}{2}a^2 + Px + Qa)$. The optimal solution to this classical problem can be found in Appendix \ref{sec:lq true solution}, where the optimal (feedback) control is a linear function of the state that is time-invariant. 

The risk-sensitive RL counterpart to this LQ problem is to consider
\begin{equation}
	\label{eq:lq risk sensitive rl}
	\max_{\bm\pi}\liminf_{T\to \infty} \frac{1}{\epsilon T}\log\E^{\p}\left[ e^{\epsilon  \left[ \int_0^T r\left( X^{\bm\pi}(s), a^{\bm\pi}(s) \right) - \lambda \log\bm\pi\left(a^{\bm\pi}(s) | X^{\bm\pi}(s)\right)    \dd s \right]  } \right] .
\end{equation}
Here $\lambda > 0$ is the temperature parameter that determines the strength of randomization, and typically $\epsilon < 0$ is the risk sensitivity coefficient. The optimal solution to the non-risk-sensitive RL problem ($\epsilon = 0$) can be found in \citet[Appendix B2]{jia2022policypg}, where the optimal policy is a Gaussian distribution, whose mean coincides with the classical solution ($\lambda = 0$), and whose variance is a constant. However, the risk-sensitive RL problem cannot be solved analytically.

In this subsection, I assume that learning can only be conducted in the off-policy manner. Specifically, the available data consists of observations of one trajectory of state, action and reward generated under a behavior policy that is usually not optimal, denoted by $\bm\pi^b(\cdot|x)$. That is, the RL agent has access to a finite sequence of observations: $\{t_k,X(t_k),a(t_k),\mathcal R(t_k)\}_{0\leq k\leq K}$ at certain sampling times $0 = t_0<t_1<\cdots<t_K = T$, where the state is generated by the behavior policy according to \eqref{eq:lq dynamics}, $a(t)|X(t) \sim \bm\pi^b(\cdot| X(t))$, and the observed reward $r(t_k)$ can be an unbiased but noisy realization of its expectation value, i.e., $\E\left[ \mathcal R(t_k)|a(t_k), X(t_k) \right] = r(X(t_k), a(t_k))$. Moreover, the RL agent does not have the knowledge about the environment (coefficients $A,B,C,D,M,N,R,P,Q$, and even the LQ structure), nor the knowledge of the behavior policy $\bm\pi^b$. 

\subsubsection{Description of the algorithm}
The (optimal) q-function and (optimal) value function are approximated by
\begin{equation}
	\label{eq:parameterization lq}
	\begin{aligned}
		q(X,a;\bm\psi) = & -\frac{\left( a - \psi_1 X - \psi_2  \right)^2}{2\psi_3} - \frac{\lambda}{2}\log2\pi\lambda - \frac{\lambda}{2}\log\psi_3, \\
		V(X;\bm\theta) = & \theta_2 x^2 + \theta_1 x .
	\end{aligned}
\end{equation}
The value function does not contain a constant term because it is only unique up to a constant. In addition, one also needs to learn another scalar parameter, denoted by $\theta_0$. To sum-up, $\bm\psi\in \mathbb{R}^2\times \mathbb R_+$ and $\bm\theta\in \mathbb R^3$ need to be learned. The form \eqref{eq:parameterization lq} corresponds to a policy $\bm\pi(\cdot|X;\bm\psi) = \mathcal N(\psi_1 X + \psi_2,\lambda\psi_3)$. 

I highlight several distinctions between the off-policy and on-policy learning, and to explain the motivation for the choice of parameterzation and our performance metrics. In the off-policy setting, the data is exogenously given by a behavioral policy $\bm\pi^b$. Hence, unlike the on-policy learning, the agent does not interact with the environment and generate data, instead, the agent can only use existing data. Therefore, in the off-policy learning is essentially a statistical problem to estimate the optimal policy. The ultimate goal is still to solve the original LQ problem \eqref{eq:lq non risk sensitive}, and the risk sensitive objective \eqref{eq:lq risk sensitive rl} is a formulation that is introduced to address the discrepancy between the limited data and the true environment distribution (e.g., recall the distributional robust interpretation for such formulation discussed in Section \ref{sec:classical formulation}). The form \eqref{eq:parameterization lq} is motivated by the analytical solution to the non-risk-sensitive RL problem, see in \citet[Appendix B2]{jia2022policypg}.

\subsubsection{Numerical results}
In the numerical experiments, the configurations are $A = -2$, $B=1$, $C=0.25$, $D = 1$, $M=N=Q=2$, $R=P=1$. The behavior policy is taken as state-independent, i.e., $\bm\pi^b(\cdot|X) = \mathcal{N}(0,1)$, and the initial state is $x_0=0$. The time discretization for the sample is $\Delta t=0.01$. The sample reward is generated as $\mathcal R | a, X \sim \mathcal N\left( r(X,a), \Delta t \right)$. I mainly examine the effect of the sample size and the risk sensitivity coefficient. Hence, I choose the temperature parameter $\lambda\in \{3,1,0.3\}$, and generate sample with length $T\in \{1,10,100\}$ and choose the risk sensitivity coefficient $\epsilon\in \{0,-0.1,-0.5,-1,-2,-5,-10,-20\}$. In each simulation run, I use the behavior policy to generate sample with length $T$, and then conduct learning algorithms to obtain the learned parameters $\hat{\bm\psi},\hat{\bm{\theta}}$, and finally compute its distance to the optimal policy to the problem \eqref{eq:lq non risk sensitive}. The simulation runs are repeated for $10^4$ times to evaluate the mean squared error (MSE). 

The results are presented in Table \ref{tab:lq offpolicy}. The top, middle, bottom panel stands for different sample sizes, and subtables (a), (b) and (c) correspond to different temperature parameters. In each panel, notice that solving the non-risk-sensitive objective does not lead to the most accurate estimate for the policy. With suitable values of $\epsilon$, solving the risk-sensitive objective reduces MSE. As $\epsilon$ approaches 0, the results get closer to the non-risk-sensitive objective, whereas overly large $\epsilon$ (in the absolute value) harms the accuracy. Overall, as the sample size increases, the performance get significantly better. The optimal $\epsilon$ is between $-5$ and $-2$ when the sample size is small, and is between $-2$ and $-1$ when the sample size is large. It confirms the intuition that large sample size provides a better approximation to the ground truth distribution, and hence the agent does not have to be too risk sensitive. In addition, a smaller temperature parameter $\lambda$ seems to induce an overall smaller estimation error in the off-policy, offline setting. This is probably because, unlike the on-policy, online learning, in this setting, a data set has been generated and fixed, and thus, the benefits of exploration are limited; instead, a problem with larger temperature deviates more from the original objective function. 
\begin{table}[htbp]
	\centering
	\begin{subtable}{0.9\textwidth}
		\centering
		\begin{tabular}{cccccccccc}
			\toprule
			&  $\epsilon$     & $0$     & $-0.1$  & $-0.5$  & $-1$    & $-2$    & $-5$    & $-10$   & $-20$ \\
			\midrule
			$T=1$   & MSE of $\hat\psi_1$   & 2.3731 & 2.3218 & 2.1051 & 1.7744 & 1.3750 & 0.9908 & 0.8914 & 1.0191 \\
			& MSE of $\hat\psi_2$   & 0.3960 & 0.4287 & 0.3055 & 0.2514 & 0.2383 & 0.3082 & 0.4971 & 0.7456 \\
			& MSE of $\hat\psi_3$   & 0.4760 & 0.4631 & 0.2284 & 0.1211 & 0.1007 & 0.1243 & 0.2338 & 0.3388 \\
			&       &       &       &       &       &       &       &       &  \\
			$T=10$  & MSE of $\hat\psi_1$   & 0.3077 & 0.2696 & 0.1473 & 0.1057 & 0.1272 & 0.4491 & 1.2589 & 2.9190 \\
			& MSE of $\hat\psi_2$   & 0.0550 & 0.0529 & 0.0294 & 0.0322 & 0.0790 & 0.3864 & 1.1444 & 2.6676 \\
			& MSE of $\hat\psi_3$   & 0.0385 & 0.0200 & 0.0043 & 0.0098 & 0.0305 & 0.1922 & 0.6222 & 1.4192 \\
			&       &       &       &       &       &       &       &       &  \\
			&       &       &       &       &       &       &       &       &  \\
			$T=100$ & MSE of $\hat\psi_1$   & 0.0186 & 0.0178 & 0.0156 & 0.0138 & 0.0116 & 0.0360 & 0.4659 & 3.2283 \\
			& MSE of $\hat\psi_2$   & 0.0067 & 0.0061 & 0.0047 & 0.0037 & 0.0031 & 0.0322 & 0.4406 & 2.9502 \\
			& MSE of $\hat\psi_3$  & 0.0004 & 0.0004 & 0.0003 & 0.0004 & 0.0005 & 0.0151 & 0.2280 & 1.3929 \\
			\bottomrule
		\end{tabular}%
		\caption{When the temperature parameter $\lambda = 3$.}
	\end{subtable}
	\begin{subtable}{0.9\textwidth}
		\centering
		\begin{tabular}{cccccccccc}
			\toprule
			&  $\epsilon$     & $0$     & $-0.1$  & $-0.5$  & $-1$    & $-2$    & $-5$    & $-10$   & $-20$ \\
			\midrule
			$T=1$   & MSE of $\hat\psi_1$   & 0.923 & 0.913 & 0.905 & 0.832 & 0.801 & 1.003 & 1.486 & 2.336 \\
			& MSE of $\hat\psi_2$   & 0.332 & 0.350 & 0.292 & 0.324 & 0.426 & 0.777 & 1.356 & 2.266 \\
			& MSE of $\hat\psi_3$   & 0.313 & 0.326 & 0.201 & 0.153 & 0.170 & 0.328 & 0.607 & 1.113 \\
			&       &       &       &       &       &       &       &       &  \\
			$T=10$  & MSE of $\hat\psi_1$   & 0.038 & 0.037 & 0.036 & 0.038 & 0.075 & 0.287 & 1.065 & 2.876 \\
			& MSE of $\hat\psi_2$   & 0.012 & 0.012 & 0.012 & 0.014 & 0.044 & 0.258 & 1.063 & 3.053 \\
			& MSE of $\hat\psi_3$   & 0.003 & 0.003 & 0.002 & 0.003 & 0.013 & 0.090 & 0.328 & 1.121 \\
			&       &       &       &       &       &       &       &       &  \\
			&       &       &       &       &       &       &       &       &  \\
			$T=100$ & MSE of $\hat\psi_1$   & 0.0042 & 0.0041 & 0.0040 & 0.0038 & 0.0039 & 0.2119 & 2.2732 & 10.8350 \\
			& MSE of $\hat\psi_2$   & 0.0027 & 0.0027 & 0.0027 & 0.0029 & 0.0045 & 0.2200 & 2.4270 & 12.3801 \\
			& MSE of $\hat\psi_3$   & 0.0002 & 0.0003 & 0.0003 & 0.0004 & 0.0008 & 0.0565 & 0.7789 & 4.5567 \\
			
			\bottomrule
		\end{tabular}%
		\caption{When the temperature parameter $\lambda = 1$.}
	\end{subtable}
	\begin{subtable}{0.9\textwidth}
		\centering
		\begin{tabular}{cccccccccc}
			\toprule
			&  $\epsilon$     & $0$     & $-0.1$  & $-0.5$  & $-1$    & $-2$    & $-5$    & $-10$   & $-20$ \\
			\midrule
			$T=1$   & MSE of $\hat\psi_1$   & 0.2989 & 0.2989 & 0.2988 & 0.2988 & 0.2987 & 0.3097 & 0.3788 & 0.5657 \\
			& MSE of $\hat\psi_2$    & 0.0491 & 0.0492 & 0.0494 & 0.0498 & 0.0505 & 0.0668 & 0.1329 & 0.3346 \\
			& MSE of $\hat\psi_3$   & 0.0556 & 0.0556 & 0.0556 & 0.0556 & 0.0556 & 0.0604 & 0.0853 & 0.1520 \\
			&       &       &       &       &       &       &       &       &  \\
			$T=10$  & MSE of $\hat\psi_1$   & 0.2071 & 0.2070 & 0.2066 & 0.2061 & 0.2051 & 0.2036 & 0.2299 & 0.3840 \\
			& MSE of $\hat\psi_2$    & 0.0172 & 0.0172 & 0.0172 & 0.0173 & 0.0175 & 0.0191 & 0.0439 & 0.2008 \\
			& MSE of $\hat\psi_3$   & 0.0299 & 0.0298 & 0.0297 & 0.0295 & 0.0292 & 0.0287 & 0.0289 & 0.0612 \\
			&       &       &       &       &       &       &       &       &  \\
			&       &       &       &       &       &       &       &       &  \\
			$T=100$ & MSE of $\hat\psi_1$   & 0.0031 & 0.0031 & 0.0030 & 0.0029 & 0.0050 & 0.3170 & 4.1180 & 15.1154 \\
			& MSE of $\hat\psi_2$    & 0.0023 & 0.0023 & 0.0023 & 0.0026 & 0.0066 & 0.4215 & 4.7278 & 17.6495 \\
			& MSE of $\hat\psi_3$   & 0.0002 & 0.0002 & 0.0003 & 0.0004 & 0.0015 & 0.0963 & 1.1864 & 6.2771 \\
			\bottomrule
		\end{tabular}%
		\caption{When the temperature parameter $\lambda = 0.3$.}
	\end{subtable}
	\caption{The mean squared error (MSE) of the estimated parameters in the policy. Each column contains the MSE of different values of the risk-sensitive coefficient. Larger $\epsilon$ (in the absolute value) means larger deviation from the sample distribution. $\epsilon=0$ means non-risk-sensitive objective. The top, middle, bottom panel stands for different sample sizes. The simulation runs are repeated for $10^4$ times to evaluate MSE. }
	\label{tab:lq offpolicy}%
\end{table}%

\section{Discussions}\label{sec:conclusion}
In this paper, I studied continuous-time risk-sensitive RL in the exponential form. By leveraging exponential martingale properties, I transformed the risk-sensitive objective into a standard additive form with an additional quadratic variation penalty. I introduced the q-function in this setting and showed that the optimal q-function and value function can still be jointly characterized through a linear martingale approach. However, unlike in the risk-neutral case, the usual connection between q-learning and policy gradient breaks down due to the inherent nonlinearity of the value function.

In contrast to discrete-time risk-sensitive problems, where an equivalent simplification is lacking and algorithms tend to be more complex, the continuous-time formulation offers significant advantages. The key difference lies in the treatment of transitions: in discrete time, the transition probabilities remain unknown and difficult to learn, leading to nonlinear recursions and the challenging multiplicative Bellman equation. In contrast, the continuous-time setting allows for a transformation that avoids these complications both theoretically and numerically. Theoretically, it replaces the multiplicative Bellman equation with a simpler additive structure, and numerically, it eliminates the need to compute exponentials of potentially large values. This advantage stems from the fact that in continuous time, a normal approximation holds incrementally, enabling a more tractable and efficient learning process.

I demonstrate the proposed algorithm on two applications: Merton's investment problem with power utility functions under the Black-Scholes model, and the off-policy learning for a linear-quadratic problem. These two serve as simple test cases where the desired solutions are known to take linear or even constant forms. However, real-world problems often involve complex, unknown q-functions and policies, requiring function approximation methods such as neural networks. This introduces approximation errors that can impact performance. Moreover, the complicated functional forms may lead to non-convex optimization landscape, increasing numerical instability. These challenges, which are not captured in my experiments, highlight the need for further research to assess the algorithm's robustness and scalability in more complex, high-dimensional settings.

The analysis of RL algorithm in this paper is restricted to simple cases, but it provides key insights into the role of the temperature $\lambda$ and the entropy regularization in the learning procedure. In the context of Merton's investment problem, I show that the product between the temperature parameter and the step size acts as an ``effective learning rate'' under the stochastic approximation framework. This reveals a fundamental tradeoff: while lower temperatures slow down learning and, in the extreme case of zero temperature, eliminate effective learning signals, they also reduce randomness in the policy, leading to less noisy updates. Moreover, when the temperature and step size decay at compatible rates, the algorithm achieves the desired convergence rate. This analysis serves as a first step in understanding the impact of the temperature parameter on learning algorithms while explicitly accounting for sampling error -- an aspect that appears to be overlooked in prior literature. The approach in this paper sets an example for proving the convergence of stochastic-approximation-based algorithms for continuous-time, continuous-space RL. However, extending this methodology to more complex settings remains challenging, because verifying the typical convergence conditions in general cases is nontrivial. These open questions highlight important directions for future research.

The numerical study on off-policy linear-quadratic problems provides valuable insights into the role of the risk sensitivity coefficient $\epsilon$. The results suggest that when the sample size is limited, incorporating a risk-sensitive objective can improve learning accuracy compared to a risk-neutral approach. Moreover, the optimal choice of $\epsilon$ depends on the sample size, as reflected in the connection between risk-sensitive RL and distributionally robust RL. Specifically, introducing risk sensitivity helps mitigate the discrepancy between a finite data set and the underlying population distribution. As a direction for future work, it would be valuable to establish theoretical guarantees on how risk-sensitive formulations address distributional shift and to develop data-driven methods for selecting the risk sensitivity coefficient.


There are alternative formulations of risk sensitivity other than the exponential form proposed in discrete-time systems, e.g., \cite{xu2023regret,wu2023risk}, however, it still seems unclear what the continuous-time counterpart is and if any form of penalty can be introduced to transform the nonlinear objective to a recursive utility maximization problem. How to address the distributional robustness other than the KL divergence in the continuous-time diffusion processes in a tractable way and to conduct RL still remains largely an open question for future research. 

\section*{Acknowledgement}
The author is supported by the Start-up Fund at The Chinese University of Hong Kong and Hong Kong Research Grants Council (RGC) - Early Career Scheme (ECS) 24211124. I thank the participants at the ETH-Hong Kong-Imperial Mathematical Finance Workshop, the 1st INFORMS Conference on Financial Engineering and FinTech, and 2025 SIAM Conference on Financial Mathematics and Engineering for their helpful discussions. I also thank two anonymous reviewers for their constructive and detailed comments that have led to an improved version of this paper.

\newpage
\bibliography{reference}

\newpage

\appendix
\section{Ground Truth Solutions in Examples}
\subsection{Merton's investment problem with power utility}
\label{sec:merton true solution}
In the classical Merton's investment problem (without consumption) with power utility \citep{merton1969lifetime}, it is well-known that the optimal portfolio choice is to maintain a constant proportion of wealth in the risky asset with $a^* = \frac{\mu-r}{\gamma \sigma^2}$. The proof of this statement is omitted.

I give the ground truth solution to the exploratory problem \eqref{eq:risk sensitive rl crra} as follows using a guess-and-verify approach. I claim the optimal policy is $\bm\pi^* = \mathcal{N}(\frac{\mu-r}{\gamma\sigma^2}, \frac{\lambda}{\gamma \sigma^2})$, optimal q-function is $q^*(t,x,a) = -\frac{\gamma\sigma^2}{2}(a - \frac{\mu-r}{\gamma\sigma^2})^2 - \frac{\lambda}{2}\log\frac{2\pi\lambda}{\gamma\sigma^2}$, and  the optimal value function is $J^*(t,x) = \log x + (T-t)[r + \frac{(\mu-r)^2}{2\gamma\sigma^2} + \frac{\lambda}{2}\log\frac{2\pi\lambda}{\gamma\sigma^2}]$. To see this, it suffices to verify the martingale conditions in Theorem \ref{thm:q optimal}. Applying It\^o's lemma, we obtain
\[\begin{aligned}
	& \dd J^*(t,X^{\bm\pi}(t)) -q^*(t,X^{\bm\pi}(t),a^{\bm\pi}(t))\dd t + \frac{1-\gamma}{2}\dd \langle \log X^{\bm\pi} \rangle(t) \\
	= & \bigg\{ -\left[ r + \frac{(\mu-r)^2}{2\gamma\sigma^2} + \frac{\lambda}{2}\log\frac{2\pi\lambda}{\gamma\sigma^2} \right] + r + (\mu-r) a_t^{\bm\pi} - \frac{1}{2}(a^{\bm\pi}(t))^2\sigma^2 + \frac{1-\gamma}{2}(a^{\bm\pi}(t))^2\sigma^2 \\
	& + \frac{\gamma\sigma^2}{2}(a^{\bm\pi}(t) - \frac{\mu-r}{\gamma\sigma^2})^2 + \frac{\lambda}{2}\log\frac{2\pi\lambda}{\gamma\sigma^2} \bigg\}\dd t + a^{\bm\pi}(t)\sigma\dd W(t) = a^{\bm\pi}(t)\sigma\dd W(t) .
\end{aligned} \]

\subsection{Ergodic linear-quadratic control}
\label{sec:lq true solution}
This solution has also been presented in \citet[Appendix B2]{jia2022policypg}. It is repeated here for the completeness.

Let the true model be given by \eqref{eq:lq dynamics} and one aims to maximize the long-term average reward \eqref{eq:lq non risk sensitive}.
Consider the associated HJB equation:
\[\begin{aligned}
	0 = & \sup_{a}[ \mathcal{L}^a \varphi(x) + r(x,a) -\beta] \\
	=&  \sup_{a}\left[(Ax + Ba)\varphi'(x) + \frac{1}{2}(Cx + Da)^2\varphi''(x) - (\frac{M}{2}x^2 + Rxa + \frac{N}{2}a^2 + Px + Qa)  -\beta \right] . 	
\end{aligned} \]
Conjecturing  $\varphi(x) = \frac{1}{2}k_2 x^2 + k_1 x$ and plugging it into the HJB equation, we get the first-order condition $a^* = \frac{[k_2(B+CD)-R]x + k_1B - Q}{N -k_2 D^2}$, assuming $N - k_2D^2 > 0$. The HJB equation now becomes
\[ 0 = \frac{1}{2}[k_2(2A+C^2) - M]x^2 + (k_1A - P)x - \beta + \frac{1}{2} \frac{\left\{ [k_2(B+CD)-R]x + k_1B - Q \right\}^2}{N - k_2D^2} .\]
This leads to three algebraic equations by matching the coefficients of $x^2$, $x$ and the constant term:
\begin{equation*}
	\label{eq:lq algebraic equations}
	\left\{ \begin{aligned}
		& k_2(2A+C^2) - M + \frac{[k_2(B+CD)-R]^2}{N-k_2 D^2}  = 0,\\
		& k_1A - P + \frac{[k_2(B+CD)-R](k_1B - Q)}{N-k_2D^2} = 0,\\
		& \beta = \frac{(k_1 B-Q)^2}{2(N - k_2D^2)}.
	\end{aligned} \right.  
\end{equation*}
Note $k_2$ is the negative solution to a quadratic equation, and after solving $k_2$, $k_1,\beta$ can be directly computed. 

\section{Proof of Statements}\label{appendix:proof}

\subsection{Proof Lemma \ref{lemma:classical martingale}}
We state a useful lemma regarding the continuous martingale in the exponential form. 
\begin{lemma}
	\label{lemma:exponential martingale}
	Suppose $\{Z_s,s\geq 0\}$ is an $(\{\f_s\}_{s\geq 0}, \p)$- adapted continuous semimartingale with $\E[e^{\frac{\epsilon^2}{2}\langle Z \rangle (s)}] < \infty$ for every $s\in [0,T]$. 
	\begin{enumerate}
		\item[(i)] If $\{Z_s + \frac{\epsilon}{2}\langle Z \rangle(s),s\geq 0\}$ is an $(\{\f_s\}_{s\geq 0}, \p)$-local martingale, then $\{e^{\epsilon Z_s}, \geq 0\}$ is also an $(\{\f_s\}_{s\geq 0}, \p)$ martingale.
		\item[(ii)] If $\epsilon > 0$, $\{Z_s + \frac{\epsilon}{2}\langle Z \rangle(s),s\geq 0\}$ is an $(\{\f_s\}_{s\geq 0}, \p)$-local submartingale (supermartingale), then $\{e^{\epsilon Z_s},s\geq 0\}$ is also an $(\{\f_s\}_{s\geq 0}, \p)$ submartingale (supermartingale).
		\item[(iii)] If $\epsilon < 0$, $\{Z_s + \frac{\epsilon}{2}\langle Z \rangle(s),s\geq 0\}$ is an $(\{\f_s\}_{s\geq 0}, \p)$-local submartingale (supermartingale), then $\{e^{\epsilon Z_s},s\geq 0\}$ is also an $(\{\f_s\}_{s\geq 0}, \p)$ supermartingale (submartingale).
	\end{enumerate}
\end{lemma}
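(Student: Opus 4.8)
The plan is to reduce everything to It\^o's formula applied to the convex map $z\mapsto e^{\epsilon z}$. First I would introduce the process $Y_s := Z_s + \frac{\epsilon}{2}\langle Z\rangle(s)$ and decompose it as $Y_s = Y_0 + N_s + A_s$ into its continuous local martingale part $N$ and continuous finite-variation part $A$, recording that $\langle Z\rangle = \langle Y\rangle = \langle N\rangle$ and that $Y_0 = Z_0$, $A_0 = 0$. It\^o's formula then gives
\[
\dd\!\left(e^{\epsilon Z_s}\right) = \epsilon e^{\epsilon Z_s}\left[\dd Z_s + \frac{\epsilon}{2}\dd\langle Z\rangle(s)\right] = \epsilon e^{\epsilon Z_s}\,\dd N_s + \epsilon e^{\epsilon Z_s}\,\dd A_s .
\]
Since $e^{\epsilon Z_s}>0$ is continuous, the first term is a local martingale, while the drift $\epsilon e^{\epsilon Z_s}\,\dd A_s$ inherits its sign from the sign of $\epsilon$ together with the monotonicity of $A$ ($A$ increasing when $Y$ is a local submartingale, decreasing when $Y$ is a local supermartingale). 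Checking the four sign combinations reproduces exactly the \emph{local} versions of (i)--(iii): for instance $\epsilon>0$ with $A$ increasing makes $e^{\epsilon Z_s}$ a local submartingale, whereas $\epsilon<0$ with $A$ increasing makes it a local supermartingale, and symmetrically for $A$ decreasing.

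The remaining work is to upgrade these local statements to genuine (sub/super)martingales, and here the hypothesis $\E[e^{\frac{\epsilon^2}{2}\langle Z\rangle(s)}]<\infty$ enters. I would record the factorization
\[
e^{\epsilon Z_s} = e^{\epsilon Z_0}\,e^{\epsilon A_s}\,\mathcal{E}(\epsilon N)_s, \qquad \mathcal{E}(\epsilon N)_s := \exp\!\left(\epsilon N_s - \frac{\epsilon^2}{2}\langle N\rangle(s)\right),
\]
and observe that the stated integrability condition is precisely Novikov's condition (because $\langle N\rangle=\langle Z\rangle$), so $\mathcal{E}(\epsilon N)$ is a true, uniformly integrable $(\{\f_s\},\p)$-martingale on $[0,T]$. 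For part (i), $Y$ being a local martingale forces $A$ to be constant, whence $e^{\epsilon Z_s}=e^{\epsilon Z_0}\mathcal{E}(\epsilon N)_s$ is a true martingale. For the two \emph{supermartingale} conclusions I would invoke the elementary fact that a nonnegative local supermartingale is automatically a supermartingale (Fatou's lemma), so no further integrability is needed there.

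The two \emph{submartingale} conclusions are the delicate ones, since Fatou points the wrong way for submartingales; this is the step I expect to be the main obstacle. My approach is a change of measure: define $\q$ on $\f_T$ by $\frac{\dd\q}{\dd\p}\big|_{\f_t}=\mathcal{E}(\epsilon N)_t$, which is legitimate exactly because Novikov holds. Bayes' rule for conditional expectations under a change of measure then yields, for $s\le t$,
\[
\E^{\p}\!\left[e^{\epsilon Z_t}\mid \f_s\right] - e^{\epsilon Z_s} = e^{\epsilon Z_0}\,\mathcal{E}(\epsilon N)_s\left(\E^{\q}\!\left[e^{\epsilon A_t}\mid\f_s\right] - e^{\epsilon A_s}\right),
\]
whose sign is governed solely by the monotonicity of $s\mapsto \epsilon A_s$. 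The two submartingale cases ($\epsilon>0$ with $A$ increasing, and $\epsilon<0$ with $A$ decreasing) are precisely those for which $\epsilon A$ is nondecreasing, so $\E^{\q}[e^{\epsilon A_t}\mid\f_s]\ge e^{\epsilon A_s}$ by the tower property, delivering the submartingale inequality. The care needed is to ensure $e^{\epsilon A_t}\in L^1(\q)$ — equivalently $e^{\epsilon Z_t}\in L^1(\p)$ — so that these conditional expectations are well defined; this integrability is where the Novikov-type hypothesis does its real work, and establishing it (from the stated condition via a Cauchy--Schwarz splitting of the $e^{\epsilon A}$ and $\mathcal{E}(\epsilon N)$ factors, or directly from the control on $A$ available in the applications) is the one point that must be handled with genuine care.
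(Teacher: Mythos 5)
Your proposal is correct and follows essentially the same route as the paper: decompose $Z_s+\frac{\epsilon}{2}\langle Z\rangle(s)$ into a continuous local martingale plus a monotone finite-variation part $A$, use the Novikov-type hypothesis (via part (i)) to turn the exponential of the local-martingale part into a true martingale, and read off the sub/supermartingale inequality from the sign of $\epsilon A$. Your change-of-measure/Bayes step is the same computation as the paper's direct conditional-expectation argument (dividing by $e^{\epsilon Z_s-\epsilon A_s}$), your Fatou shortcut for the supermartingale half is a harmless variant of the paper's ``the rest can be shown similarly,'' and the $L^1$ caveat you flag in the submartingale case is likewise left implicit in the paper, which in effect reads the inequality in $[0,\infty]$.
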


\begin{proof}
	See \citet[Chapter VIII, Corollary 1.16, page 309]{revuz2013continuous} for its proof for part $(i)$. 
	
	We only show the ``submartingale'' part when $\epsilon>0$. The rest can be shown similarly. By Doob-Meyer decomposition, we can write $Z_s + \frac{\epsilon}{2}\langle Z \rangle(s) = M_s + A_s$, where $\{M_s,s\geq 0\}$ is a local martingale and $\{A_s,s\geq 0\}$ is a predictable, increasing process starting from zero. Moreover, $\langle M \rangle(s) = \langle Z \rangle(s)$ for all $s\in [0,T]$ because both $\langle Z \rangle$ and $A$ are increasing process and have QV zero. Then $\E[e^{\frac{\epsilon^2}{2}\langle M \rangle (s)}] = \E[e^{\frac{\epsilon^2}{2}\langle Z \rangle (s)}] < \infty$. By part $(i)$ of this statement, we conclude that $\{ e^{\epsilon M_s - \frac{\epsilon^2}{2} \langle M \rangle (s)},s\geq 0 \}$ is a martingale, that is, $\{ e^{\epsilon Z_s - \epsilon A_s},s\geq 0 \}$ is a martingale. Therefore, for any $0\leq s < s' \leq T$, it holds that $A_{s'} \geq A_s$ almost surely, hence,
	\[ 1 = \E\left[ e^{\epsilon Z_{s'} - \epsilon Z_s - \epsilon (A_{s'} - A_s)}  \Big|\f_s \right] \leq \E\left[ e^{\epsilon Z_{s'} - \epsilon Z_s}  \Big|\f_s \right] .\]
	That is, $\E\left[e^{\epsilon Z_{s'}} \Big| \f_s \right] \geq e^{\epsilon Z_{s}}$. This proves that $\{ e^{\epsilon Z_{s}},s\geq 0\}$ is also a submartingale.
\end{proof}

Now we are ready to prove Lemma \ref{lemma:classical martingale}.
\begin{proof}
	Let $Z_s^{\bm a} = \int_t^s r(u, X_{u}^{\bm a}, a_{u})\dd u+ V^*(s, X_s^{\bm a};\epsilon)$, it satisfies $\E\left[ e^{\frac{\epsilon^2}{2} \langle Z^{\bm a} \rangle(s) } \right] < \infty$. By Lemma \ref{lemma:exponential martingale} and condition $(i)$, we know
	\[
	\begin{aligned}
		& \E^{\p^W}\left[ e^{\epsilon \left[\int_t^T r(s,X_s^{\bm a^*},a_s^*) \dd s + h(X_T^{\bm a^*}) \right]}   \Big| X_t^{\bm a^*} = x \right] \\
		= & \E^{\p^W}\left[ e^{\epsilon \left[\int_t^T r(s,X_s^{\bm a^*},a_s^*) \dd s + V^*(T,X_T^{\bm a^*}) \right]}   \Big| X_t^{\bm a^*} = x \right] = e^{\epsilon V^*(t,x)}.
	\end{aligned}
	\]
	Moreover, when $\epsilon > (<) 0$, for any $\bm a$, By Lemma \ref{lemma:exponential martingale} and condition $(ii)$, we have 
	\[
	\begin{aligned}
		& \E^{\p^W}\left[ e^{\epsilon \left[\int_t^T r(s,X_s^{\bm a},a_s) \dd s + h(X_T^{\bm a}) \right]}   \Big| X_t^{\bm a} = x \right] \\
		= & \E^{\p^W}\left[ e^{\epsilon \left[\int_t^T r(s,X_s^{\bm a},a_s) \dd s + V^*(T,X_T^{\bm a}) \right]}   \Big| X_t^{\bm a} = x \right] \leq (\geq) e^{\epsilon V^*(t,x)}.
	\end{aligned}
	\]
	Hence, as long as $\epsilon\neq 0$, we have
	\[ \frac{1}{\epsilon}\log \E^{\p^W}\left[ e^{\epsilon \left[\int_t^T r(s,X_s^{\bm a},a_s) \dd s + h(X_T^{\bm a}) \right]}   \Big| X_t^{\bm a} = x \right] \leq V^*(t,x) . \]
	This verifies that $V^*$ is the optimal value function and $\bm a^*$ is the optimal control.
\end{proof}

\subsection{Proof of Theorem \ref{thm:exploratory martingale}}
The next lemma about the entropy-maximizing distribution is useful to our proof. It is the same lemma appeared in \citet[Lemma 13]{jia2022q}.
\begin{lemma}
	\label{lemma:entropy max}
	Let $\gamma>0$ and a measurable function $q:\mathcal{A}\to\mathbb{R}$ with $\int_{{\cal A}} \exp\{\frac{1}{\lambda}q(a)\} \dd a < \infty$ be given. Then  $\bm\pi^*(a) = \frac{\exp\{\frac{1}{\lambda}q(a)\}}{\int_{{\cal A}} \exp\{\frac{1}{\lambda}q(a)\} \dd a} \in \mathcal{P}(\mathcal{A})$ is the unique maximizer of the following problem
	\begin{equation}
		\label{eq:entropy max}
		\max_{\pi(\cdot)\in
			\mathcal{P}(\mathcal{A})}\int_{\mathcal{A}} [q(a) - \lambda\log\bm\pi(a)]\bm\pi(a)\dd a .
	\end{equation}	
\end{lemma}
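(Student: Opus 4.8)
The plan is to recognize this as the Gibbs variational principle and reduce it to the non-negativity of the Kullback--Leibler divergence. First I would introduce the normalizing constant $Z := \int_{\mathcal{A}} \exp\{\frac{1}{\lambda}q(a)\}\,\dd a$, which is finite by hypothesis and strictly positive, so that $\bm\pi^*(a) = Z^{-1}\exp\{\frac{1}{\lambda}q(a)\}$ is a well-defined element of $\mathcal{P}(\mathcal{A})$. Inverting this relation yields the key pointwise identity $q(a) = \lambda\log Z + \lambda\log\bm\pi^*(a)$ on $\mathcal{A}$, which is the substitution that collapses the whole argument.

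Next I would plug this expression for $q$ into the objective functional. For an arbitrary $\pi\in\mathcal{P}(\mathcal{A})$ the computation gives
\begin{equation*}
\int_{\mathcal{A}} [q(a) - \lambda\log\pi(a)]\pi(a)\,\dd a = \lambda\log Z - \lambda\int_{\mathcal{A}} \log\frac{\pi(a)}{\bm\pi^*(a)}\,\pi(a)\,\dd a = \lambda\log Z - \lambda\, D_{KL}(\pi\,\|\,\bm\pi^*),
\end{equation*}
so that the objective equals the constant $\lambda\log Z$ minus $\lambda$ times a relative entropy. Since $\lambda>0$, maximizing the objective over $\pi$ is therefore equivalent to minimizing $D_{KL}(\pi\,\|\,\bm\pi^*)$.

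The decisive step is then the non-negativity of relative entropy: by Jensen's inequality applied to the strictly convex map $t\mapsto t\log t$ (equivalently, to $-\log$), one has $D_{KL}(\pi\,\|\,\bm\pi^*)\geq 0$, with equality if and only if $\pi = \bm\pi^*$ almost everywhere. This simultaneously identifies $\bm\pi^*$ as a maximizer, pins the optimal value at $\lambda\log Z$, and delivers the uniqueness claim via the equality case.

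The main obstacle I anticipate is not the algebra but the integrability bookkeeping. One must ensure the objective is well-defined (possibly equal to $-\infty$) for a general $\pi\in\mathcal{P}(\mathcal{A})$, and that the rearrangement into the KL form does not create an ill-posed $\infty-\infty$ from the separate $\int q\,\pi$ and entropy pieces. The hypothesis $\int_{\mathcal{A}}\exp\{\frac{1}{\lambda}q\}\,\dd a<\infty$ is exactly what controls the contribution of $q$ and guarantees $\bm\pi^*$ is a genuine density, and the strictness in Jensen's inequality must be invoked with care to upgrade mere optimality to \emph{unique} optimality.
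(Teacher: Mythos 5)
Your proof is correct: the rewriting of the objective as $\lambda\log Z - \lambda D_{KL}(\pi\,\|\,\bm\pi^*)$ followed by non-negativity of relative entropy (with the equality case of Jensen giving uniqueness a.e.) is the standard Gibbs variational argument, and your attention to the $\infty-\infty$ issue and to strictness in Jensen is exactly the right bookkeeping. The paper itself does not prove this lemma but simply cites Lemma 13 of \citet{jia2022q}, whose proof is this same argument, so your proposal fills in precisely what the paper outsources.
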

\begin{proof}
	See the proof of \citet[Lemma 13]{jia2022q}.
\end{proof}

We now turn to the proof of Theorem \ref{thm:exploratory martingale}. To ease our notation, we use $q^*$ introduced in \eqref{eq:q rate} in Definition \ref{def:def q function}. Even though it is defined after Theorem \ref{thm:exploratory martingale} in the main text, we simply use its notation here. 
\begin{proof}
	Since for any initial condition $(t,x)$,
	\[ \int_t^s \left\{ \left[ r(u, X_{u}^{\bm \pi^*}, a_{u}^{\bm \pi^*}) - \lambda\log\bm\pi^*(a_{u}^{\bm\pi^*}|u, X_{u}^{\bm\pi^*}) \right] \dd u + \frac{\epsilon}{2} \dd \langle  J^{*^{\bm \pi^*}} \rangle(u)\right\}+ J^*(s, X_s^{\bm \pi^*};\epsilon) \]
	is an $(\{\f_s^X \}_{s\geq 0},\p)$- (local) martingale, we have 
	\begin{equation}
		\label{eq:hjb exploratory}
		\begin{aligned}
			0 = & \lim_{s\to t^+} \frac{1}{s-t}\E^{\p}\Bigg[ \int_t^s \left\{ \left[ r(u, X_{u}^{\bm \pi^*}, a_{u}^{\bm \pi^*}) - \lambda\log\bm\pi^*(a_{u}^{\bm\pi^*}|u, X_{u}^{\bm\pi^*}) \right] \dd u + \frac{\epsilon}{2} \dd \langle  J^{*^{\bm \pi^*}} \rangle(u)\right\} \\
			& + J^*(s, X_s^{\bm \pi^*};\epsilon) - J^*(t,x;\epsilon) \Big| X_t^{\bm\pi^*} = x\Bigg] \\
			= & \int_{{\cal A}} \left\{ \mathcal{L}^{a} J^*(t,x;\epsilon) + r(t,x,a) - \lambda \log\bm\pi^*(a|t,x) + \frac{1}{2}\epsilon \left| \sigma(t,x,a)^\top \frac{\partial J^*}{\partial x}(t,x;\epsilon)\right|^2 \right\} \\
			& \times \bm\pi^*(a|t,x)\dd a .
		\end{aligned}
	\end{equation}	
	
	By the proof of Lemma \ref{lemma:classical martingale}, it suffices to show that for any admissible policy $\bm\pi$, and any initial condition $(t,x)$,
	\[ \int_t^s \left\{ \left[ r(u, X_{u}^{\bm \pi}, a_{u}^{\bm \pi}) - \lambda\log\bm\pi(a_{u}^{\bm\pi}|u, X_{u}^{\bm\pi}) \right] \dd u + \frac{\epsilon}{2} \dd \langle  J^{*^{\bm \pi}} \rangle(u)\right\}+ J^*(s, X_s^{\bm \pi};\epsilon) \]
	is a supermartingale.
	
	For any $t\leq s < s' \leq T$, consider the conditional expectation:
	\[
	\begin{aligned}
		& \E^{\p}\Bigg[  \int_s^{s'} \left\{ \left[ r(u, X_{u}^{\bm \pi}, a_{u}^{\bm \pi}) - \lambda\log\bm\pi(a_{u}^{\bm\pi}|u, X_{u}^{\bm\pi}) \right] \dd u + \frac{\epsilon}{2} \dd \langle  J^{*^{\bm \pi}} \rangle(u)\right\}\\
		& + J^*(s', X_{s'}^{\bm \pi};\epsilon) - J^*(s, X_{s}^{\bm \pi};\epsilon)   \Big| X_t^{\bm\pi} = x \Bigg] \\
		= & \E^{\p^W}\Bigg[\int_s^{s'} \int_{\mathcal{A}}\bigg[ r(u,\tilde X_u^{\bm\pi}, a) - \lambda\log\bm\pi(a|u,\tilde X_u^{\bm\pi}) \\
		& + \frac{\epsilon}{2}|\sigma(u,\tilde X_u^{\bm\pi},a)^\top \frac{\partial J^*}{\partial x}(u,\tilde X_u^{\bm\pi};\epsilon)|^2 + \mathcal{L}^a J^*(u,\tilde X_u^{\bm\pi};\epsilon)     \bigg] \bm\pi(a|u, \tilde X_u^{\bm\pi})\dd a\dd u \Big| \tilde X_t^{\bm\pi} = x \Bigg] \\
		\leq & \E^{\p^W}\Bigg[\int_s^{s'} \int_{\mathcal{A}}\bigg[ r(u,\tilde X_u^{\bm\pi}, a) - \lambda\log\bm\pi^*(a|u,\tilde X_u^{\bm\pi}) \\
		& + \frac{\epsilon}{2}|\sigma(u,\tilde X_u^{\bm\pi},a)^\top \frac{\partial J^*}{\partial x}(u,\tilde X_u^{\bm\pi};\epsilon)|^2 + \mathcal{L}^a J^*(u,\tilde X_u^{\bm\pi};\epsilon)     \bigg] \bm\pi^*(a|u, \tilde X_u^{\bm\pi})\dd a\dd u \Big| \tilde X_t^{\bm\pi} = x \Bigg] = 0,
	\end{aligned}
	\]
	where the last inequality is due to Lemma \ref{lemma:entropy max} and the definition \eqref{eq:pi star in v star}, i.e., $\bm\pi^*(a|t,x) = \frac{\exp\{\frac{1}{\lambda}q^*(t,x,a;\epsilon)\}}{\int_{{\cal A}} \exp\{\frac{1}{\lambda}q^*(t,x,a;\epsilon)\} \dd a} \in \mathcal{P}(\mathcal{A}) $, and the last equality is by \eqref{eq:hjb exploratory}.
\end{proof}
\subsection{Proof of Proposition \ref{prop:qstar1}}
\begin{proof}
	As a side product, in the proof of Theorem \ref{thm:exploratory martingale}, we have recovered the associated exploratory HJB equation \eqref{eq:hjb exploratory}. Together with the notation $q^*$ introduced in \eqref{eq:q rate} in Definition \ref{def:def q function}, and the form of the optimal policy \eqref{eq:pi star in v star}, we can rewrite \eqref{eq:hjb exploratory} as 
	\[\begin{aligned}
		0 = & \int_{{\cal A}} \left[ q^*(t,x,a;\epsilon) - \lambda\log\frac{\exp\{\frac{1}{\lambda}q^*(t,x,a;\epsilon)\}}{\int_{{\cal A}} \exp\{\frac{1}{\lambda}q^*(t,x,a;\epsilon)\} \dd a} \right]  \frac{\exp\{\frac{1}{\lambda}q^*(t,x,a;\epsilon)\}}{\int_{{\cal A}} \exp\{\frac{1}{\lambda}q^*(t,x,a;\epsilon)\} \dd a} \dd a \\
		= & \lambda\log \int_{{\cal A}} \exp\{\frac{1}{\lambda}q^*(t,x,a;\epsilon)\} \dd a .
	\end{aligned}  \]
	That is the desired relation \eqref{eq:optimal q hjb}. Furthermore, plugging \eqref{eq:optimal q hjb} in  \eqref{eq:pi star in v star} gives \eqref{eq:optimal pi and q}.
\end{proof}

\subsection{Proof of Theorem \ref{thm:q optimal}}
\begin{proof}
	\begin{enumerate}
		\item[(i)] Let $\widehat{J^*} = J^*$ and $\widehat{q^*} = q^*$ be the optimal value function and optimal q-function respectively. For any ${\bm\pi}\in {\bm\Pi}$, applying It\^o's lemma to the process ${J^*}(s,X_s^{\bm\pi})$, we obtain for $0\leq t <s\leq T$:
		\[ \begin{aligned}
			& {J^*}(s,{X}_{s}^{\bm\pi};\epsilon) - J^*(t,x;\epsilon) + \int_t^{s} \left\{ [r(u,{X}_{u}^{\bm\pi},a^{\bm\pi}_{u}) - {q^*}(u,{X}_{u}^{\bm\pi},a^{\bm\pi}_{u};\epsilon) ]\dd u + \frac{\epsilon}{2}\dd \langle {J^*}^{\bm\pi} \rangle(u) \right\} \\
			= & \int_t^{s}\Bigg[ \mathcal{L}^{a_u^{\bm\pi}} J^*(u,{X}_{u}^{\bm\pi},a^{\bm\pi}_{u};\epsilon) + r(u,{X}_{u}^{\bm\pi},a^{\bm\pi}_{u}) + \frac{\epsilon}{2}\left| \sigma(u,{X}_{u}^{\bm\pi},a^{\bm\pi}_{u})^\top \frac{\partial }{\partial x}{J^*}(u,{X}_{u}^{\bm\pi};\epsilon) \right|^2 \\
			&  - {q^*}(u,{X}_{u}^{\bm\pi},a^{\bm\pi}_{u};\epsilon) \Bigg]\dd u  + \int_t^{s}\frac{\partial }{\partial x}{J^*}(u,{X}_{u}^{\bm\pi};\epsilon) \circ \dd W_{u} \\
			=&  \int_t^{s} \frac{\partial }{\partial x}{J^*}(u,{X}_{u}^{\bm\pi};\epsilon) \circ \sigma(u,{X}_{u}^{\bm\pi},a^{\bm\pi}_{u}) \dd W_{u} ,
		\end{aligned}\]
		where the $\int \cdots \dd u$ term vanishes due to the definition of $q^*$ in \eqref{eq:optimal q}. Hence \eqref{eq:martingale with q function2 optimal} is a local martingale. To show it is a martingale, it suffices to show 
		\[ \E\left[ \int_0^{T}\left| \sigma(u,{X}_{u}^{\bm\pi},a^{\bm\pi}_{u})^\top \frac{\partial }{\partial x}{J^*}(u,{X}_{u}^{\bm\pi};\epsilon) \right|^2\dd u \right] < \infty .\]
		The finiteness of the above expectation follows from the polynomial growth in $J^*$'s all derivatives, Assumption \ref{ass:dynamic}-(iv), Definition \ref{ass:admissible}-(iii) and the moment estimate in \citet[Lemma 2]{jia2022policypg}.
		
		The form of the optimal policy follows from Proposition \ref{prop:qstar1}.

		\item[(ii)] Define 
		\[\begin{aligned}
			\hat r(t,x,a): = & \frac{\partial \widehat{J^*}}{\partial t}(t,x) + b(t,x,a)\circ \frac{\partial \widehat{J^*}}{\partial x}(t,x) + \frac{1}{2}\sigma\sigma^\top(t,x,a)\circ \frac{\partial^2 \widehat{J^*}}{\partial x^2}(t,x)\\
			&  + \frac{\epsilon}{2}\left| \sigma(t,x,a)^\top \frac{\partial \widehat J^*}{\partial x}(t,x)\right|^2.
		\end{aligned}\]
		Then
		\[ \widehat{J^*}(s,{X}_s^{\bm\pi}) + \int_t^s \left\{ -\hat r(u,{X}_{u}^{\bm\pi},a^{\bm\pi}_{u}) \dd u + \frac{\epsilon}{2}\dd \langle \widehat{J^*}^{\bm\pi} \rangle(u) \right\}\]
		is an $(\{\f_s\}_{s\geq 0}, \p)$-local martingale, which follows from applying It\^o's lemma to the above process. As a result,
		$\int_t^s [r(u,{X}_{u}^{\bm\pi},a^{\bm\pi}_{u}) - \widehat{q^*}(u,{X}_{u}^{\bm\pi},a^{\bm\pi}_{u}) + \hat r(u,{X}_{u}^{\bm\pi},a^{\bm\pi}_{u})  ]\dd u$ is an $(\{\f_s\}_{s\geq 0}, \p)$-local martingale. 
		
		By the same argument as in the proof of \citet[Theorem 6]{jia2022q}, we conclude
		\[\begin{aligned}
			\widehat{q^*}(t,x,a) = & r(t,x,a) + \hat r(t,x,a)  \\
			= & r(t,x,a) + \mathcal{L}^a \widehat{J^*}(t,x) + \frac{\epsilon}{2}\left| \sigma(t,x,a)^\top \frac{\partial \widehat J^*}{\partial x}(t,x)\right|^2,
		\end{aligned}\]
		for every $(t,x,a)$.

		The second constraint in \eqref{eq:q hjb2 optimal} implies that $\widehat{\bm\pi^*}(a|t,x): = \exp\{  \frac{1}{\lambda}\widehat{q^*}(t,x,a) \}$ is a probability density function, and $\widehat{q^*}(t,x,a) = \lambda \log\widehat{\bm\pi^*}(a|t,x)$. Therefore,
		\[\begin{aligned}
			& \int_t^s \left\{ \left[ r(u, X_{u}^{\widehat{\bm\pi^*}}, a_{u}^{\widehat{\bm\pi^*}}) - \lambda\log\widehat{\bm\pi^*}(a_{u}^{\widehat{\bm\pi^*}}|u, X_{u}^{\widehat{\bm\pi^*}}) \right] \dd u + \frac{\epsilon}{2} \dd \langle  \widehat{J}^{*^{\widehat{\bm\pi^*}}} \rangle(u)\right\}+ \widehat{J}^*(s, X_s^{\widehat{\bm\pi^*}}) \\
			= & \int_t^s \left\{ \left[ r(u, X_{u}^{\widehat{\bm\pi^*}}, a_{u}^{\widehat{\bm\pi^*}}) - \widehat{q^*}(u, X_{u}^{\widehat{\bm\pi^*}},a_{u}^{\widehat{\bm\pi^*}}) \right] \dd u + \frac{\epsilon}{2} \dd \langle  \widehat{J}^{*^{\widehat{\bm\pi^*}}} \rangle(u)\right\}+ \widehat{J}^*(s, X_s^{\widehat{\bm\pi^*}})
		\end{aligned}  \]
		is an $(\{\f_s \}_{s\geq 0},\p)$- (local) martingale, which follows from applying It\^o's lemma to the above process. Hence it is also an $(\{\f_s^X \}_{s\geq 0},\p)$- (local) martingale because $\f_s^X$ is a sub-sigma field of $\f_s$.
		
		By Theorem \ref{thm:exploratory martingale}, we conclude that $\widehat{J^*}$ is the optimal value function and $\widehat{\bm\pi^*}$ is the optimal policy.
	\end{enumerate}
\end{proof}

\subsection{Proof of Theorem \ref{thm:q optimal ergodic}}
\begin{proof}
	By the same argument as in the proof of Theorem \ref{thm:q optimal}, we conclude that
	\[ \widehat{q^*}(x,a) =  r(x,a) + \mathcal{L}^a \widehat{J^*}(x) - \widehat{\beta^*} + \frac{\epsilon}{2}\left| \sigma(x,a)^\top \frac{\partial \widehat J^*}{\partial x}(x)\right|^2. \]
	The second constraint in \eqref{eq:q hjb2 optimal ergodic} implies that $\widehat{\bm\pi^*}(a|x) = \exp\{  \frac{1}{\lambda}\widehat{q^*}(x,a) \}$ is a probability density function, and $\widehat{q^*}(x,a) = \lambda \log\widehat{\bm\pi^*}(a|x)$. Applying It\^o's lemma, we obtain 
	\[\begin{aligned}
		& \exp\left\{ \epsilon\left[ \widehat{J^*}(X_T^{\widehat{\bm\pi^*}}) + \int_0^T \left(  r(X_t^{\widehat{\bm\pi^*}}, a_t^{\widehat{\bm\pi^*}}) -\lambda \log\widehat{\bm\pi^*}(a_t^{\widehat{\bm\pi^*}} | X_t^{\widehat{\bm\pi^*}})\right)\dd t \right]  \right\} \\
		= & e^{\epsilon \widehat{J^*}(x)} \exp\left\{  \int_0^T \epsilon\left( \mathcal{L}^{a_t^{\widehat{\bm\pi^*}}} \widehat{J^*}(X_t^{\widehat{\bm\pi^*}}) +  r(X_t^{\widehat{\bm\pi^*}}, a_t^{\widehat{\bm\pi^*}}) -\lambda \log\widehat{\bm\pi^*}(a_t^{\widehat{\bm\pi^*}} | X_t^{\widehat{\bm\pi^*}})\right)\dd t +  Z_t^{\widehat{\bm\pi^*}} \circ \dd W_t  \right\} \\
		= & e^{\epsilon \widehat{J^*}(x) + \epsilon \widehat{\beta^*} T } \exp\left\{  \int_0^T - \frac{1}{2}|Z_t^{\widehat{\bm\pi^*}}|^2  \dd t + Z_t^{\widehat{\bm\pi^*}} \circ \dd W_t  \right\},
	\end{aligned}   \]
	where $Z_t^{\widehat{\bm\pi^*}} =  \epsilon \sigma(X_t^{\widehat{\bm\pi^*}},a_t^{\widehat{\bm\pi^*}}) \frac{\partial }{\partial x}\widehat{J^*}(X_t^{\widehat{\bm\pi^*}})$.
	
	Based on the first condition in \eqref{eq:q hjb2 optimal ergodic}, and Girsanov theorem, we can define a change-of-measure $\frac{\dd \hat{\p}}{\dd \p}|_{\f_T} = \exp\left\{  \int_0^T - \frac{1}{2}|Z_t^{\widehat{\bm\pi^*}}|^2  \dd t + Z_t^{\widehat{\bm\pi^*}} \circ \dd W_t  \right\}$ and $\hat\p$ is a new probability measure that is equivalent to the original probability $\p$. Therefore,
	\[\begin{aligned}
		& \frac{1}{\epsilon T} \log \E\left[ \exp\left\{ \epsilon\left[  \int_0^T \left(  r(X_t^{\widehat{\bm\pi^*}}, a_t^{\widehat{\bm\pi^*}}) -\lambda \log\widehat{\bm\pi^*}(a_t^{\widehat{\bm\pi^*}} | X_t^{\widehat{\bm\pi^*}})\right)\dd t \right]  \right\} \right] \\
		= & \frac{1}{T} \widehat{J^*}(x) + \widehat{\beta^*} + \frac{1}{\epsilon T} \log \E\left[ \exp\left\{ -\epsilon \widehat{J^*}(X_T^{\widehat{\bm\pi^*}}) +  \int_0^T - \frac{1}{2}|Z_t^{\widehat{\bm\pi^*}}|^2  \dd t + Z_t^{\widehat{\bm\pi^*}} \circ \dd W_t   \right\}  \right] \\
		= & \frac{1}{T} \widehat{J^*}(x) + \widehat{\beta^*} + \frac{1}{\epsilon T} \log\E^{\hat\p}\left[ e^{-\epsilon \widehat{J^*}(X_T^{\widehat{\bm\pi^*}}) } \right]. 
	\end{aligned}  \]
	
	Under the extra regularity conditions in the statement, we obtain 
	\[ \lim_{T\to \infty}\frac{1}{\epsilon T} \log \E\left[ \exp\left\{ \epsilon\left[  \int_0^T \left(  r(X_t^{\widehat{\bm\pi^*}}, a_t^{\widehat{\bm\pi^*}}) -\lambda \log\widehat{\bm\pi^*}(a_t^{\widehat{\bm\pi^*}} | X_t^{\widehat{\bm\pi^*}})\right)\dd t \right]  \right\} \right] = \widehat{\beta^*} . \] 
	
	For any $\bm\pi\in \bm\Pi$, repeat the above calculation, we obtain 
	\[ \begin{aligned}
		& \frac{1}{\epsilon T} \log \E\left[ \exp\left\{ \epsilon\left[  \int_0^T \left(  r(X_t^{\bm\pi}, a_t^{\bm\pi}) -\lambda \log\bm\pi(a_t^{\bm\pi} | X_t^{\bm\pi})\right)\dd t \right]  \right\} \right] \\
		=& \frac{1}{T} \widehat{J^*}(x) + \widehat{\beta^*} + \frac{1}{\epsilon T} \log \E^{\p^{\bm\pi}}\left[ \exp\left\{ -\epsilon \widehat{J^*}(X_T^{\bm\pi}) + \epsilon \int_0^T \left[ \widehat{q^*}(X_t^{\bm\pi}, a_t^{\bm\pi}) - \lambda\log\bm\pi(a_t^{\bm\pi} | X_t^{\bm\pi}) \right] \dd t \right\}  \right],
	\end{aligned}  \]
	where $\frac{\dd\p^{\bm\pi}}{\dd \p}|_{\f_T} = \exp\left\{  \int_0^T - \frac{1}{2}|Z_t^{\bm\pi}|^2  \dd t + Z_t^{\bm\pi} \circ \dd W_t  \right\}$, and $Z_t^{\bm\pi} =  \epsilon \sigma(X_t^{\bm\pi},a_t^{\bm\pi}) \frac{\partial }{\partial x}\widehat{J^*}(X_t^{\bm\pi})$.
	
	Recall that 
	\[ \int_{\mathcal A} \left[ \widehat{q^*}(x,a) - \lambda\log\bm\pi(a|x) \right]\dd \bm\pi(a|x)\dd a \leq \int_{\mathcal A} \left[ \widehat{q^*}(x,a) - \lambda\log\widehat{\bm\pi^*}(a|x) \right]\widehat{\bm\pi^*}(a|x) \dd  a = 0 \]
	by Lemma \ref{lemma:entropy max} and the fact that $\widehat{\bm\pi^*}(a|x) = \exp\{  \frac{1}{\lambda}\widehat{q^*}(x,a) \}$, we know if $\epsilon > (<) 0$,
	\[ \E^{\p^{\bm\pi}}\left[ \exp\left\{  \frac{\epsilon( 1+\delta)}{\delta} \int_0^T \left[ \widehat{q^*}(X_t^{\bm\pi}, a_t^{\bm\pi}) - \lambda\log\bm\pi(a_t^{\bm\pi} | X_t^{\bm\pi}) \right] \dd t \right\}  \right] \leq (\geq) 0 ,\]
	by Lemma \ref{lemma:exponential martingale}.
	
	By H\"older's inequality, we obtain 
	\[
	\begin{aligned}
		& \frac{1}{\epsilon T}\log\E^{\p^{\bm\pi}}\left[ \exp\left\{ -\epsilon \widehat{J^*}(X_T^{\bm\pi}) + \epsilon \int_0^T \left[ \widehat{q^*}(X_t^{\bm\pi}, a_t^{\bm\pi}) - \lambda\log\bm\pi(a_t^{\bm\pi} | X_t^{\bm\pi}) \right] \dd t \right\}  \right] \\
		\leq & \frac{1}{(1+\delta)\epsilon T }\log \E^{\p^{\bm\pi}}\left[ \exp\left\{ -\epsilon(1+\delta) \widehat{J^*}(X_T^{\bm\pi}) \right\}  \right] \\
		& + \frac{\delta}{(1+\delta) \epsilon T} \log \E^{\p^{\bm\pi}}\left[ \exp\left\{  \frac{\epsilon( 1+\delta)}{\delta} \int_0^T \left[ \widehat{q^*}(X_t^{\bm\pi}, a_t^{\bm\pi}) - \lambda\log\bm\pi(a_t^{\bm\pi} | X_t^{\bm\pi}) \right] \dd t \right\}  \right] \\
		\leq & \frac{1}{(1+\delta)\epsilon T }\log \E^{\p^{\bm\pi}}\left[ \exp\left\{ -\epsilon(1+\delta) \widehat{J^*}(X_T^{\bm\pi}) \right\}  \right] \to 0,
	\end{aligned}
	\]
	as $T\to\infty$. Hence, we have shown that under any $\bm\pi\in \bm\Pi$, we have 
	\[ \limsup_{T\to \infty} \frac{1}{\epsilon T} \log \E\left[ \exp\left\{ \epsilon\left[  \int_0^T \left(  r(X_t^{\bm\pi}, a_t^{\bm\pi}) -\lambda \log\bm\pi(a_t^{\bm\pi} | X_t^{\bm\pi})\right)\dd t \right]  \right\} \right] \leq \widehat{\beta^*} .  \]
	This proves the optimality of the policy $\widehat{\bm\pi^*}$. Consequently, $\widehat{J^*}$, $\widehat{q^*}$, $\widehat{\beta^*}$ are respectively the optimal value function, the optimal q-function, and the optimal value.
\end{proof}

\subsection{Proof of Theorem \ref{thm:convergence of merton}}
We first state a result about a recursive sequence.
	
	\begin{lemma}
		\label{lemma:recursive mse}
		Suppose there are sequences of non-negative numbers $\{e_i\}_{i\geq 0}, \{ a_i \}_{i\geq 0},\{\eta_i\}_{i\geq 0}$ such that $a_i \in (0,1)$, $e_{i+1} \leq (1 -  a_i) e_i + a_i^2 \eta_i$ for all $i\geq 0$. If for all $i\geq 0$, with $a_i \leq a_{i+1}(1 + A a_{i+1})$ for some constant $0 < A < 1$, satisfying $\frac{1}{1-A} \geq \frac{(1-a_0)e_0}{a_0\eta_0} + a_0$, and $\eta_i$ is non-decreasing in $i$, then $e_{i+1} \leq \frac{1}{1-A} a_i \eta_i$ for all $i\geq 0$. In particular, for any $p\in(\frac{1}{2}, 1]$, a sequence in the form of $a_i = \frac{\alpha}{(i + \beta)^p}$ with suitable choice of $\alpha,\beta$ satisfies $a_i \leq a_{i+1}(1 + A a_{i+1})$ for all $i\geq 0$.
	\end{lemma}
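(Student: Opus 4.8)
The plan is to prove the main recursive bound $e_{i+1}\le \frac{1}{1-A}a_i\eta_i$ by induction on $i$, and then to verify the concrete power-sequence regularity as a separate elementary estimate. I would first treat the base case $i=0$: substituting into the recursion gives $e_1\le (1-a_0)e_0+a_0^2\eta_0$, and the target $e_1\le \frac{1}{1-A}a_0\eta_0$ is, after dividing by the positive quantity $a_0\eta_0$, precisely the standing hypothesis $\frac{1}{1-A}\ge \frac{(1-a_0)e_0}{a_0\eta_0}+a_0$. So the base case is immediate.

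For the inductive step I would assume the claim at index $i-1$, namely $e_i\le \frac{1}{1-A}a_{i-1}\eta_{i-1}$, and feed it into $e_{i+1}\le (1-a_i)e_i+a_i^2\eta_i$. Then I apply, under nonnegative coefficients (here $1-a_i>0$ since $a_i\in(0,1)$, and $\frac{1}{1-A}>0$ since $A<1$), the monotonicity $\eta_{i-1}\le\eta_i$ and the step-size regularity $a_{i-1}\le a_i(1+Aa_i)$, which is exactly the hypothesis with the index shifted by one. Factoring out $\frac{1}{1-A}a_i\eta_i$ then leaves the bracket $(1-a_i)(1+Aa_i)+(1-A)a_i$; note here the contribution of the $a_i^2\eta_i$ term is $(1-A)a_i$ inside the bracket. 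Expanding and telescoping the $\pm a_i$ and $\pm Aa_i$ terms collapses this to exactly $1-Aa_i^2$, which is $\le 1$ because $A>0$. Hence $e_{i+1}\le \frac{1}{1-A}a_i\eta_i$, closing the induction.

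For the ``in particular'' clause I would reduce the condition $a_i\le a_{i+1}(1+Aa_{i+1})$ to the inequality $\big(\tfrac{i+1+\beta}{i+\beta}\big)^p\le 1+\frac{A\alpha}{(i+1+\beta)^p}$. I bound the left-hand side by concavity of $t\mapsto t^p$, i.e.\ $(1+x)^p\le 1+px$ for $0\le p\le 1$ and $x\ge 0$, giving $1+\frac{p}{i+\beta}$; and I bound $(i+1+\beta)^p\le i+1+\beta$ since $p\le 1$ and the base exceeds $1$. This reduces the requirement to $A\alpha\ge p\,\tfrac{i+1+\beta}{i+\beta}$, whose right side decreases in $i$ with maximum $p(1+\beta)/\beta$ at $i=0$. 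Thus any pair with $\frac{p(1+\beta)}{A\beta}\le \alpha<\beta^p$ works, and this interval is nonempty for $\beta$ large (the lower bound tends to $p/A$ while $\beta^p\to\infty$); the upper bound $\alpha<\beta^p$ simultaneously guarantees $a_i\in(0,1)$.

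The argument is essentially bookkeeping, so the only genuinely delicate point is the algebraic simplification of the bracket to $1-Aa_i^2$: a sign slip or, more insidiously, mistaking the $a_i^2\eta_i$ contribution as $(1-A)a_i^2$ rather than $(1-A)a_i$ after factoring, would leave an uncontrolled $(1-2A)a_i^2$ term and force an unnecessary case split on whether $A\lessgtr \tfrac12$. Getting that factoring exactly right is where I would concentrate the care; everything else follows from monotonicity, the index-shifted regularity, and a standard concavity inequality.
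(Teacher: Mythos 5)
Your proposal is correct and takes essentially the same route as the paper: induction with the base case read off directly from the hypothesis $\frac{1}{1-A}\ge \frac{(1-a_0)e_0}{a_0\eta_0}+a_0$, and an inductive step that inserts $e_i\le\frac{1}{1-A}a_{i-1}\eta_{i-1}$, uses $\eta_{i-1}\le\eta_i$ and $a_{i-1}\le a_i(1+Aa_i)$, and collapses the resulting bracket to $1-Aa_i^2\le 1$ (the paper writes the same cancellation as $\frac{1}{1-A}a_i\eta_i-\frac{A}{1-A}a_i^3\eta_i$). The only cosmetic difference is in the power-law clause, where you bound $\bigl(1+\tfrac{1}{i+\beta}\bigr)^p\le 1+\tfrac{p}{i+\beta}$ and give explicit admissible ranges for $\alpha,\beta$, while the paper reduces to the $i=0$ case by monotonicity and asserts $\alpha$ large enough suffices; your version is, if anything, slightly more explicit.
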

	\begin{proof}
		We prove by induction. For $i=0$, we have 
		\[ e_1 \leq (1-a_0)e_0 + a_0^2\eta_0 \leq \frac{1}{1-A} a_0 \eta_0 . \]
		
		Suppose the conclusion holds for all $i=0,\cdots,I-1$. Consider the case for $I$. By the recursive relation and the induction, we have
		\[\begin{aligned}
			e_{I+1} \leq & (1 -  a_I) e_I + a_I^2 \eta_I \leq \frac{1}{1-A} (1 -  a_I) a_{I-1} \eta_{I-1} + a_I^2 \eta_I \\
			\leq & \frac{1}{1-A} (1 -  a_I) a_{I}(1 + A a_{I}) \eta_I + a_I^2 \eta_I \\
			= &  \frac{1}{1-A}a_I\eta_I(1 + Aa_I - a_I - Aa_I^2) + a_I^2\eta_I \\
			= & \frac{1}{1-A}a_I\eta_I - \frac{A}{1-A} a_I^3\eta_I < \frac{1}{1-A}a_I\eta_I .
		\end{aligned}  \]
		
		Finally, consider a sequence in the form of $a_i = \frac{\alpha}{(i + \beta)^p}$, then $a_i \leq a_{i+1}(1 + A a_{i+1})$ is equivalent to 
		\[ (i+\beta+1)^p - (i + \beta)^p \leq A \frac{\alpha (i+\beta)^p}{(i+\beta+1)^p} .\]
		Note that the left-hand side is decreasing in $i$ when $p\leq 1$ and the right-hand side is increasing in $i$, it suffices to have 
		\[ (\beta+1)^p - \beta^p \leq A \frac{\alpha \beta^p}{(\beta+1)^p} . \]
		The above relation holds as long as $\alpha$ is sufficiently large, and $\beta$ can be arbitrary.
	\end{proof}
	
To get the big picture of how the proof is conducted, I first illustrate the ideas by assuming ideal continuous sampling and computation of integral are feasible in the following Stages 1 and 2. Then I give the proof by considering the discretization error in Stage 3.
	\subsubsection{Stage 1: Properties of the increment}
	We first calculate the conditional mean and variance of \eqref{eq:update theta}, \eqref{eq:update psi 1}, and \eqref{eq:update psi 2}. To ease notations, we suppress the subscript $i$ in this stage.  
	
	Among \eqref{eq:update theta}, \eqref{eq:update psi 1}, and \eqref{eq:update psi 2}, the common term is $\dd V\left(t , X(t);\theta  \right) - q\left(a(t) ;\bm\psi\right)\dd t + \frac{1-\gamma}{2}\dd \langle V\left( \cdot,X(\cdot);\theta \right)\rangle(t)$, which is the \textit{temporal difference (TD)} error in the continuous-time setting. By It\^o's lemma, we have
	\begin{equation}
		\label{eq:td}
		\begin{aligned}
			& \dd V\left(t , X(t);\theta  \right) - q\left( a(t) ;\bm\psi\right)\dd t + \frac{1-\gamma}{2}\dd \langle V\left( \cdot,X(\cdot);\theta \right)\rangle(t) \\
			=&  \dd \log X(t) - \theta\dd t + \left[ \frac{\left( a(t) - \psi_{1} \right)^2}{2\psi_{2}} + \frac{\lambda}{2}\log2\pi\lambda + \frac{\lambda}{2}\log\psi_{2} \right] \dd t + \frac{1-\gamma}{2} \dd \langle \log X(\cdot) \rangle(t) \\
			= & \left[ r + (\mu - r)a(t) - \frac{\gamma}{2}a(t)^2\sigma^2 \right]\dd t + a(t)\sigma \dd W(t) - \theta\dd t \\
			& + \left[ \frac{\left( a(t) - \psi_{1} \right)^2}{2\psi_{2}} + \frac{\lambda}{2}\log2\pi\lambda + \frac{\lambda}{2}\log\psi_{2} \right] \dd t . 
		\end{aligned}
	\end{equation}
	
	Recall here $a(t)\sim \mathcal{N}(\psi_1,\lambda\psi_2)$, hence, for a given $\theta,\bm\psi$,
	\[ \begin{aligned}
		& \E\left[  \int_0^T \frac{\partial V}{\partial \theta}\left(t, X(t);\theta \right) \left[ \dd V\left(t , X(t);\theta  \right) - q\left( a(t) ;\bm\psi\right)\dd t + \frac{1-\gamma}{2}\dd \langle V\left( \cdot,X(\cdot);\theta \right)\rangle(t)  \right]  \right] \\
		= & \E\left[ \int_0^T (T-t)   \left( r + (\mu - r)a(t) - \frac{\gamma}{2}a(t)^2\sigma^2  -\theta +  \frac{\left( a(t) - \psi_{1} \right)^2}{2\psi_{2}} + \frac{\lambda}{2}\log2\pi\lambda + \frac{\lambda}{2}\log\psi_{2} \right)\dd t \right] \\
		= & \int_0^T (T-t) \left(  r + (\mu - r) \psi_{1} - \frac{\gamma}{2}\left( \psi_{1}^2 + \lambda \psi_{2} \right) \sigma^2  -\theta +  \frac{\lambda}{2} + \frac{\lambda}{2}\log2\pi\lambda + \frac{\lambda}{2}\log\psi_{2} \right) \dd t \\
		= & \frac{T^2}{2} h_{\theta}( \psi_{1},\psi_{2},\theta),
	\end{aligned} \]
	where 
	\begin{equation}
		\label{eq:mean update theta}
		h_{\theta}( \psi_{1},\psi_{2},\theta) \\
		=  r-\theta+\frac{\lambda}{2}\log2\pi\lambda+\frac{\lambda}{2}\log\psi_2 +  (\mu - r) \psi_{1}  - \frac{\gamma}{2}\left( \psi_{1}^2 + \lambda \psi_{2} \right) \sigma^2  +  \frac{\lambda}{2} .   
	\end{equation}
	\[ \begin{aligned}
		& \E\left[  \int_0^T \frac{\partial q}{\partial \psi_1}\left( a(t);\bm\psi \right) \left[ \dd V\left(t , X(t);\theta  \right) - q\left( a(t) ;\bm\psi\right)\dd t + \frac{1-\gamma}{2}\dd \langle V\left( \cdot,X(\cdot);\theta \right)\rangle(t)  \right]   \right] \\
		= & \E\left[ \int_0^T \frac{a(t) - \psi_1}{2\psi_2}  \left( r + (\mu - r)a(t) - \frac{\gamma}{2}a(t)^2\sigma^2  -\theta +  \frac{\left( a(t) - \psi_{1} \right)^2}{2\psi_{2}} + \frac{\lambda}{2}\log2\pi\lambda + \frac{\lambda}{2}\log\psi_{2} \right)\dd t \right] \\
		= & T h_{\psi,1}( \psi_{1},\psi_{2},\theta),
	\end{aligned} \]
	where 
	\begin{equation}
		\label{eq:mean update psi 1}
		h_{\psi, 1}( \psi_{1},\psi_{2},\theta) = \frac{\lambda \gamma \sigma^2}{2} \left( \frac{\mu-r}{\gamma\sigma^2} - \psi_1 \right) = \frac{\lambda \gamma \sigma^2}{2} \left( \psi_1^* - \psi_1 \right).
	\end{equation}
	\[ \begin{aligned}
		& -\E\left[  \int_0^T \frac{\partial q}{\partial \psi_2^{-1}}\left(t, R(t), a(t);\bm\psi \right) \left[ \dd V\left(t , X(t);\theta  \right) - q\left( a(t) ;\bm\psi\right)\dd t + \frac{1-\gamma}{2}\dd \langle V\left( \cdot,X(\cdot);\theta \right)\rangle(t)  \right]  \right] \\
		= & -\E\Bigg[ \int_0^T \left( -\frac{\left( a(t) - \psi_1 \right)^2}{2} + \frac{\lambda}{2}\psi_2 \right) \bigg( r + (\mu - r)a(t) - \frac{\gamma}{2}a(t)^2\sigma^2  -\theta \\
		& +  \frac{\left( a(t) - \psi_{1} \right)^2}{2\psi_{2}} + \frac{\lambda}{2}\log2\pi\lambda + \frac{\lambda}{2}\log\psi_{2} \bigg)\dd t \Bigg] \\
		= & T h_{\psi,2}( \psi_{1},\psi_{2},\theta ),
	\end{aligned} \]
	where 
	\begin{equation}
		\label{eq:mean update psi 2}
		h_{\psi, 2}( \psi_{1},\psi_{2},\theta) \\
		= \frac{\lambda^2}{2}\left( \psi_2 - \gamma \sigma^2 \psi_2^2\right) = \frac{\lambda^2 \gamma \sigma^2}{2}\psi_2 (\frac{1}{\gamma\sigma^2} - \psi_2) =  \frac{\lambda^2 \gamma \sigma^2}{2}\psi_2 (\psi_2^* - \psi_2) .
	\end{equation}

	Next, we estimate the variance of the increment. For a given $\theta,\bm\psi$, by Cauchy-Schwarz inequality and Burkholder-Davis-Gundy inequality, we have
	\[ \begin{aligned}
		& \operatorname{Var}\left[  \int_0^T \frac{\partial V}{\partial \theta}\left(t, X(t);\theta \right) \left[ \dd V\left(t , X(t);\theta  \right) - q\left( a(t) ;\bm\psi\right)\dd t + \frac{1-\gamma}{2}\dd \langle V\left( \cdot,X(\cdot);\theta \right)\rangle(t)  \right]  \right] \\
		\leq & \E\left[ \left(  \int_0^T \frac{\partial V}{\partial \theta}\left(t, X(t);\theta \right) \left[ \dd V\left(t , X(t);\theta  \right) - q\left( a(t) ;\bm\psi\right)\dd t + \frac{1-\gamma}{2}\dd \langle V\left( \cdot,X(\cdot);\theta \right)\rangle(t)  \right] \right)^2 \right] \\
		\leq & 2\E\left[ \int_0^T (T-t)^2   \left( r + (\mu - r)a(t) - \frac{\gamma}{2}a(t)^2\sigma^2  -\theta +  \frac{\left( a(t) - \psi_{1} \right)^2}{2\psi_{2}} + \frac{\lambda}{2}\log2\pi\lambda + \frac{\lambda}{2}\log\psi_{2} \right)^2\dd t \right] \\
		& + 2\E\left[ \int_0^T (T-t)^2 a(t)^2\sigma^2 \dd t \right] \\
		\leq & C(\mu,r,\sigma,T,\gamma) \left[1+ \psi_1^4 + \lambda^2 \psi_2^2 + \theta^2 + \lambda^2(\log\lambda)^2 + \lambda^2 + \lambda^2(\log\psi_2)^2 \right].
	\end{aligned} \]
	Similarly,
	\[ \begin{aligned}
		& \operatorname{Var}\left[  \int_0^T \frac{\partial q}{\partial \psi_1}\left( a(t);\bm\psi \right) \left[ \dd V\left(t , X(t);\theta  \right) - q\left( a(t) ;\bm\psi\right)\dd t + \frac{1-\gamma}{2}\dd \langle V\left( \cdot,X(\cdot);\theta \right)\rangle(t)  \right]   \right] \\
		\leq & \E\left[ \left( \int_0^T \frac{\partial q}{\partial \psi_1}\left( a(t);\bm\psi \right) \left[ \dd V\left(t , X(t);\theta  \right) - q\left( a(t) ;\bm\psi\right)\dd t + \frac{1-\gamma}{2}\dd \langle V\left( \cdot,X(\cdot);\theta \right)\rangle(t)  \right] \right)^2  \right] \\
		\leq & 2\E\left[ \int_0^T \left(\frac{a(t) - \psi_1}{2\psi_2} \right)^2  \left( r + (\mu - r)a(t) - \frac{\gamma}{2}a(t)^2\sigma^2  -\theta +  \frac{\left( a(t) - \psi_{1} \right)^2}{2\psi_{2}} + \frac{\lambda}{2}\log2\pi\lambda + \frac{\lambda}{2}\log\psi_{2} \right)^2 \dd t \right] \\
		& + 2\E\left[ \int_0^T \left(\frac{a(t) - \psi_1}{2\psi_2} \right)^2 a(t)^2\sigma^2\dd t  \right] \\
		\leq & \lambda C(\mu,r,\sigma,T,\gamma)\left[1 + \lambda^2 + \lambda^3\psi_2 + \psi_2^{-1}(1 + \theta^2 + \psi_1^2 + \lambda^2(\log\lambda)^2 + \lambda^2 + \lambda^2 (\log\psi_2)^2)  \right] .
	\end{aligned} \]
	\[ \begin{aligned}
		& \operatorname{Var}\left[  \int_0^T \frac{\partial q}{\partial \psi_2^{-1}}\left(t, R(t), a(t);\bm\psi \right) \left[ \dd V\left(t , X(t);\theta  \right) - q\left( a(t) ;\bm\psi\right)\dd t + \frac{1-\gamma}{2}\dd \langle V\left( \cdot,X(\cdot);\theta \right)\rangle(t)  \right]  \right] \\
		\leq & \E\left[ \left( \int_0^T \frac{\partial q}{\partial \psi_2^{-1}}\left(t, R(t), a(t);\bm\psi \right) \left[ \dd V\left(t , X(t);\theta  \right) - q\left( a(t) ;\bm\psi\right)\dd t + \frac{1-\gamma}{2}\dd \langle V\left( \cdot,X(\cdot);\theta \right)\rangle(t)  \right] \right)^2 \right] \\
		\leq & 2\E\Bigg[ \int_0^T \left( -\frac{\left( a(t) - \psi_1 \right)^2}{2} + \frac{\lambda}{2}\psi_2 \right)^2 \bigg( r + (\mu - r)a(t) - \frac{\gamma}{2}a(t)^2\sigma^2  -\theta \\
		& +  \frac{\left( a(t) - \psi_{1} \right)^2}{2\psi_{2}} + \frac{\lambda}{2}\log2\pi\lambda + \frac{\lambda}{2}\log\psi_{2} \bigg)^2\dd t \Bigg] + 2\E\left[\int_0^T \left( -\frac{\left( a(t) - \psi_1 \right)^2}{2} + \frac{\lambda}{2}\psi_2  \right)^2  a(t)^2\sigma^2\dd t \right]\\
		\leq & \lambda^2 C(\mu,r,\sigma,T,\gamma)\left[ 1 + \lambda^2\psi_2^2 \left( 1 + \theta^2 + \psi_1^2 + \lambda^2(\log\lambda)^2 + \lambda^2 + \lambda^2(\log\psi_2)^2\right)  +  \lambda^3\psi_2^3  \right] .
	\end{aligned} \]
	\subsubsection{Stage 2: Recursive relations}
	From the above calculation, and ignoring the error of approximating the integral \eqref{eq:update theta}, \eqref{eq:update psi 1}, and \eqref{eq:update psi 2} using finite sums, the recursive relation of the sequence in the learning procedure can be written as 
	\begin{equation}
		\label{eq:projection recusive}
		\begin{aligned}
			\theta_{i+1} = & \Pi_{[-c_{i+1}, c_{i+1}]}\left( \theta_i + a_{\theta,i} \left( h_{\theta}(\psi_{1,i},\psi_{2,i},\theta_i) + \zeta_{i+1}  \right) \right),\\
			\psi_{1,i+1} = & \Pi_{[-c_{i+1}, c_{i+1}]}\left( \psi_{1,i} + a_{\psi,i} \left( h_{\psi,1}(\psi_{1,i},\psi_{2,i},\theta_i) + \xi_{1,i+1}  \right) \right),\\
			\psi_{2,i+1} = & \Pi_{[b_{i+1}^{-1}, c_{i+1}]}\left(  \psi_{2,i} + a_{\psi,i} \left( h_{\psi,2}(\psi_{1,i},\psi_{2,i},\theta_i) + \xi_{2,i+1} \right) \right) ,
		\end{aligned}
	\end{equation}
	where $\E[\zeta_{i+1}|\theta_i,\bm\psi_i] = 0$, $\E[\xi_{1,i+1}|\theta_i,\bm\psi_i] = 0$, $\E[\xi_{2,i+1}|\theta_i,\bm\psi_i] = 0$, and $\operatorname{Var}[\zeta_{i+1}|\theta_i,\bm\psi_i]$, $\operatorname{Var}[\xi_{1,i+1}|\theta_i,\bm\psi_i]$, and $\operatorname{Var}[\xi_{1,i+1}|\theta_i,\bm\psi_i]$ satisfy the upper bounds provided in Stage 1.
	
	We first show the convergence of $\bm\psi_i$. Since $\{b_i\}_{i\geq 0},\{c_i\}_{i\geq 0}$ are two increasing, divergent sequence (condition $(i)$ in the statement), there exists an $I$ that depends on $(\mu,r,\sigma,T,\gamma,\lambda)$, such that $\psi_1^* \in [-c_i,c_i]$ and $\psi_2^* \in [b_i^{-1}, c_i]$ for all $i\geq I$. In addition, by conditions $(iii)$, we know $a_{\psi,i}c_i^2\to 0$, so does $a_{\psi,i}c_i\to 0$. Hence, without loss of generality, we can take this $I(\mu,r,\sigma,T,\gamma,\lambda)$ large enough such that $a_{\psi,i}b_i^{-1} \leq a_{\psi,i}c_i \leq \frac{1}{\lambda^2\gamma\sigma^2} $ for all $i\geq I$. In addition, by the conditions $(i),(ii),(iii)$ in the statement, $b_i\uparrow \infty$, $\sum_{i=1}^{\infty}a_{\psi,i}b_i^{-1} = \infty$, and $\sum_{i=1}^{\infty}a_{\psi,i}^2 b_j^2 < \infty$, we know $\sum_{i=1}^{\infty}a_{\psi,i} = \infty$ and $\sum_{i=1}^{\infty}a_{\psi,i}^2 < \infty$. 
	
	For any $i\geq I$, by the property of the projection mapping as in the proof of \citet[Theorem 2]{andradottir1995stochastic}, we have 
	\[ \begin{aligned}
		& \E\left[ \left( \psi_{1,i+1} - \psi_1^*\right)^2  | \theta_i,\bm\psi_i\right] \\
		\leq &  \E\left[ \left(\psi_{1,i} + a_{\psi,i} \left( h_{\psi,1}(\psi_{1,i},\psi_{2,i},\theta_i) + \xi_{1,i+1} \right) - \psi_1^* \right)^2  | \theta_i,\bm\psi_i\right] \\
		\leq & (\psi_{1,i} - \psi_1^*)^2 + 2a_{\psi,i}(\psi_{1,i} - \psi_1^*)h_{\psi,1}(\psi_{1,i},\psi_{2,i},\theta_i)  + a_{\psi,i}^2  h_{\psi,1}(\psi_{1,i},\psi_{2,i},\theta_i)^2 + a_{\psi,i}^2 \operatorname{Var}[\xi_{1,i+1}|\theta_i,\bm\psi_i] \\
		= & (1 - a_{\psi,i}\lambda\gamma\sigma^2 ) (\psi_{1,i} - \psi_1^*)^2 + \frac{1}{4}a_{\psi,i}^2\lambda^2\gamma^2\sigma^4(\psi_{1,i} - \psi_1^*)^2 +  a_{\psi,i}^2 \operatorname{Var}[\xi_{1,i+1}|\theta_i,\bm\psi_i]\\
		\leq & (1 - \frac{1}{2}a_{\psi,i}\lambda\gamma\sigma^2 )^2 (\psi_{1,i} - \psi_1^*)^2 + a_{\psi,i}^2\lambda C \left( 1 + \lambda^2 + \lambda^3 c_i + b_i(1 + c_i^2 + \lambda^2(\log\lambda)^2 + \lambda^2 + \lambda^2 (\log c_i)^2)  \right) .
	\end{aligned}\]
	The almost sure convergence of $\{ (\psi_{1,i} - \psi_1^*)^2\}_{i\geq 0}$ follows from \citet[Theorem 1]{robbins1971convergence} and by the same argument as in \citet[Theorem 2]{andradottir1995stochastic}, we can show $\psi_{1,i} - \psi_1^* \to 0$ almost surely.

	Taking the expectation on both sides yield the recursive relation between two mean squared error:
	\begin{equation}
		\label{eq:recursive error psi 1}
		\begin{aligned}
			& \epsilon_{1,i+1} \\
			\leq & (1 - \frac{1}{2}a_{\psi,i}\lambda\gamma\sigma^2 )^2 \epsilon_{1,i} + a_{\psi,i}^2 \lambda C \left( 1 + \lambda^2 + \lambda^3 c_i + b_i(1 + c_i^2 + \lambda^2(\log\lambda)^2 + \lambda^2 + \lambda^2 (\log c_i)^2)  \right) \\
			\leq & (1 - \frac{1}{2}a_{\psi,i}\lambda\gamma\sigma^2 ) \epsilon_{1,i} + a_{\psi,i}^2 \lambda C \left( 1 + \lambda^2 + \lambda^3 c_i + b_i(1 + c_i^2 + \lambda^2(\log\lambda)^2 + \lambda^2 + \lambda^2 (\log c_i)^2)  \right),
		\end{aligned}
	\end{equation}
	where $\epsilon_{1,i} = \E\left[ (\psi_{1,i} - \psi_{1}^*)^2  \right]$.
	
	By the similar calculation, we have
	\[ \begin{aligned}
		& \E\left[ \left( \psi_{2,i+1} - \psi_2^*\right)^2  | \theta_i,\bm\psi_i\right] \\
		\leq &  \E\left[ \left(\psi_{2,i} + a_{\psi,i} \left( h_{\psi,2}(\psi_{1,i},\psi_{2,i},\theta_i) + \xi_{2,i+1} \right) - \psi_2^* \right)^2  | \theta_i,\bm\psi_i\right] \\
		\leq & (\psi_{2,i} - \psi_2^*)^2 + 2a_{\psi,i}(\psi_{2,i} - \psi_2^*)h_{\psi,2}(\psi_{1,i},\psi_{2,i},\theta_i)  + a_{\psi,i}^2  h_{\psi,2}(\psi_{1,i},\psi_{2,i},\theta_i)^2 + a_{\psi,i}^2 \operatorname{Var}[\xi_{2,i+1}|\theta_i,\bm\psi_i] \\
		= & (1 - a_{\psi,i}\lambda^2\gamma\sigma^2\psi_{2,i} ) (\psi_{2,i} - \psi_2^*)^2 + \frac{1}{4}a_{\psi,i}^2\lambda^4\gamma^2\sigma^4\psi_{2,i}^2(\psi_{2,i} - \psi_2^*)^2 +  a_{\psi,i}^2 \operatorname{Var}[\xi_{2,i+1}|\theta_i,\bm\psi_i]\\
		\leq & (1 - \frac{1}{2}a_{\psi,i}\lambda^2\gamma\sigma^2 \psi_{2,i})^2 (\psi_{2,i} - \psi_2^*)^2 + a_{\psi,i}^2\lambda^2 C \left( 1 + \lambda^2c_i^2 \left( 1 + c_i^2 + \lambda^2(\log\lambda)^2 + \lambda^2 + \lambda^2(\log b_i)^2\right)  +  \lambda^3c_i^3 \right) \\
		\leq & (1 - \frac{1}{2}a_{\psi,i}\lambda^2\gamma\sigma^2 b_i^{-1})^2 (\psi_{2,i} - \psi_2^*)^2 + a_{\psi,i}^2\lambda^2 C \left( 1 + \lambda^2c_i^2 \left( 1 + c_i^2 + \lambda^2(\log\lambda)^2 + \lambda^2 + \lambda^2(\log b_i)^2\right)  +  \lambda^3c_i^3 \right) ,
	\end{aligned}\]
	where the last inequality holds when $a_{\psi,i}b_i^{-1} \leq a_{\psi,i}c_i \leq \frac{1}{\lambda^2\gamma\sigma^2} $. 
	
	The almost sure convergence of $\{ (\psi_{2,i} - \psi_2^*)^2\}_{i\geq 0}$ follows from \citet[Theorem 1]{robbins1971convergence} and by the same argument as in \citet[Theorem 2]{andradottir1995stochastic}, we can show $\psi_{2,i} - \psi_2^* \to 0$ almost surely. 
	
	Taking the expectation on both sides yield the recursive relation between two mean squared error:
	\begin{equation}
		\label{eq:recursive error psi 2}
		\begin{aligned}
			&	\epsilon_{2,i+1} \\
			\leq & (1 - \frac{1}{2}a_{\psi,i}b_i^{-1} \lambda^2\gamma\sigma^2 )^2 \epsilon_{2,i} + a_{\psi,i}^2\lambda^2 C \left(  1 + \lambda^2c_i^2 \left( 1 + c_i^2 + \lambda^2(\log\lambda)^2 + \lambda^2 + \lambda^2(\log c_i)^2\right)  +  \lambda^3c_i^3  \right) \\	
			\leq & (1 - \frac{1}{2}a_{\psi,i}b_i^{-1} \lambda^2\gamma\sigma^2 ) \epsilon_{2,i} + a_{\psi,i}^2\lambda^2 C \left( 1 + \lambda^2c_i^2 \left( 1 + c_i^2 + \lambda^2(\log\lambda)^2 + \lambda^2 + \lambda^2(\log b_i)^2\right)  +  \lambda^3c_i^3 \right),
		\end{aligned}
	\end{equation}
	where $\epsilon_{2,i} = \E\left[ (\psi_{2,i} - \psi_{2}^*)^2  \right]$.
	
	Finally, by the recursive relations \eqref{eq:recursive error psi 1} and \eqref{eq:recursive error psi 2}, and Lemma \ref{lemma:recursive mse}, we obtain that under suitable choices of sequences such that $a_{\psi,n} \sim \frac{1}{n}$, $b_n,c_n \sim \log n$, there are constants $C_1,C_2$ such that 
	\[\epsilon_{1,i+1} \leq C_1 a_{\psi,i}b_ic_i^2 = \tilde{O}(\frac{1}{n}),\  \epsilon_{2,i+1} \leq C_1 a_{\psi,i}b_i^2c_i^4 = \tilde{O}(\frac{1}{n}). \]
	
		\subsubsection{Stage 3: Impact of Discretization}
		In the above analysis, I ignored the error caused by discretization. Here, the errors caused by time discretization are twofold. First, continuously sampling i.i.d. actions $a(t)$ is practically infeasible, and also has measure-theoretical issues (see Remark 2.1 in \cite{szpruch2022optimal} for more details). Second, the integral can only be approximated by suitable finite sums in numerical computation. 
		
		Therefore, let us consider the dynamics of the wealth equation with discretized time grid: $0=t_0<t_1<\cdots<t_K=T$ such that for $t\in [t_{k-1}, t_{k}]$ with $k=1,\cdots,K$, 
		\[ \dd X(t) = [r + (\mu-r)a(t_{k-1})]X(t)\dd t + X(t)a(t_{k-1})\sigma\dd W(t) .\]
		For simplicity, here, the time grid is equally spaced, i.e., $\Delta t = \frac{T}{K}$ and $t_k = k\Delta t$.
		Note that this is the same wealth equation as \eqref{eq:state dynamics X} with a piecewise constant action sequence $a(t_0),a(t_1),\cdots,a(t_{K-1})$. On the interval $[t_{k-1},t_k]$, we can still apply It\^o's lemma to the TD error \eqref{eq:td}, i.e.,
		\[\begin{aligned}
			& V\left(t_k , X(t_k);\theta  \right) - V\left(t_{k-1} , X(t_{k-1});\theta  \right) - q\left( a(t_{k-1}) ;\bm\psi\right)\Delta t \\
			& + \frac{1-\gamma}{2} \left( V\left(t_k , X(t_k);\theta  \right) - V\left(t_{k-1} , X(t_{k-1});\theta  \right) \right)^2 \\
			= & -\theta\Delta t + \log X(t_k) - \log X(t_{k-1}) + \left[ \frac{(a(t_{k-1})-\psi_1)^2}{2\psi_2} + \frac{\lambda}{2}\log2\pi\lambda + \frac{\lambda}{2}\log\psi_2  \right]\Delta t \\
			& + \frac{1-\gamma}{2}\left(-\theta\Delta t + \log X(t_k) - \log X(t_{k-1}) \right)^2 \\
			= & \left[ -\theta + r + (\mu-r)a(t_{k-1}) - \frac{1}{2}\sigma^2 a(t_{k-1})^2  \right]\Delta t + \sigma a(t_{k-1})\left[ W(t_k) - W(t_{k-1})\right] \\
			& + \left[ \frac{(a(t_{k-1})-\psi_1)^2}{2\psi_2} + \frac{\lambda}{2}\log2\pi\lambda + \frac{\lambda}{2}\log\psi_2  \right]\Delta t \\
			& + \frac{1-\gamma}{2} \bigg\{ \left[ -\theta + r + (\mu-r)a(t_{k-1}) - \frac{1}{2}\sigma^2 a(t_{k-1})^2  \right]^2 (\Delta t)^2 + \sigma^2a(t_{k-1})^2 \left[ W(t_k) - W(t_{k-1})\right]^2 \\
			& \quad \quad \quad \quad + 2\left[ -\theta + r + (\mu-r)a(t_{k-1}) - \frac{1}{2}\sigma^2 a(t_{k-1})^2  \right]\sigma a(t_{k-1}) \left[ W(t_k) - W(t_{k-1})\right]\Delta t  \bigg\} \\
			=&: \delta(t_k)
		\end{aligned}   \]
		
		Moreover, the increments \eqref{eq:update theta}, \eqref{eq:update psi 1}, and \eqref{eq:update psi 2} in the algorithm will be approximated by
		\[ \frac{2}{T^2}\sum_{k=1}^K (T - t_{k-1})\delta(t_k),\ \frac{1}{T}\sum_{k=1}^K \frac{a(t_{k-1}) - \psi_1}{2\psi_2}\delta(t_k),\ -\frac{1}{T}\sum_{k=1}^K \left( -\frac{\left( a(t_{k-1}) - \psi_1 \right)^2}{2} + \frac{\lambda}{2}\psi_2 \right)\delta(t_k), \]
		respectively.
		
		Motivated by the calculation in Stage 1, let us mainly focus on the latter two terms. In particular, recall that $a(t_0),\cdots,a(t_{K-1}) \sim \mathcal N(\psi_1,\lambda\psi_2)$ i.i.d., hence,
		\[\begin{aligned}
			& \left| \E\left[  \frac{1}{T}\sum_{k=1}^K \frac{a(t_{k-1}) - \psi_1}{2\psi_2}\delta(t_k) \right] - h_{\psi,1}(\psi_1,\psi_2,\theta)\right| \\
			= &\Bigg| \E\bigg[  \frac{1}{T}\sum_{k=1}^K \frac{a(t_{k-1}) - \psi_1}{2\psi_2} \Big[ (\mu-r-\psi_1\sigma^2)(a(t_{k-1}) - \psi_1)\Delta t \\
			& \quad \quad \quad \quad + \frac{1-\gamma}{2}\left( \cdots (\Delta t)^2 + 2\sigma^2 \psi_1 (a(t_{k-1}) - \psi_1)  \Delta t  \right)  \Big]   \bigg]  - h_{\psi,1}(\psi_1,\psi_2,\theta)\Bigg| \\
			= & \frac{|1-\gamma|}{4} \lambda \left( \mu-r-\psi_1\sigma^2  \right) \left[2 \left( -\theta+r+(\mu-r)\psi_1 - \frac{1}{2}\sigma^2\psi_1^2\right)  + 3\lambda\sigma^2\psi_2 \right] \Delta t  ,
		\end{aligned}  \]
		and
		\[\begin{aligned}
			& \left| -\E\left[  \frac{1}{T}\sum_{k=1}^K \left( -\frac{\left( a(t_{k-1}) - \psi_1 \right)^2}{2} + \frac{\lambda}{2}\psi_2 \right)\delta(t_k) \right] - h_{\psi,2}(\psi_1,\psi_2,\theta)\right| \\
			= &\Bigg| -\E\bigg[  \frac{1}{T}\sum_{k=1}^K \left( -\frac{\left( a(t_{k-1}) - \psi_1 \right)^2}{2} + \frac{\lambda}{2}\psi_2 \right) \Big[ -\frac{1}{2}\sigma^2 \left( a(t_{k-1}) - \psi_1 \right)^2\Delta t + \frac{(a(t_{k-1})-\psi_1)^2}{2\psi_2} \Delta t \\
			& \quad \quad \quad \quad + \frac{1-\gamma}{2}\left( \cdots (\Delta t)^2 + \sigma^2 \left( a(t_{k-1}) - \psi_1 \right)^2  \Delta t\right)  \Big]  \bigg]   - h_{\psi,2}(\psi_1,\psi_2,\theta) \Bigg| \\
			= & \frac{|1-\gamma|}{2}\sigma^2\lambda^2\psi_2^2 \Delta t .
		\end{aligned}  \]
		
		Moreover, we may find an upper bound for the variance of these terms. Note that across each time interval $[t_{k-1},t_k]$, the pair $(a(t_{k-1}), \delta(t_k))$ are identical and independently distributed, therefore,
		\[ \begin{aligned}
			& \operatorname{Var}\left[  \frac{1}{T}\sum_{k=1}^K \frac{a(t_{k-1}) - \psi_1}{2\psi_2}\delta(t_k) \right]  = \frac{K}{T^2} \operatorname{Var}\left[ \frac{a(t_{k-1}) - \psi_1}{2\psi_2}\delta(t_k)  \right] \\
			\leq & \frac{1}{4 T \psi_2^2 \Delta t} \E\left[ \left( a(t_{k-1}) - \psi_1 \right)^2  \delta(t_k)^2  \right] \leq \frac{\lambda \sqrt{3}}{4 T \psi_2 \Delta t} \left( \E\left[ \delta(t_k) ^4 \right]  \right)^{1/2} .
		\end{aligned} \]
		Similarly,
		\[ \begin{aligned}
			& \operatorname{Var}\left[  \frac{1}{T}\sum_{k=1}^K  \left( -\frac{\left( a(t_{k-1}) - \psi_1 \right)^2}{2} + \frac{\lambda}{2}\psi_2 \right) \delta(t_k) \right]  = \frac{K}{T^2} \operatorname{Var}\left[ \left( -\frac{\left( a(t_{k-1}) - \psi_1 \right)^2}{2} + \frac{\lambda}{2}\psi_2 \right)\delta(t_k)  \right] \\
			\leq & \frac{1}{4T\Delta t} \E\left[ \left( \left( a(t_{k-1}) - \psi_1 \right)^2 - \lambda\psi_2 \right)^2 \delta(t_k)^2 \right] \leq \frac{\lambda^2 \psi_2^2 \sqrt{15}}{2 T \Delta t}  \left( \E\left[ \delta(t_k) ^4 \right]  \right)^{1/2} .
		\end{aligned} \]
		
		By the definition of $\delta(t_k)$, we may have
		\[ \begin{aligned}
			& \E\left[ \delta(t_k)^4  \right] \\
			\leq & C \E\Bigg[\left(1 + |\theta|^4 + a(t_{k-1})^8  + (\lambda\log\lambda)^4 + (\psi_2^{-1} + \log \psi_2)^4 + \psi_1^8  \right)  (\Delta t)^2 + \sigma^4 a(t_{k-1})^4( W(t_k) - W(t_{k-1}))^4  \\
			&\quad \quad  +  \left(1 + |\theta|^8 + a(t_{k-1})^{16}  \right) (\Delta t)^8 + \sigma^8 a(t_{k-1})^8 ( W(t_k) - W(t_{k-1}))^8 \\
			& \quad \quad + (1 + |\theta|^4 + a(t_{k-1})^8) a(t_{k-1})^4 (\Delta t)^4  ( W(t_k) - W(t_{k-1}))^4  \Bigg] \\
			\leq & C \E\Bigg[\left(1 + |\theta|^4 + a(t_{k-1})^8 + (\lambda\log\lambda)^4 + (\psi_2^{-1} + \log \psi_2)^4 + \psi_1^8  \right)  (\Delta t)^2  +  \left(1 + |\theta|^8 + a(t_{k-1})^{16}  \right) (\Delta t)^8\\
			& \quad \quad   + (1 + |\theta|^4 + a(t_{k-1})^8) a(t_{k-1})^4 (\Delta t)^6  \Bigg] \\
			\leq &  C\left( 1 + |\theta|^4 + (\lambda\log\lambda)^4 + (\psi_2^{-1} + \log \psi_2)^4 + \psi_1^8 + \lambda^4\psi_2^4 \right)(\Delta t)^2 \\
			& + C\left( 1 +|\theta|^8 +  \psi_1^{16} + \lambda^8\psi_2^{8} \right) (\Delta t)^8 + C\left( 1 +|\theta|^4 +  \psi_1^{8} + \lambda^4\psi_2^{4} \right) \lambda^2\psi_2^2 (\Delta t)^6
		\end{aligned} \]
		where $C$ is a constant that only depends on model primitives $\mu,r,\sigma,\gamma$.
		
		From the above calculation, the recursive relation of the sequence in the learning procedure can still be written as the same as \eqref{eq:projection recusive}, except that the ``error" term $\xi_{1,i+1}$ and $\xi_{2,i+1}$ satisfy
		\[\begin{aligned}
			\beta_{1,i} := & \left| \E\left[\xi_{1,i+1}| \theta_i,\bm\psi_i  \right]  \right| \leq C\lambda \left( 1 + |\psi_{1,i}|^3 + 3\lambda \psi_{2,i} + 3\lambda |\psi_{1,i}|\psi_{2,i}  \right) \Delta t_i \\
			\beta_{2,i} := & \left| \E\left[\xi_{2,i+1}| \theta_i,\bm\psi_i  \right]  \right| \leq C\lambda^2 \psi_{2,i}^2 \Delta t_i \\
			\zeta_{1,i} := & \operatorname{Var}\left[ \xi_{1,i+1}| \theta_i,\bm\psi_i  \right]\leq C\lambda \psi_{2,i}^{-1} \sqrt{d_{1,i} + d_{2,i}(\Delta t_i)^4 + d_{3,i} (\Delta t_i)^6} \\
			\zeta_{2,i} := & \operatorname{Var}\left[ \xi_{2,i+1}| \theta_i,\bm\psi_i  \right]\leq  C \lambda^2 \psi_{2,i}^2\sqrt{d_{1,i} + d_{2,i}(\Delta t_i)^4 + d_{3,i} (\Delta t_i)^6} ,
		\end{aligned}  \]
		where
		\[ \begin{aligned}
			& d_{1,i} = 1+ (\lambda\log\lambda)^4 + b_i^4 + c_i^8 + \lambda^4 c_i^4 \\
			& d_{2,i} = 1+c_i^{16} +\lambda^8 c_i^8 \\
			& d_{3,i} = \lambda c_i^2(1+c_i^8 + \lambda c_i^4) .
		\end{aligned} \]
		
		Following the parallel calculation in the Stage 2, for $i\geq I$ that is sufficiently large, we have
		\[ \begin{aligned}
			& \E\left[ \left( \psi_{1,i+1} - \psi_1^*\right)^2  | \theta_i,\bm\psi_i\right] \\
			\leq &  \E\left[ \left(\psi_{1,i} + a_{\psi,i} \left( h_{\psi,1}(\psi_{1,i},\psi_{2,i},\theta_i) + \xi_{1,i+1} \right) - \psi_1^* \right)^2  | \theta_i,\bm\psi_i\right] \\
			\leq & (\psi_{1,i} - \psi_1^*)^2 + 2a_{\psi,i}(\psi_{1,i} - \psi_1^*)\left[ h_{\psi,1}(\psi_{1,i},\psi_{2,i},\theta_i) + \beta_{1,i}\right]  + a_{\psi,i}^2  h_{\psi,1}(\psi_{1,i},\psi_{2,i},\theta_i)^2 + a_{\psi,i}^2 (\beta_{1,i}^2 + \zeta_{1,i}) \\
			& + 2a_{\psi,i}^2 h_{\psi,1}(\psi_{1,i},\psi_{2,i},\theta_i) \beta_{1,i} \\
			= & (1 - a_{\psi,i}\lambda\gamma\sigma^2 ) (\psi_{1,i} - \psi_1^*)^2 + \frac{1}{4}a_{\psi,i}^2\lambda^2\gamma^2\sigma^4(\psi_{1,i} - \psi_1^*)^2 +  a_{\psi,i}^2(\beta_{1,i}^2 + \zeta_{1,i}) \\
			& + a_{\psi,i} (1 - \frac{\lambda\gamma\sigma^2}{2} a_{\psi,i}) \left[ (\psi_{1,i} - \psi_1^*)^2 + 1 \right] \beta_{1,i} \\
			\leq & (1 - \frac{1}{2}a_{\psi,i}\lambda\gamma\sigma^2 ) (1 - \frac{1}{2}a_{\psi,i}\lambda\gamma\sigma^2 + a_{\psi,i} \beta_{1,i}) (\psi_{1,i} - \psi_1^*)^2 +  a_{\psi,i} (1 - \frac{\lambda\gamma\sigma^2}{2} a_{\psi,i})  \beta_{1,i} +  a_{\psi,i}^2(\beta_{1,i}^2 + \zeta_{1,i}) .
		\end{aligned}\]
		
		Without loss of generality, we may take $I$ large enough such that $1 - \frac{1}{2}a_{\psi,i}\lambda\sigma^2 + a_{\psi,i}\beta_{1,i} \in (0,1)$, and $1 - \frac{1}{2}a_{\psi,i}\lambda\sigma^2\in (0,1)$. Moreover, given our condition, $a_{\psi,n} \sim \frac{1}{n}$, $b_n,c_n \sim \log n$, and $\Delta t_i\sim \frac{1}{n}$, we can also ensure $d_{2,i}(\Delta t_i)^4 \leq d_{1,i} $, $d_{3,i}(\Delta t_i)^6\leq d_{1,i}$ for all $i\geq I$. Then, we obtain
		\[ \begin{aligned}
			& \E\left[ \left( \psi_{1,i+1} - \psi_1^*\right)^2  | \theta_i,\bm\psi_i\right] \\
			\leq & (1 - \frac{1}{2}a_{\psi,i}\lambda\sigma^2) (\psi_{1,i} - \psi_1^*)^2 + a_{\psi,i}\beta_{1,i} + a_{\psi,i}^2(\beta_{1,i}^2 + \zeta_{1,i}) \\
			\leq & (1 - \frac{1}{2}a_{\psi,i}\lambda\sigma^2) (\psi_{1,i} - \psi_1^*)^2 + a_{\psi,i}C\lambda \left(1 +  c_i^3 + 3\lambda c_i^2 \right)\Delta t_i \\
			& + a_{\psi,i}^2 C\left[ \lambda^2 \left(1 + c_i^3 + 3\lambda c_i^2 \right)^2 (\Delta t_i)^2 + \lambda b_i \sqrt{d_{1,i}} \right] \\
			\leq & (1 - \frac{1}{2}a_{\psi,i}\lambda\sigma^2) (\psi_{1,i} - \psi_1^*)^2 + a_{\psi,i}^2 C \lambda \left[ (1+c_i^3 + 3\lambda c_i^2)\frac{\Delta t_i}{a_{\psi,i}} + \lambda \left(1 + c_i^3 + 3\lambda c_i^2 \right)^2 (\Delta t_i)^2 + b_i \sqrt{d_{1,i}} \right] \\
			\leq &  (1 - \frac{1}{2}a_{\psi,i}\lambda\sigma^2) (\psi_{1,i} - \psi_1^*)^2 + a_{\psi,i}^2 C \lambda (\log i)^5 .
		\end{aligned} \]
		
		By the similar argument in the end of Stage 2 and Lemma \ref{lemma:recursive mse}, our desired conclusion still holds.

	\subsection{Proof of Theorem \ref{thm:regret merton}}
	From the definition of ERWL, we can solve \[\Delta(\psi_1,\lambda \psi_2) = 1 - \exp\left\{ - \left[ J^*(x) -  J(x,\psi_1,\lambda \psi_2) \right] \right\} \leq J^*(x) -  J(x,\psi_1,\lambda \psi_2) , \]
	where the inequality is due to the fact that $e^{-x} \geq 1 - x$. Moreover, from the expression \eqref{eq:risk sensitive objective merton explicit}, we have
	\[J^*(x) -  J(x,\psi_1,\lambda \psi_2) = \frac{\gamma \sigma^2 T}{2}\left[ (\psi_1 - \psi_1^*)^2 + \lambda\psi_2\right] .  \]
	\subsubsection{Executing deterministic policies}
	In this case, our choice of tuning sequence $\{a_{\psi,i}\}_{i\geq 0}$, $\{b_i\}_{i\geq 0}$, $\{c_i\}_{i\geq 0}$, and $\lambda_i = \lambda$ is identical to that in Theorem \ref{thm:convergence of merton}. Hence, we have 
	\[ \begin{aligned}
		\E\left[ \sum_{i=1}^N  \Delta(\psi_{1,i},0) \right] \leq & \E\left[ \sum_{i=1}^N  \left(  J^*(x) -  J(x,\psi_{1,i},0) \right)\right] \\
		= & \frac{\gamma \sigma^2 T}{2}\sum_{i=1}^N \E[(\psi_{1,i} - \psi_1^*)^2] \leq  \frac{C_1 \gamma \sigma^2 T}{2}\sum_{i=1}^N \frac{(\log i)^3}{i} = O\left( (\log N)^4 \right) .
	\end{aligned} \]
	\subsubsection{Executing stochastic policies}
	Suppose in the $i$-th episode, we use $\lambda_i$ in \eqref{eq:parameterization form}. The derivation about the recursive relation in the proof of Theorem \ref{thm:convergence of merton} is still applicable by replacing constant $\lambda$ by $\lambda_i$. Hence, with the choice of tuning sequences $a_{\psi,n} \sim \frac{1}{\sqrt{n}}$, $b_n = O(1)$, $c_n \sim \log n$, $\lambda_n \sim \frac{1}{\sqrt{n}}$, \eqref{eq:recursive error psi 1} becomes
	\[\begin{aligned}
		\epsilon_{1,i+1} \leq & (1 - \frac{1}{2}a_{\psi,i}\lambda_i\gamma\sigma^2 ) \epsilon_{1,i} + a_{\psi,i}^2\lambda_i C \left( 1 + \lambda_i^2 + \lambda_i^3 c_i + b_i(1 + c_i^2 + \lambda_i^2(\log\lambda_i)^2 + \lambda_i^2 + \lambda_i^2 (\log c_i)^2)  \right) \\
		= & (1 - \frac{\gamma\sigma^2}{2}a_{\psi,i}\lambda_i ) \epsilon_{1,i} + a_{\psi,i}^2 \lambda_i C (1 + c_i^2) .
	\end{aligned}  \]
	The convergence of $\epsilon_{1,i}$ is guaranteed since $\sum  a_{\psi,i}\lambda_i = \sum \frac{1}{i} = \infty$, and $\sum a_{\psi,i}^2 \lambda_i c_i^2 = \sum \frac{\log i}{i^{3/2} } < \infty $. Moreover, by Lemma \ref{lemma:recursive mse}, we obtain $\epsilon_{1,i+1}  = O\left( \frac{(\log i)^2}{\sqrt{i}}\right)$.
	
	Hence, 
	\[ \begin{aligned}
		\E\left[ \sum_{i=1}^N  \Delta(\psi_{1,i},\lambda_i \psi_{2,i}) \right] \leq & \E\left[ \sum_{i=1}^N  \left(  J^*(x) -  J(x,\psi_{1,i},\lambda_i \psi_{2,i}) \right)\right] \\
		\leq & \frac{\gamma \sigma^2 T}{2}\sum_{i=1}^N \E[(\psi_{1,i} - \psi_1^*)^2] + \frac{\gamma \sigma^2 T}{2}\sum_{i=1}^N \lambda_i c_i \\
		\leq & \frac{C \gamma \sigma^2 T}{2}\sum_{i=1}^N \left( \frac{(\log i)^2 }{\sqrt{i} } + \frac{\log i}{\sqrt{i} } \right)  = O(\sqrt{N} (\log N)^2).
	\end{aligned} \]
	
	\subsubsection{Impact of Discretization}
	From the Stage 3 in the proof of Theorem \ref{thm:convergence of merton}, we can see the same argument also applies in our previous analysis and will only affect the logarithmic factor. Therefore, the desired conclusion still holds.

\end{document}